\DeclareMathOperator*{\argmax}{arg\,max}
\newtheorem{theorem}{Theorem}
\newtheorem{assumption}{Assumption}
\newtheorem{proposition}{Proposition}
\newtheorem{lemma}{Lemma}
\newtheorem{definition}{Definition}
\title{On the Optimality of Tracking Fisher Information \\in Adaptive Testing with Stochastic Binary Responses}
\author[1]{Sanghwa Kim}
\author[2]{Dohyun Ahn}
\author[3]{Seungki Min}
\affil[1]{\small Kim Jaechul Graduate School of AI, KAIST, Seoul, Republic of Korea}
\affil[2]{Department of Systems Engineering and Engineering Management, The Chinese University of Hong Kong, Shatin, N.T., Hong Kong}
\affil[3]{Business School, Seoul National University, Seoul, Republic of Korea}
\date{}
\begin{document}
\maketitle

\begin{abstract}
We study the problem of estimating a continuous ability parameter from sequential binary responses by actively asking questions with varying difficulties, a setting that arises naturally in adaptive testing and online preference learning.
Our goal is to certify that the estimate lies within a desired margin of error, using as few queries as possible.
We propose a simple algorithm that adaptively selects questions to maximize Fisher information and updates the estimate using a method-of-moments approach, paired with a novel test statistic to decide when the estimate is accurate enough.
We prove that this Fisher-tracking strategy achieves optimal performance in both fixed-confidence and fixed-budget regimes, which are commonly invested in the best-arm identification literature.
Our analysis overcomes a key technical challenge in the fixed-budget setting---handling the dependence between the evolving estimate and the query distribution---by exploiting a structural symmetry in the model and combining large deviation tools with Ville's inequality.
Our results provide rigorous theoretical support for simple and efficient adaptive testing procedures.
\end{abstract}

\section{Introduction}\label{section:intro}

Adaptive testing and sequential estimation problems have recently gained substantial attention due to their foundational role in modern artificial intelligence and interactive systems.
Prominent applications include online preference learning, where systems dynamically adapt to user feedback to refine personalized recommendations, and reinforcement learning from human feedback (RLHF), which aims to align AI agents with human values by adaptively querying users.
In these contexts, the main focus is to efficiently extract maximal information from human responses, which are inherently stochastic and limited in quantity.

Among various types of such problems, this work particularly considers a fundamental yet illustrative case involving stochastic binary responses.
Here, a decision-maker sequentially selects questions of varying difficulty from a continuous pool to pose to a candidate and aims to efficiently estimate the candidate's ability (represented by an unknown continuous parameter) by utilizing the binary feedback (e.g., correct/incorrect) collected, which depends probabilistically on the candidate's ability and the question's difficulty.
This setup is arguably the simplest scenario that captures the essence of continuous parameter estimation under uncertainty, making it an ideal benchmark for developing fundamental theoretical insights and practical algorithms.

Variants of this fundamental adaptive estimation problem have been studied in several communities.
In psychometrics, the literature of adaptive testing---particularly, computerized adaptive testing (CAT)---has traditionally relied on algorithms that select questions based on Fisher information to efficiently assess candidate ability levels based on item response theory.
Meanwhile, in the realm of online learning, related problems have been extensively investigated under the frameworks of best-arm identification and thresholding bandits, which focus on developing algorithms that achieve optimal sample complexity in discrete decision spaces.
Specifically, seminal contributions within this area have established information-theoretic lower bounds and corresponding optimal adaptive sampling algorithms under fixed-confidence settings.
Additionally, the broader operations research literature on simulation optimization and ordinal optimization has contributed techniques that exploit large deviation principles (LDPs) to construct deterministic approximations of probabilistic objectives for optimal sample allocation in fixed-budget settings.

In this paper, we propose and rigorously analyze an adaptive testing algorithm that integrates a method-of-moments estimator for continuously updating the candidate's ability estimate and selects questions that track Fisher-optimal informativeness.
We couple this adaptive sampling with a tailored stopping rule based on a novel test statistic.
Our theoretical analysis employs the theory of large deviations and information-theoretic arguments to precisely characterize the asymptotic performance of the algorithm, explicitly quantifying the decay rate of the failure probability and the growth rate of expected sample complexity.

In particular, we make the following key contributions:
\begin{enumerate}
    \item \emph{Asymptotic optimality of a simple, intuitive algorithm}:
    We show that a conceptually simple adaptive algorithm---selecting queries that maximize Fisher information---achieves asymptotic optimality in both \textit{fixed-budget} and \textit{fixed-confidence} settings.
    This Fisher-optimal query strategy, long employed as a heuristic in psychometric testing and active learning, had previously lacked theoretical justification in the context of continuous adaptive estimation.
    Our analysis establishes that this query rule, when paired with a method-of-moments estimator, achieves information-theoretic lower bounds on error probability decay and sample complexity.
    To our knowledge, this is the first result that theoretically demonstrates the optimality of such a strategy in continuous query settings.

    \item \emph{Design of a novel test statistic for continuous parameter estimation}:
    We introduce a novel test statistic tailored to certify that a continuous parameter lies within a fixed margin of accuracy.
    Our statistic incorporates large deviations bounds into the framework of the generalized likelihood-ratio test to evaluate predictive discrepancies between the current estimate and alternative hypotheses while compensating for statistical uncertainty using an explicit deviation correction function derived from the large deviations bounds.
    The proposed statistic consequently yields a stopping rule that is both theoretically grounded and computationally tractable for continuous parameter estimation.
    Notably, this construction contrasts with the test statistics used in best-arm identification, which inherently assume and leverage a discrete action space, and direct extensions of these statistics to a continuous space prove unreliable.
    
    \item \emph{Countering estimate-query endogeneity}:
    A central technical challenge in adaptive estimation under fixed-budget settings lies in the estimate-query endogeneity---a phenomenon where, at each stage, the query distribution depends on the parameter estimate, and the empirical concentration of the estimate depends on data gathered via the query.
    As noted by \cite{wang2024best}, ``Due to the intricate dependence between the sampling and the empirical reward processes, deriving such an LDP is very challenging.''
    To circumvent this issue, we (1) leverage Ville's inequality within our large deviations analysis to bound the time-uniform deviation of the estimator with high probability and (2) design a continuous cyclical relationship between the estimate and query and establish a similar time-uniform deviation bound for the query.
    We expect this methodological advance to stimulate further developments in large deviations analysis across broader problem instances.
\end{enumerate}

\subsection{Literature review}

\paragraph{Computerized adaptive testing (CAT).}
As alluded to earlier, our problem can be considered as a version of the CAT problem, in which query selection is dynamically adjusted based on the test history to estimate their ability accurately and efficiently.
There have been several notable developments in this literature.
Specifically, \cite{bartroff2008modern} explore a testing framework that aims to categorize examinees as either `masters' or `non-masters' based on whether their ability exceeds a predetermined threshold, rather than estimating their ability as in our problem.
\cite{bassamboo2023learning} generalize this framework for classifying candidates' ability levels into predetermined intervals, design a related stopping rule based on hypothesis testing and generalized likelihood ratio tests, and prove its asymptotic optimality in terms of sample complexity.
While their methodology could potentially be applied to our problem by specifying sufficiently small interval lengths, several key differences remain in algorithm design, performance guarantees, and model coverage; see Appendix~\ref{appendix:bassamboo} for a more detailed comparison between their work and ours.

Under two-parameter and three-parameter logistic models, \cite{chang2009nonlinear} develop an adaptive testing algorithm that tracks queries maximizing Fisher information and provide the consistency and asymptotic normality of the associated maximum-likelihood estimator.
\cite{yang2024note} propose to use a widely known stochastic approximation method, called the Robbins-Monro algorithm, and numerically demonstrate its superiority over traditional maximum information methods under Rasch and two-parameter logistic models when it comes to reducing mean squared error.
By contrast, rather than being limited to specific response models, our algorithm and analysis apply to a fairly general class of response models that encompasses widely used examples such as logistic and algebraic models. More importantly, we provide theoretical performance guarantees of the proposed algorithm in this general setup.

\paragraph{Active learning.}
This work is also in line with active learning frameworks, where learners strategically query an oracle to maximize the amount of information obtained from queries within a fixed budget or to minimize the effort to achieve a target level of information~\citep{settles2009active, houlsby2011bayesian}.
The active learning process typically begins by carefully selecting queries for labeling. Upon receiving the query results, the newly labeled data point is added to the labeled set, and the algorithm learns from this new information and uses it to inform subsequent query selection.
This is useful when labeled data is scarce, difficult to obtain, or expensive to annotate.
Our study particularly shares a common thread with \cite{sourati2017asymptotic} as it provides asymptotic analyses of Fisher-information-based active learning methods. However,  their work, consistent with the classical active learning literature, focuses on parametric classification problems, which differ from our estimation problem. In addition, Fisher information is used differently in query rule design: their rule seeks to minimize the ratio of Fisher information for the true and proposal distributions, whereas ours aims to maximize the Fisher information corresponding to our response model. Lastly, their paper does not analyze performance guarantees in terms of the convergence rate and sample complexity associated with the error probability.

\paragraph{Stochastic bisection search.}
Our setting, which focuses on identifying an unknown parameter on the real line from noisy binary responses, is closely related to the stochastic bisection search problem. In this context, query responses are inherently noisy, meaning that observed outcomes may deviate from the true underlying state. However, the standard literature on stochastic bisection search presents two key differences from our work in terms of modeling and handling this noise. In particular, \cite{frazier2019probabilistic} and \cite{waeber2013bisection} model the noise such that querying a point $X_t$ is subject to a fixed probability of error, and \cite{jedynak2012twenty} consider a situation where querying a set results in an incorrect response with a probability contingent on the measure of the set. To address this randomness, these works commonly introduce and update a Bayesian belief over the search interval after each query. Their algorithms' optimality is then established with respect to the entropy of the resulting posterior distribution.

\paragraph{Best-arm identification.}

Despite the difference in objectives, our analysis utilizes technical tools akin to those used for fixed-budget and fixed-confidence best-arm identification problems---also known as ordinal optimization or ranking-and-selection---in the online learning literature. In contrast to our goal, these problems aim to sequentially draw samples from a finite collection of stochastic alternatives to identify the best alternative, where the definition of the ``best'' depends on problem instances and contexts.

In the fixed-budget setting where decision-makers seek to minimize the error probability by judiciously allocating a given sampling budget, \cite{glynn2004large} derive an LDP result for the performance of static sample allocation schemes in best-arm identification.
Inspired by this pioneering work, there have been many follow-up studies to address tractability issues~\citep{shin2018tractable, chen2022BOLD}, feasibility determination~\citep{szechtman2008new}, performance improvement~\citep{zhou:23SIndex}, and model misspecification~\citep{ahn2025feature}.
However, none of these works characterize the asymptotic decay rate of the error probability for dynamic sampling policies, which is widely recognized as challenging due to the interdependence between samples and allocation decisions, analogous to the estimate-query endogeneity in our problem.
Very recently, \cite{wang2024best} establish a connection between the LDP based on historical samples and that derived from empirical allocation in adaptive policies.
Nonetheless, the resulting upper bound of the error probability is not tight enough to analyze the asymptotic performance of these policies.
While limited to our adaptive testing setup, we address this issue by characterizing a tight upper bound of the error probability that asymptotically achieves the information-theoretic lower bound.

The fixed-confidence setting focuses on identifying the best alternative with a specified confidence level while taking as few samples as possible.
In this context, well-designed stopping rules are essential for achieving optimal sample complexity.
In the existing literature, most analyses on the performance of stopping rules either leverage the discrete and finite nature of decision spaces \citep[see, e.g.,][]{garivier2016optimal, kaufmann2016complexity} or exploit specific structural modeling assumptions, such as linear response models \citep{jedra2020optimal}.
In contrast, as previously highlighted, we demonstrate the asymptotic optimality of our stopping rule designed for a continuous decision space and a broad class of response models.

\subsection{Paper organization}

In Section~\ref{section:problem}, we formally introduce the problem setup, including the adaptive testing framework and relevant modeling assumptions.
Section~\ref{section:algorithm} presents our proposed algorithm, referred to as \texttt{FIT-Q}, with a detailed description of its query, stopping, and prediction rules in both fixed-budget and fixed-confidence settings.
Section~\ref{section:analysis} analyzes theoretical performance guarantees of the \texttt{FIT-Q} algorithm; in particular, we demonstrate the algorithm's asymptotic optimality.
In Section~\ref{section:experiment}, we validate our theoretical findings through numerical experiments.
Finally, Section~\ref{section:conclusion} concludes the paper with a summary of our findings and a discussion on potential directions for future research.

\subsection{Notation}

We denote by $\mathbb{P}_{\theta_*}^{\pi}$ the probability measure induced by an adaptive testing algorithm $\pi$ operating under the data-generating distribution parameterized by $\theta_*$.
Correspondingly, $\mathbb{E}_{\theta_*}^{\pi}$ represents the expectation with respect to the probability measure $\mathbb{P}_{\theta_*}^{\pi}$.
The projection of a point $x$ onto a set $\mathcal{X}$ is defined as $\text{proj}_{\mathcal{X}} (x)$.
The Kullback–Leibler (KL) divergence between two Bernoulli distributions with parameters $p$ and $q$ is denoted by $d(p \mid q)\coloneqq p \log ({p}/{q}) + (1-p) \log \big(({1-p})/({1-q})\big)$.
We express $g(\epsilon) = o \big(f(\epsilon) \big)$ when ${g(\epsilon)}/{f(\epsilon)}\to 0$ as $\epsilon \to 0$, and we write $g(\epsilon) = O(f(\epsilon))$ when there exist $C>0$ and $\epsilon_0>0$ such that for all $\epsilon \in (0, \epsilon_0)$, $|g(\epsilon)| \leq C | f(\epsilon)|$.

\section{Problem Setup}\label{section:problem}

We consider a decision-maker who aims to identify a candidate's ability by asking a series of adaptively chosen questions.
The candidate's ability, denoted by $\theta_* \in \mathbb{R}$, is a real number that is unknown to the decision-maker.
In each time $t = 1, 2, \cdots$, the decision-maker asks the candidate a question parametrized by $X_t \in \mathcal{X}$, where $X_t$ represents the difficulty of the question at time $t$, and $\mathcal{X}\subset\mathbb{R}$ denotes a fixed set of all possible difficulty levels, assumed to be a closed interval $[x_L, x_R]$ for some real numbers $x_L < x_R$.
The decision-maker then immediately observes the candidate's response $Y_t \in \{0,1\}$, where $Y_t=1$ indicates a correct answer and $Y_t=0$ an incorrect one.
We assume that the decision-maker postulates a static probabilistic model governing the candidate's responses, which is leveraged to obtain the estimate $\hat{\theta}_t$ of the candidate's ability $\theta_*$.
More details on the model are described below.

\paragraph{Random response model.}
Considering the stochastic nature of candidate responses and drawing on principles from item response theory \citep{bock2021item}, we assume that the probability of a correct response depends on the gap between the candidate's ability and the difficulty of the question.
Specifically, the candidate's responses, $\{Y_t\}_{t=1}^\infty$, are mutually independent conditional on the questions, $\{X_t\}_{t=1}^\infty$, and the conditional probability of a correct response at each time $t$ is given by
\begin{equation}\label{eq:response model}
    \mathbb{P}_{\theta_*}\big( Y_t = 1 | X_t = x \big) = f( \theta_* - x ),    
\end{equation}
for some known function $f:\mathbb{R} \rightarrow [0,1]$, which we call the \emph{response function}.
In the following assumption, we impose mild regularity conditions on the response functions.

\begin{assumption}\label{assumption:model}
    The response function $f: \mathbb{R} \rightarrow [0,1]$  satisfies the following conditions:
    \begin{enumerate}[label=(\alph*)]
        \item\label{cond:increasing} $f$ is strictly increasing, i.e. $f'(z) >0$ for all $z\in\mathbb{R}$; and
        \item\label{cond:C2} $f$ is thrice continuously differentiable on $\mathbb{R}$;
        \item\label{cond:h} $\lim_{z \to \infty} h(z) = \lim_{z \to - \infty} h(z) = 0$, where $h(z) \coloneqq {f'(z)^2}/\{f(z) \big(1 - f(z) \big)\}$.
    \end{enumerate}
\end{assumption}
Condition~\ref{cond:C2} ensures the smoothness of the response function $f$. Condition~\ref{cond:increasing} is straightforward because a candidate is more likely to provide a correct response when the candidate possesses a higher ability or when the question is easier.
Condition~\ref{cond:h} ensures the existence of an optimal query corresponding to the candidate, which is defined in Section \ref{section:algorithm}.
Indeed, condition~\ref{cond:h} is generally satisfied for any analytically defined function that meets conditions~\ref{cond:C2} and~\ref{cond:increasing}:
see, e.g., the following canonical cases
\begin{itemize}
    \item Logistic model: $f(z) = 1/(1+e^{-z})$;
    \item Algebraic model: $f(z) = 0.5/(1+|z|^{k})^{1/k} + 0.5$.
\end{itemize}
Nonetheless, we include condition~\ref{cond:h} for completeness, acknowledging that violations of this condition may arise in extreme and practically irrelevant scenarios.

\paragraph{Decision process.}
The decision-maker makes sequential decisions on which question to ask and when to stop in an adaptive manner.
To formally describe this adaptiveness, let $H_t \coloneqq \left\{ (X_s, Y_s) \right\}_{s=1}^{t}$ indicate the interaction history realized up to time $t$, and let $\mathbb{F} \coloneqq \left\{ \mathcal{F}_t \coloneqq \sigma(H_t) \right\}_{t \in \mathbb{N}}$ represent the filtration generated by this history.
Then, the platform's decision-making algorithm can be characterized by the following three components:
\begin{enumerate}
    \item \textbf{Query rule} determines a question $X_t \in \mathcal{X}$ at time $t$ based on previously revealed information $H_{t-1}$. This enables the stochastic process $\{ X_t \}_{t \in \mathbb{N}}$ to be adapted with respect to the filtration $\mathbb{F}$.
    
    \item \textbf{Stopping rule} determines whether to stop or continue asking questions.
    Let $\tau$ denote the number of questions asked until stopping, then $\tau$ is a stopping time adapted to the filtration $\mathbb{F}$.
    In the fixed-budget setting where the decision-maker is allowed to ask exactly $T$ questions, no strategic stopping rule is needed, and thus, we simply have $\tau = T$.
    In the fixed-confidence setting, the decision-maker may stop at any time upon determining that sufficient responses have been collected. In this case, we require the test process to terminate almost surely, i.e., $\tau < \infty$ with probability 1. 
    
    \item \textbf{Prediction rule} defines the final estimate $\hat{\theta}_\tau$ of the ability $\theta_*$ at the stopping time $\tau$ such that the random variable $\hat{\theta}_\tau$ is measurable with respect to $\mathcal{F}_\tau$.
\end{enumerate}

We denote such a decision-making algorithm by $\pi$ and use the notations $\mathbb{P}_{\theta_*}^\pi( \cdot )$ and $\mathbb{E}_{\theta_*}^\pi[\cdot]$ to indicate the probability measure and the expectation operator, respectively, associated with the candidate's level $\theta_*$ and the decision-making algorithm $\pi$. In what follows, we provide a formal description of fixed-budget and fixed-confidence settings introduced in Section~\ref{section:intro}.

\paragraph{Fixed-budget (FB) setting.}
In the FB setting, the decision-maker  seeks to minimize the probability of making erroneous prediction after asking a fixed number $T \in \mathbb{N}$ of questions based on an algorithm $\pi$, where $T$ is referred to as the budget.
We call this probability the \emph{failure probability} of the algorithm $\pi$, and it is formally defined as
\begin{equation}
	p^\text{FB}(\pi_{\epsilon, T}; \theta_*, \epsilon, T) \coloneqq \mathbb{P}_{\theta_*}^{\pi_{\epsilon, T}} \big( | \hat{\theta}_T - \theta_* | > \epsilon \big),
\end{equation}
where $\epsilon \in \mathbb{R}$ is an error margin allowed for prediction.
Since this failure probability does not admit a closed-form expression, it is challenging to construct and analyze an algorithm that minimizes $p^\text{FB}(\pi_{\epsilon, T}; \theta_*, \epsilon, T)$.
Instead, as a well-established alternative in the literature, this paper focuses on characterizing and maximizing the asymptotic decay rate of the failure probability, given by
\begin{equation}
    -\lim_{T \to \infty} \frac{1}{T} \log p^\text{FB}(\pi_{\epsilon,T}; \theta_*, \epsilon, T),
\end{equation}
which we will utilize as a performance metric to analyze the optimality of our suggested algorithm.
Here, we use the notation $\pi_{\epsilon, T}$ to clarify that the algorithm may depend on the error margin $\epsilon$ and the budget $T$.
The subscripts $\epsilon$ and $T$ in $\pi_{\epsilon, T}$ may be omitted if there is no confusion.

\paragraph{Fixed-confidence (FC) setting.}
The decision-maker in the FC setting aims to use the fewest possible questions while guaranteeing that the failure probability does not exceed a target confidence level $\delta \in (0,1)$.
More specifically, we redefine the failure probability for the FC setting as
\begin{equation}
    p^\text{FC}(\pi_{\epsilon, \delta}; \theta_*, \epsilon) \coloneqq \mathbb{P}_{\theta_*}^{\pi_{\epsilon, \delta}} \big( | \hat{\theta}_\tau - \theta_* | > \epsilon \big),
\end{equation}
where $\tau$ is the stopping time of the algorithm $\pi$.
The decision-maker's goal is to minimize the expected stopping time, $\mathbb{E}_{\theta_*}^{\pi_{\epsilon,\delta}}[\tau]$, subject to the constraint $p^\text{FC}(\pi_{\epsilon,\delta}; \theta_*, \epsilon) \leq \delta$.
In our analysis, we investigate the following growth rate of the expected stopping time in an asymptotic regime where the target error rate $\delta$ approaches to zero:
\begin{equation}
    \lim_{\delta \rightarrow 0} \frac{\mathbb{E}_{\theta_*}^{\pi_{\epsilon,\delta}} [\tau]}{\log(1/\delta)}.
\end{equation}
In this setting, the algorithm may depend on the error margin $\epsilon$ and the confidence level $\delta$, which is reflected in the notation $\pi_{\epsilon, \delta}$.
When the context is clear, the subscripts $\epsilon$ and $\delta$ are omitted for brevity.

\section{Proposed Algorithm}\label{section:algorithm}

In this section, we introduce two versions of our proposed algorithm (\texttt{FIT-Q}) for FB and FC settings, respectively.
The naming of the algorithm stems from the fact that both versions commonly implement the \textbf{Fisher information tracking query rule} built upon the method of moments estimator (MME).
On top of that, a tailored stopping rule is designed and incorporated specifically for the FC setting.

\subsection{Fisher Information}

Our query rule utilizes Fisher information as a measure of the informativeness of a question.
Let $I(x ; \theta_*)$ be the Fisher information quantifying the amount of information that a single random response to a question $x \in \mathcal{X}$ carries about the candidate's ability $\theta_*$.
Formally, it is defined as the variance of the score:
\begin{equation}
	I(x ; \theta_*) 
        \coloneqq \mathbb{E}_{\theta_*} \left[ \left. \left( \left. \frac{\partial}{\partial \theta} \log p_\theta(Y|x) \right|_{\theta = \theta_*} \right)^2 \right| X=x \right],
\end{equation}
where $p_\theta( \cdot |x)$ is the probability mass function of the response $Y$ when question $x$ is given to the candidate with ability level $\theta$.
In our assumed response model in~\eqref{eq:response model}, we have $p_\theta(y|x) = f(\theta - x)^y \big(1 - f(\theta-x) \big)^{1-y}$ for $y \in \{0,1\}$, and therefore,
\begin{equation} \label{eq:fisher}
    I(x; \theta_*) = \frac{f'(\theta_* - x) ^2}{ f(\theta_* - x) \big( 1 - f(\theta_* - x) \big)}.
\end{equation}

We define the \textbf{hindsight Fisher-optimal query} $x_* \in \mathbb{R}$ as the unconstrained maximizer of Fisher information $I(x; \theta_*)$:
\begin{equation} \label{eq:fisher-query}
    x_* \coloneqq \theta_* - z_*, \quad \text{where }
    z_* \in \argmax_{z \in \mathbb{R}} \frac{f'(z)^2}{ f(z) \big( 1 - f(z) \big)}.
\end{equation}
The query $x_*$ is the optimal question that the decision maker would have kept asking if the candidate's ability $\theta_*$ were known; see Section \ref{subsec:static-query} for a detailed discussion on the significance of this query.
Here, $z_*$ is an auxiliary variable determined solely by the response function $f(\cdot)$, independent of $\theta_*$.
Its existence is guaranteed under Assumption \ref{assumption:model}.
For example, if $f$ is a logistic function, we have $z_*=0$, $x_* = \theta_*$, and $\mathbb{P}(Y_t=1 | X_t = x_*) = f(x_* - \theta_*) = f(z_*) = 0.5$, in which case the hindsight Fisher-optimal query $x_*$ corresponds to the question for which the student answers correctly with a 50\% chance.

\subsection{\texttt{FIT-Q} Algorithm}

We begin by presenting the pseudocode for the two versions of our \texttt{FIT-Q} algorithm for the FB and FC settings:

\begin{algorithm}[ht]
	\caption{\texttt{FIT-Q} Algorithm for Fixed-Budget Setting}\label{algo:FB}
	\begin{algorithmic}[1]
		\State \textbf{Input:} time budget $T$, tolerance $\epsilon$.
            \State Calculate $z_*$ using \eqref{eq:fisher-query}.
		\For{$t=1,\cdots,T$}
			\State Calculate the method of moments estimate $\hat{\theta}_{t-1}$ by solving \eqref{eq:estimate}.
			\State Asks a question $X_t = \text{proj}_\mathcal{X}  (\hat{\theta}_{t-1} - z_*)$ and receives feedback $Y_t$.
		\EndFor
		\State Report $\hat{\theta}_T$.
	\end{algorithmic}
\end{algorithm}

\begin{algorithm}[ht]
	\caption{\texttt{FIT-Q} Algorithm for Fixed-Confidence Setting}\label{algo:FC}
	\begin{algorithmic}[1]
		\State \textbf{Input:} confidence level $\delta$, tolerance $\epsilon$.
            \State Calculate $z_*$ using \eqref{eq:fisher-query}
		\For{$t=1,2,\ldots$}
			\State Calculate the method of moments estimate $\hat{\theta}_{t-1}$ by solving \eqref{eq:estimate}.
			\State Asks a question $X_t = \text{proj}_\mathcal{X}  (\hat{\theta}_{t-1} - z_*)$ and receives feedback $Y_t$.
                \State Calculate the test statistics $Z_t^\epsilon$ using \eqref{eq:test-stats}.
			\If{$Z_t^\epsilon > \log(2/\delta)$}
                    \State Set $\tau \gets t$
				\State \textbf{break}
			\EndIf
		\EndFor
		\State Report $\hat{\theta}_{\tau}$.
	\end{algorithmic}
\end{algorithm}

\paragraph{Method of moments estimator (MME).}
The \texttt{FIT-Q} algorithm adopts the method of moments estimator (MME) to estimate the candidate's ability $\theta_*$.
The MME is the value that makes the predicted mean of the outcomes match their empirical mean.
More formally, given a sequence of realized question-response pairs $\{ (X_s, Y_s)\}_{s=1}^t$, the MME, denoted by $\hat{\theta}_t$, is defined as
\begin{equation}\label{eq:estimate}
	\hat{\theta}_t \coloneqq \inf \left\{ \theta \in \mathbb{R} : \sum_{s=1}^{t} f(\theta - X_s) \geq \sum_{s=1}^{t} Y_s \right\},
\end{equation}
which takes values on an extended real number line.
Under Assumption \ref{assumption:model}, we have $\hat{\theta}_t = -\infty$ if and only if $ \inf_{z \in \mathbb{R}} f(z) \geq \frac{1}{t}\sum_{s=1}^{t} Y_s$ (e.g., the candidate has never answered correctly), and $\hat{\theta}_t = \infty$ if and only if $\sup_{z \in \mathbb{R}} f(z) \leq \frac{1}{t} \sum_{s=1}^{t} Y_s$ (e.g., the candidate has answered all questions correctly).
Excluding these trivial cases, the estimate $\hat{\theta}_t$ is given by the unique solution satisfying
\begin{equation} \label{eq:mme}
    \sum_{s=1}^{t} f(\hat{\theta}_t - X_s) = \sum_{s=1}^{t} Y_s,
\end{equation}
due to the monotonicity of the response function $f$.
The MME can be computed efficiently via the bisection method or Newton's method.

\paragraph{Fisher information tracking query rule.}
We consider a query rule that adaptively tracks the hindsight Fisher-optimal query $x_* = \theta_* - z_*$, by simply replacing the unknown true ability $\theta_*$ with its estimated value $\hat{\theta}_t$.
That is, at each time step $t$, it asks a query $\hat{\theta}_{t-1} - z_*$ if feasible.
In cases where the query is not feasible, the query rule performs a projection to select the feasible question closest to the value $\hat{\theta}_{t-1} - z_*$:\footnote{
    One may consider solving the constrained optimization problem in every single time period, i.e., $X_t = \argmax_{x \in \mathcal{X}} I(x; \hat{\theta}_{t-1})$.
    Nonetheless, we believe that its additional benefit, when compared to the suggested projection scheme in \eqref{eq:projection}, is marginal, whereas its computational cost is significantly higher.
}
\begin{equation}\label{eq:projection}
	X_t = \text{proj}_{\mathcal{X}}(\hat{\theta}_{t-1} - z_*)
        =
	\begin{cases}
		\hat{\theta}_{t-1} - z_* & \text{if} \;\; \hat{\theta}_{t-1} - z_* \in [x_L, x_R], \\
		x_L & \text{if} \;\; \hat{\theta}_{t-1} - z_* < x_L, \\
		x_R & \text{if} \;\; \hat{\theta}_{t-1} - z_* > x_R.
	\end{cases}
\end{equation}

Recall that the hindsight Fisher-information optimal query $x_*$ is the question that maximizes the Fisher information, $I(x; \theta_*)$.
Accordingly, the chosen query $X_t$ is designed to maximize the predicted Fisher information, $I(x; \hat{\theta}_{t-1})$.
Hence, if the estimate is accurate, the algorithm will select the question that yields the most information.
While the concept of tracking Fisher-optimal queries has been continuously adopted across various problems in the literature~\citep{bartroff2008modern,chang2009nonlinear, eggen1999item}, our emphasis is on rigorously justifying this query rule in our setup by providing theoretical optimality guarantees, which we shall see in Section~\ref{section:analysis}.

\paragraph{Stopping rule} (FC setting only).
In the fixed-confidence setting, a carefully designed stopping rule is required in order for the algorithm to determine whether learning has progressed enough.
Our suggested stopping rule relies on a test statistic $Z_t^\epsilon$, defined as
\begin{equation} \label{eq:test-stats}
    Z_t^{\epsilon} \coloneqq \min_{\theta \in \{\hat{\theta}_t - \epsilon, \hat{\theta}_t + \epsilon \}} \sum_{s=1}^t \left[ \lambda_* \left| f(\hat{\theta}_t - X_s) - f(\theta - X_s) \right| - \phi( \lambda_*, f(\theta - X_s) ) \right],
\end{equation}
where
\begin{equation}\label{eq:phi}
    \phi(\lambda, p) \coloneqq \big( e^\lambda - \lambda - 1 \big) p (1-p),
\end{equation}
and
\begin{equation}\label{eq:lambda}
    \lambda_* \coloneqq \argmax_{\lambda \in (0,\overline{\lambda}]} \min_{z \in \{z_*-\epsilon, z_*+\epsilon\} } \left[ \lambda \big| f(z) - f(z_*) \big| - \phi(\lambda, f(z)) \right],
\end{equation}
with $\overline{\lambda} \coloneqq \sup_{\lambda > 0} \{ \lambda: \lambda \geq e^\lambda - \lambda -1 \} \approx 1.2564$.
Given the target confidence level $\delta \in (0,1)$, the \texttt{FIT-Q} algorithm stops when this test statistic exceeds a predefined threshold, $\log(2/\delta)$:
\begin{equation} \label{eq:stopping-criteria}
    \tau = \inf\left\{ t \in \mathbb{N} \middle| Z_t^{\epsilon} \geq \log\left( \frac{2}{\delta} \right) \right\}.
\end{equation}

The test statistic $Z_t^\epsilon$ evaluates the current estimate $\hat{\theta}_t$ by quantifying the evidence against all competing alternatives outside a small neighborhood.
Specifically, the parameter $\theta$ in~\eqref{eq:test-stats} represents a potential alternative to $\hat\theta_t$.
Thus, the adjusted deviation term $\lambda_*| f(\hat{\theta}_t - X_s) - f(\theta - X_s)|$ in~\eqref{eq:test-stats} indicates the difference in fit between the estimate and the alternative for a given observation $X_s$, while the term $\phi(\lambda_*, f(\theta-X_s))$ in~\eqref{eq:test-stats} acts as a penalty based on the scaled variance corresponding to the alternative itself.
Accordingly, the summation in~\eqref{eq:test-stats} over all historical observations describes the cumulative evidence against a specific alternative~$\theta$, measuring how poorly it explains the data compared to $\hat\theta_t$, even after accounting for the associated noise.
Furthermore, it is easy to check that the minimum in~\eqref{eq:test-stats} over the two points $\{\hat\theta_t-\epsilon,\hat\theta_t+\epsilon\}$ is equivalent to the infimum over all alternatives $\theta$ outside the $\epsilon$-neighborhood (i.e., where $|\theta-\hat\theta_t|>\epsilon$). Note that this equivalence significantly reduces computational complexity, eliminating the need to solve an optimization problem involving an infimum operation over a continuous parameter set. Consequently, the test statistic $Z_t^\epsilon$ represents the cumulative evidence against the strongest possible alternative from outside this neighborhood. By showing that this evidence is substantial, we can certify that the true parameter $\theta_*$ lies in the $\epsilon$-neighborhood of the current estimate $\hat\theta_t$. We note that the value $\lambda_*$ in~\eqref{eq:test-stats} governs the trade-off between the aforementioned two terms and is chosen specifically to maximize the asymptotic discriminative power of the test statistic.

In Section \ref{section:analysis}, we show that the \texttt{FIT-Q} algorithm with this stopping rule guarantees that the failure probability does not exceed the target error rate (i.e., $\mathbb{P}_{\theta_*}^{\pi_{\epsilon, \delta}} \big( | \hat{\theta}_\tau - \theta_* | > \epsilon \big) \leq \delta$), and it achieves the optimality in terms of the asymptotic growth rate of the expected stopping time, $\lim_{\delta \rightarrow 0} \mathbb{E}_{\theta_*}^{\pi_{\epsilon,\delta}}[\tau] / \log(1/\delta)$.
We further discuss the general choice of the function $\phi$ and the constant $\lambda$, highlighting the design principles behind our suggestion.

\paragraph{Prediction rule.}
The \texttt{FIT-Q} algorithm simply reports the current MME value at the end of time horizon in the FB setting (i.e., $\hat{\theta}_T$), or at the stopped moment in the FC setting (i.e., $\hat{\theta}_\tau$).

\subsection{Remark 1: Performance of Static Query Rule} \label{subsec:static-query}

We analyze the performance of static and constant query rules and represent the target performance metrics in terms of Fisher information.
Through this analysis, we justify the need of tracking the hindsight Fisher-optimal query.

Let $\pi^\texttt{static}(x)$ denote the algorithm that keeps asking the same question $x \in \mathcal{X}$, i.e., $X_t = x$ for all $t$, and adopts the MME for prediction.
Since the response function $f(\cdot)$ is monotonic and the MME $\hat{\theta}_t$ satisfies the moment condition $\sum_{s=1}^t f(\hat{\theta}_t-x) = \sum_{s=1}^t Y_s$, we have
    \begin{align}
        \mathbb{P}^{\pi^\texttt{static}(x)}_{\theta_*} \Big( \hat{\theta}_t \geq \theta_* + \epsilon\Big) 
        &= \mathbb{P}^{\pi^\texttt{static}(x)}_{\theta_*} \left( \sum_{s=1}^{t}  f(\hat{\theta}_t - x) \geq \sum_{s=1}^{t}f(\theta_* - x + \epsilon) \right)
	\\&= \mathbb{P}^{\pi^\texttt{static}(x)}_{\theta_*} \left( \frac{1}{t}\sum_{s=1}^{t} Y_s \geq f(\theta_* - x + \epsilon)\right).
    \end{align}
Also note that the responses $Y_1,\ldots,Y_t$ are i.i.d. Bernoulli random variables. This leads to the following limit result derived from the classical large deviations theory:\footnote{We refer to Sanov's theorem which gives large deviation results of i.i.d. random variables. When random variables follow the Bernoulli distribution, the rate function is given in the form of KL-divergence.}
\begin{equation}
    \lim_{t\to\infty}\frac1t\log\mathbb{P}^{\pi^\texttt{static}(x)}_{\theta_*} \Big(\hat{\theta}_t \geq \theta_* + \epsilon\Big) 
    = -d\big( f(\theta_* - x + \epsilon) \mid f(\theta_* - x) \big).
\end{equation}
A similar result can be shown for $\mathbb{P}^{\pi^\texttt{static}(x)}_{\theta_*} \Big( \hat{\theta}_T \leq \theta_* - \epsilon\Big)$, and thus, we have the following approximation:
\begin{equation}
    \mathbb{P}^{\pi^\texttt{static}(x)}_{\theta_*} \Big( \big| \hat{\theta}_t - \theta_* \big| \geq \epsilon \Big) 
    \approx \exp\left( - t \min \left\{ d\big( f(\theta_* - x + \epsilon) \mid f(\theta_* - x) \big), d\big( f(\theta_* - x - \epsilon) \mid f(\theta_* - x) \big) \right\} \right).
\end{equation}
On the other hand, the Fisher information arises from the second order approximation of the KL divergence with respect to $\epsilon$:
\begin{equation}
    d\big( f(\theta_* - x + \epsilon) \mid f(\theta_* - x) \big) \approx d\big( f(\theta_* - x - \epsilon) \mid f(\theta_* - x) \big) \approx\frac{I(x; \theta_* )}{2} \epsilon^2.
\end{equation}

\paragraph{Asymptotic decay rate of the failure probability.}
Let us first focus on the fixed-budget setting.
Then, combining the above results, we obtain
\begin{equation}
    \lim_{T \rightarrow \infty} \left\{-\frac{1}{T} \log \mathbb{P}^{\pi^\texttt{static}(x)}_{\theta_*} \Big( \big| \hat{\theta}_T - \theta_* \big| \geq \epsilon\Big)\right\}
    \approx \frac{I(x; \theta_* )}{2} \epsilon^2.
\end{equation}
This implies that the failure probability associated with the algorithm $\pi^\texttt{static}(x)$ decays exponentially, where the asymptotic decay rate is proportional to the Fisher information $I(x; \theta_*)$ of the question $x$.
Consequently, the ideal static query rule is the one asking the hindsight Fisher-optimal query, $x_* = \argmax_{x \in \mathbb{R}} I(x; \theta_*)$.
The \texttt{FIT-Q} algorithm, designed to track this query, is thus naturally justified.
In Section \ref{section:analysis}, we demonstrate that the \texttt{FIT-Q} algorithm indeed achieves the hindsight optimal performance.

\paragraph{Asymptotic growth rate of the required number of questions.}
We now focus on the fixed-confidence setting.
Consider a deterministic stopping rule\footnote{
    More formally, under the stopping rule $\tau = \lceil \left( \max\{ d(f(\theta_*-x + \epsilon)||f(\theta_* -x)), d(f(\theta_* - x - \epsilon)||f(\theta_* - x)) \} \right)^{-1} \log(2/\delta)\rceil$, it can be shown that the confidence requirement is met by the Chernoff-Hoeffding theorem.
} such that
\begin{equation}
    \tau \approx \frac{ 2\log(1/\delta) }{ I(x;\theta_*)\epsilon^2 }.
\end{equation}
If $\tau$ is large enough, this algorithm approximately satisfies the confidence constraint, i.e.,
\begin{equation}
    \mathbb{P}^{\pi^\texttt{static}(x)}_{\theta_*} \Big( \big| \hat{\theta}_\tau - \theta_* \big| \geq \epsilon \Big) 
    \approx \exp\left( - \frac{I(x; \theta_* )}{2} \epsilon^2\tau \right)
    \approx \delta.
\end{equation}
This result shows that the number of questions needed to satisfy the confidence requirement is inversely proportional to the Fisher information of the question $x$.
Again, the ideal static query rule is the one asking the hindsight Fisher-optimal query, thereby providing a clear justification for the \texttt{FIT-Q} algorithm.

\subsection{Remark 2: Comparison to Robbins-Monro Algorithm} \label{subsec:RM}

In this section, we compare our \texttt{FIT-Q} algorithm with the Robbins-Monro (\texttt{RM}) algorithm, a seminal method for solving stochastic root-finding problems.
Recall that $z_*$ is the maximizer of the function $f'(z)^2/ \left\{ f(z) \big( 1 - f(z) \big)\right\}$.
With $p_* \coloneqq f(z_*)$, finding the Fisher-optimal query $x_*$ is identical to a root-finding problem to find the value $x \in \mathbb{R}$ satisfying $\mathbb{E}[ Y_t | X_t = x ] = p_*$.
Applying the \texttt{RM} algorithm yields the following query rule:
\begin{equation}\label{eq:RM updating}
	X_{t+1} = X_t + \alpha_t \big( Y_t - p_* \big),
\end{equation}
where $\alpha_t$ is the step size.
Theorem 5.2 in \cite{pasupathy2011stochastic} implies that the chosen queries are normally distributed asymptotically, i.e., $t^{1/2}(X_t - x_*) \xrightarrow{d} N(0, V_\alpha(x_*))$, as $t$ grows, and the asymptotic variance term $V_\alpha(x_*)$ is minimized when $\alpha_t = 1/(t f'(z_*))$.

The \texttt{FIT-Q} algorithm behaves very similarly to the \texttt{RM} algorithm with optimal step sizes.
Recall that the MME $\hat{\theta}_t$ satisfies the moment condition, $\sum_{s=1}^t f(\hat{\theta}_t - X_s) = \sum_{s=1}^t Y_s$, and our query rule chooses a query $X_t$ such that $f(\hat{\theta}_{t-1} - X_t) = p_*$.
Then, the following identity holds
\begin{equation}
    Y_{t} - p_* = Y_t - f(\hat{\theta}_{t-1} - X_{t})
        = \sum_{s=1}^{t} \left\{ f(\hat{\theta}_{t} - X_s) - f(\hat{\theta}_{t-1} - X_s) \right\}.
\end{equation}
A first-order approximation of the right-hand side gives
\begin{equation}
    \sum_{s=1}^{t} \left\{ f(\hat{\theta}_{t} - X_s) - f(\hat{\theta}_{t-1} - X_s) \right\} 
    \approx (\hat{\theta}_{t} - \hat{\theta}_{t-1} ) \sum_{s=1}^{t} f'(\hat{\theta}_t - X_s) 
    = (X_{t+1} - X_t ) \sum_{s=1}^{t} f'(\hat{\theta}_t - X_s) ,
\end{equation}
where the last equality utilizes the fact that $\hat{\theta}_{t} - \hat{\theta}_{t-1} = (X_{t+1} + z_*) - (X_t + z_*) = X_{t+1} - X_t$.
One can also show that $t^{-1} \sum_{s=1}^{t} f'(\hat{\theta}_t - X_s)$ converges to $f'(z_*)$ as $t$ increases, which implies the following approximation:
\begin{equation}
    Y_t - p_* \approx t f'(z_*) (X_{t+1}-X_t).
\end{equation}
If the approximation is exact, this equation precisely matches the update rule of \texttt{RM} algorithm \eqref{eq:RM updating} with the optimal step size $\alpha_t = 1/(tf'(z_*))$.
That is, the asymptotic behaviors of the two algorithms are equivalent in the large-$t$ regime.

However, the inherent decoupling between the parameter estimate $\hat\theta_t$ and the chosen query $X_t$ under the \texttt{RM} algorithm leads to a significant challenge in analyzing their convergence rates, which are essential for assessing the algorithm's optimality. As shown in~\eqref{eq:RM updating}, the \texttt{RM} algorithm's query rule relies exclusively on the latest response $Y_t$, rather than on the estimate $\hat\theta_t$. This implies that the proximity of $\hat\theta_t$ to its true parameter $\theta_*$ does not necessarily ensure the closeness of $X_t$ to the optimal query $x_*$. Such a construction makes it difficult to link the behaviors of the two components and formally characterize the convergence rate of the algorithm. Although several studies have tackled this issue in the literature~\citep{chandak2022concentration, rahmani2016exponential, woodroofe1972normal}, the conditions required by their analyses are often hard to verify in practice, and the resulting convergence bounds are not sufficiently tight.

In constrast, our \texttt{FIT-Q} algorithm is designed to overcome this challenge. In particular,  our query rule~\eqref{eq:projection} intrinsically ties the estimate $\hat{\theta}_t$ and the query $X_t$ through a constant $z_*$. Consequently, as we shall see in the next section, we rigorously demonstrate that both $\hat{\theta}_t$ and $X_t$ achieve optimal convergence rates based on a finite-time error bound of the MME.

\section{Theoretic Analysis}\label{section:analysis}

In this section, we provide formal, theoretical analysis showing the asymptotic optimality of the \texttt{FIT-Q} algorithm.
The complete proofs for all results established here can be found in Appendices \ref{appendix:proof-FB} and \ref{appendix:proof-FC}.

\subsection{Optimality of \texttt{FIT-Q} in Fixed-Budget Setting} \label{subsec:FB-analysis}

Recall that in the fixed-budget (FB) setting, we use the failure probability as a performance metric:
\begin{equation}
	p^\text{FB}(\pi_{\epsilon, T}; \theta_*, \epsilon, T) \coloneqq \mathbb{P}_{\theta_*}^{\pi_{\epsilon, T}} \big( | \hat{\theta}_T - \theta_* | > \epsilon \big),
\end{equation}
To clarify our notion of optimality in the FB setting, we first define a class of competing algorithms.

\begin{definition}[FB-consistent algorithms]\label{definition:FB-consistency}
    An algorithm $\pi$ is said to be \emph{FB-consistent} if, for any given $\theta_* \in \mathbb{R}$ and $\epsilon \in \mathbb{R}_+$, the failure probability converges to zero as $T$ grows:
    \begin{equation}
        \lim_{T \rightarrow \infty} p^\text{FB}(\pi_{\epsilon,T}; \theta_*, \epsilon, T) = 0.
    \end{equation}
\end{definition}

This is a natural condition that any reasonable algorithm should satisfy.
Indeed, it can be shown that any algorithm, including \texttt{FIT-Q}, that adopts MME or MLE in its prediction rule is FB-consistent regardless of its query rule.

\begin{theorem}[Universal performance limit in the FB setting]\label{thm:FB-bound}
    For any FB-consistent algorithm $\pi$, the exponential decay rate of its failure probability cannot exceed $\frac{I(x_*;\theta_*)}{2} \epsilon^2$ asymptotically as $T$ grows; specifically,
	\begin{equation}
		\limsup_{T \to \infty} -\frac{1}{T} \log p^\text{FB}(\pi_{\epsilon,T}; \theta_*, \epsilon, T) \leq \frac{I(x_*; \theta_*)}{2} \epsilon^2 + o(\epsilon^2).
	\end{equation}
\end{theorem}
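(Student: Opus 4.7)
The plan is to prove this universal lower bound via a change-of-measure (Le Cam / Kaufmann--Capp\'e--Garivier) argument, comparing the algorithm's behavior under the truth $\theta_*$ with its behavior under a ``confusing'' alternative $\theta'$ sitting close by, and then extracting the Fisher information via a second-order Taylor expansion of the induced Bernoulli KL divergence.

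Concretely, I would first fix an alternative $\theta'$ with $|\theta'-\theta_*|$ comparable to $\epsilon$ and construct an indicator $Z$ that is close to $1$ under $\theta'$ (by FB-consistency) and that forces failure under $\theta_*$ whenever $Z=1$. The natural candidate is $Z=\mathbf{1}\{|\hat\theta_T-\theta'|\le\epsilon\}$ together with $|\theta'-\theta_*|>2\epsilon$, so that the $\epsilon$-neighborhoods of $\theta_*$ and $\theta'$ are strictly disjoint and $\mathbb{E}_{\theta_*}^\pi[Z]\le p^{\mathrm{FB}}(\pi;\theta_*)\to 0$ while $\mathbb{E}_{\theta'}^\pi[Z]=1-p^{\mathrm{FB}}(\pi;\theta')\to 1$. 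The transportation lemma (a data-processing inequality for KL) then yields
\[
d\bigl(1-p^{\mathrm{FB}}(\pi;\theta') \,\big|\, p^{\mathrm{FB}}(\pi;\theta_*)\bigr) \;\le\; \mathrm{KL}\bigl(\mathbb{P}_{\theta'}^{\pi} \,\big\|\, \mathbb{P}_{\theta_*}^{\pi}\bigr),
\]
and, by the conditional-Bernoulli structure of the response model combined with the chain rule of KL, the right-hand side is bounded by $T\cdot\sup_{x\in\mathcal{X}} d(f(\theta'-x)\mid f(\theta_*-x))$. A second-order Taylor expansion in $\Delta=\theta'-\theta_*$ gives
\[
d(f(\theta'-x)\mid f(\theta_*-x)) \;=\; \tfrac{1}{2}\,I(x;\theta_*)\,\Delta^2 \,+\, O(\Delta^3)
\]
uniformly in $x\in\mathcal{X}$, with the supremum attained, to leading order, at the hindsight-optimal query $x_*$ defined in~\eqref{eq:fisher-query}. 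Using FB-consistency to reduce the left-hand side to $\log(1/p^{\mathrm{FB}}(\pi;\theta_*))(1+o_T(1))$, dividing by $T$, and taking $\limsup_T$ produces $\limsup_{T\to\infty}-\frac{1}{T}\log p^{\mathrm{FB}}(\pi;\theta_*)\le \tfrac{1}{2}\,I(x_*;\theta_*)\,\Delta^2+o(\Delta^2)$.

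The main obstacle is the \emph{sharp constant}. With $\Delta=2\epsilon$ (the disjointness threshold), the argument above only recovers $2\,I(x_*;\theta_*)\epsilon^2$---four times looser than the theorem's $\tfrac{1}{2}I(x_*;\theta_*)\epsilon^2$, which would correspond to $\Delta=\epsilon$. Shrinking $\Delta$ to the boundary $|\theta'-\theta_*|\to\epsilon^+$ of the alternative set makes the two $\epsilon$-neighborhoods overlap, so the naive event $Z$ no longer witnesses a contradiction with failure under $\theta_*$. I would close this gap by (i) replacing $Z$ with a one-sided separator such as $\{\hat\theta_T\le \tfrac{1}{2}(\theta_*+\theta')\}$ and pairing it with the mirror alternative $\theta''=\theta_*-(\epsilon+\eta)$ to cover the opposite tail, and (ii) strengthening the transportation step with a Chernoff / Bhattacharyya-type bound on the \emph{product} of Type-I and Type-II errors---tight precisely when both failure probabilities decay exponentially, unlike the pure KL / Stein bound which is tight only when one of them is held fixed. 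The cubic remainder in the Taylor expansion, together with the $\eta\to 0^+$ slack in the choice of $\Delta$, is what the $o(\epsilon^2)$ correction in the theorem absorbs.
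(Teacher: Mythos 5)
Your first two paragraphs follow essentially the same route as the paper: a change-of-measure/transportation argument comparing $\mathbb{P}_{\theta_*}^\pi$ with a nearby alternative $\mathbb{P}_{\theta'}^\pi$, the chain-rule bound $\mathrm{KL}\le T\cdot\sup_{x}d(f(\theta'-x)\mid f(\theta_*-x))$, and the second-order Taylor expansion identifying the leading term with $\tfrac12 I(x_*;\theta_*)\Delta^2$ (the paper's Lemma~\ref{lemma:KL-approximation}). The paper likewise invokes Lemma~2 of \cite{bassamboo2023learning} for the transportation step and handles the $\mathbb{E}_{\theta_*}[Z]\to 1$ side by introducing an auxiliary level $\eta$ and letting $\eta\searrow 0$.

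The genuine gap is in how you recover the sharp constant. You impose $|\theta'-\theta_*|>2\epsilon$ so that the two $\epsilon$-neighborhoods are disjoint, lose a factor of four, and then propose two repairs, neither of which works as stated. The one-sided separator $Z=\mathbf{1}\{\hat\theta_T\le\tfrac12(\theta_*+\theta')\}$ with $\theta'=\theta_*+\epsilon+\eta$ places the cut at $\theta_*+\tfrac{\epsilon+\eta}{2}<\theta_*+\epsilon$, so $\mathbb{P}_{\theta_*}(Z=0)$ is \emph{not} dominated by the failure event $\{|\hat\theta_T-\theta_*|>\epsilon\}$; the transportation inequality then lower-bounds the wrong quantity and never connects back to $p^{\mathrm{FB}}(\pi;\theta_*,\epsilon,T)$. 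The Chernoff/Bhattacharyya product-of-errors bound is also the wrong tool: FB-consistency only guarantees that the error under $\theta'$ vanishes, not that it vanishes exponentially, so a bound on the product of the two exponents does not isolate the exponent of $p^{\mathrm{FB}}$ under $\theta_*$; moreover such bounds are delicate for adaptive (non-i.i.d.) experiments. The fix the paper uses is simpler and requires no new inequality: keep $Z=\mathbf{1}\{\hat\theta_T\in(\theta_*-\epsilon,\theta_*+\epsilon)\}$, so that $\mathbb{P}_{\theta_*}(Z=1)=1-p^{\mathrm{FB}}$ exactly, and observe that FB-consistency holds for \emph{every} tolerance. Applying it under $\theta'$ with tolerance $|\theta'-\theta_*|-\epsilon>0$ forces $\hat\theta_T$ to concentrate so tightly around $\theta'$ that it exits $(\theta_*-\epsilon,\theta_*+\epsilon)$ with probability tending to one, i.e.\ $\mathbb{P}_{\theta'}(Z=1)\to 0$ even though the $\epsilon$-neighborhoods of $\theta_*$ and $\theta'$ overlap. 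The standard Stein-type transportation bound then applies with $|\theta'-\theta_*|\downarrow\epsilon$, yielding the constant $\tfrac12 I(x_*;\theta_*)\epsilon^2$ directly, with the cubic Taylor remainder absorbed into $o(\epsilon^2)$ exactly as you anticipated.
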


Theorem \ref{thm:FB-bound} establishes a limit on the achievable performance of any FB-consistent algorithm in the form of an asymptotic upper bound on the exponential decay rate.
The upper bound is characterized by the maximum Fisher-information $I(x_*; \theta_*)$, which is consistent with the argument made with respect to the static query rules in Section~\ref{subsec:static-query}.
The proof of this result is based on an information-theoretic argument, following a similar structure to the proof of lemma 2 in \cite{bassamboo2023learning}.

Our next theorem shows the FB-consistency of the \texttt{FIT-Q} algorithm and establishes an asymptotic lower bound on the exponential decay rate of \texttt{FIT-Q}'s failure probability.

\begin{theorem}[Performance of \texttt{FIT-Q} in FB setting]\label{thm:FIT-Q-FB}
    The \texttt{FIT-Q} algorithm is FB-consistent. Furthermore, if $x_* = \theta_* - z_* \in \mathcal{X}$, the asymptotic decay rate of \texttt{FIT-Q}'s failure probability satisfies:
	\begin{equation}
		\liminf_{ T \to \infty} - \frac{1}{T} \log p^\text{FB}(\pi^\text{FIT-Q}_{\epsilon,T}; \theta_*, \epsilon, T) \geq \frac{I(x_*; \theta_*)}{2} \epsilon^2 + o(\epsilon^2).
	\end{equation}
\end{theorem}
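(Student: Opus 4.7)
The plan is to reduce the failure event to a one-sided sum inequality via the MME identity, apply a Chernoff-type bound through Ville's inequality to an exponential martingale, and use a separate time-uniform concentration of $\hat\theta_s$ to tame the estimate-query endogeneity. I focus on the upper tail $\{\hat\theta_T > \theta_* + \epsilon\}$, as the lower tail is symmetric; FB-consistency will follow from the same concentration step and remains valid even when $x_* \notin \mathcal{X}$, since the projected query still carries strictly positive Fisher information under Assumption \ref{assumption:model}. The reduction itself is immediate: the identity $\sum_{s=1}^T f(\hat\theta_T - X_s) = \sum_{s=1}^T Y_s$ together with the monotonicity of $f$ gives $\{\hat\theta_T > \theta_* + \epsilon\} \subseteq \{\sum_{s=1}^T Y_s \geq \sum_{s=1}^T f(\theta_* + \epsilon - X_s)\}$. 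Writing $p_s \coloneqq f(\theta_* - X_s)$ and $q_s \coloneqq f(\theta_* + \epsilon - X_s)$, the question becomes how often the Bernoulli sum $\sum_s Y_s$ (with conditional means $p_s$) exceeds the adaptive threshold $\sum_s q_s$.

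For the tail bound I introduce, for each $\lambda > 0$, the positive $\mathbb{F}$-martingale
\begin{equation}
M_t(\lambda) \coloneqq \prod_{s=1}^t \frac{e^{\lambda Y_s}}{p_s e^\lambda + 1 - p_s},
\end{equation}
which has $\mathbb{E}[M_t(\lambda)] = 1$ and, by Ville's inequality, satisfies $\mathbb{P}(M_T(\lambda) \geq e^u) \leq e^{-u}$ for all $u > 0$. I condition on the good event $G_T \coloneqq \{\sup_{s \leq T}|\hat\theta_s - \theta_*| \leq \rho\}$ for a small slack $\rho > 0$; on $G_T$, the tracking rule $X_{s+1} = \hat\theta_s - z_*$ (which is unaffected by the projection whenever $x_* \in \mathcal{X}$ and $\rho$ is small enough) sandwiches $p_s \in [f(z_*-\rho), f(z_*+\rho)]$ and $q_s \in [f(z_*+\epsilon-\rho), f(z_*+\epsilon+\rho)]$. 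On $G_T$, the reduced event forces $\log M_T(\lambda) \geq T[\lambda f(z_*+\epsilon-\rho) - \log(f(z_*+\rho)e^\lambda + 1 - f(z_*+\rho))]$, so Ville yields the exponential bound $e^{-T r(\lambda,\rho)}$ with that exponent. Optimizing $\lambda$ and then letting $\rho \to 0$ recovers the Cramér rate $d(f(z_*+\epsilon) \| f(z_*))$, which Taylor-expands to $\tfrac{I(x_*;\theta_*)}{2}\epsilon^2 + o(\epsilon^2)$.

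To bound $\mathbb{P}(G_T^c)$, I apply Ville's inequality a second time, now to an exponential supermartingale built from $\sum_s (Y_s - p_s)$ (whose conditional variance is uniformly bounded in $[0,\tfrac14]$), delivering a time-uniform tail bound on the centered sum. Inverting the MME identity via the mean value theorem writes $\hat\theta_s - \theta_* = \sum_r (Y_r - f(\theta_* - X_r))/\sum_r f'(\xi_r - X_r)$ for some intermediate $\xi_r$, where the denominator is $\Theta(s)$ as long as the queries stay in a bounded interval. The paper's cyclical coupling $X_{s+1} = \hat\theta_s - z_*$ is essential here: it makes the map from sum deviations to $\hat\theta$ deviations uniformly Lipschitz and forecloses a feedback loop between the estimate and the query. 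This delivers $\mathbb{P}(G_T^c) \leq e^{-T c(\rho)}$ for some $c(\rho) > 0$. Combining, $\mathbb{P}(\hat\theta_T > \theta_* + \epsilon) \leq e^{-Tc(\rho)} + e^{-Tr(\lambda,\rho)}$; sending $\rho \to 0$ slowly with $T$ and taking the liminf of $-\tfrac{1}{T}\log(\cdot)$ closes the argument. FB-consistency then follows by taking any fixed $\rho < \epsilon$.

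The main obstacle is exactly the second Ville's argument: a sum deviation bound on $\sum_s(Y_s - p_s)$ is routine, but translating it into a time-uniform deviation bound on $\hat\theta_s$ requires inverting a random equation whose coefficients $X_s$ themselves depend on past $\hat\theta$'s — the "intricate dependence" flagged by \cite{wang2024best}. The cyclical tracking rule breaks this loop by rigidly coupling $X_{s+1}$ to $\hat\theta_s$, so that the random denominator in the MME inversion is itself controlled by the same event one is trying to establish, and a self-consistent bound can be closed. A secondary subtlety is that the $\rho$-dependent slack in the Chernoff exponent must be controlled uniformly in $\lambda$ so that the Legendre transform passes cleanly to the limit; the smoothness in Assumption \ref{assumption:model} is precisely what is needed for this.
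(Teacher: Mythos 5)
Your skeleton matches the paper's: reduce the tail event via the MME identity and monotonicity of $f$, bound it with an exponential supermartingale plus Ville's inequality on a good event where queries are near $x_*$, and control the bad event with a second time-uniform concentration argument. However, there are genuine gaps in how the good event is set up and balanced. First, your event $G_T = \{\sup_{s\le T}|\hat\theta_s-\theta_*|\le\rho\}$ starts at $s=1$, where $\hat\theta_1\in\{-\infty,+\infty\}$ almost surely, so $\mathbb{P}(G_T)=0$ and the decomposition collapses. The good event must exclude an initial segment $[1,t_0]$, and the contribution of those $t_0$ rounds to the main supermartingale must be bounded separately (the paper uses the crude $\psi(\lambda;p)\le\lambda^2/8$ there, which is harmless only because $t_0/T\to0$). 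Second, and more fundamentally, your prescription to ``send $\rho\to0$ slowly with $T$'' does not resolve the tension between the two exponents: the bad-event rate $c(\rho)$ vanishes as $\rho\to0$, so $e^{-Tc(\rho_T)}$ with $\rho_T\to0$ need not decay at rate $\Omega(\epsilon^2T)$, and the final $\liminf$ would be dominated by the bad-event term. The paper's resolution is to keep the band width fixed in $T$ but scaled with $\epsilon$ (namely $\sqrt\epsilon$) and to choose $t_0=\lceil\tfrac{g_*^2}{4v_*\kappa(\epsilon)^2}\epsilon^2T\rceil$ so that the bad-event exponent $2\kappa(\epsilon)^2t_0$ exactly matches the target $\tfrac{g_*^2}{2v_*}\epsilon^2T$ while $t_0/T\to0$; this rate-matching is the crux of the proof and is absent from your plan.

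Your method for bounding $\mathbb{P}(G_T^c)$ is also circular as stated: inverting the MME identity by the mean value theorem gives a denominator $\sum_r f'(\xi_r-X_r)$ with $\xi_r$ between $\hat\theta_s$ and $\theta_*$, and since $f'\to0$ at $\pm\infty$ this denominator is only $\Theta(s)$ if $\hat\theta_s$ is already known to be bounded---which is the event you are trying to establish. The paper avoids inversion entirely (Proposition~\ref{prop:maximal}): it uses the same one-sided containment $\{\hat\theta_t\ge\theta_*+\eta\}\subseteq\{\sum_sY_s\ge\sum_sf(\theta_*-X_s+\eta)\}$ together with the uniform gap $\kappa_\eta=\inf_{x\in\mathcal{X}}\min_{\theta\in\{\theta_*\pm\eta\}}|f(\theta-x)-f(\theta_*-x)|>0$ over the compact query set, so that the supermartingale must exceed a threshold growing linearly in $t$, and Ville's inequality delivers $\mathbb{P}(\exists t\ge t_0:|\hat\theta_t-\theta_*|\ge\eta)\le2\exp(-2\kappa_\eta^2t_0)$ under \emph{any} query rule. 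You should replace the mean-value inversion with this direct argument; it both closes the circularity and produces exactly the $t_0$-dependent bound needed for the rate-matching above.
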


\begin{proof}[Proof sketch of Theorem \ref{thm:FIT-Q-FB}]
    Define $I_* \coloneqq I(x_*; \theta_*)$.
    We carefully choose a time moment $t_0$.
    By utilizing Ville's inequality, it can be shown that $|\hat{\theta}_t - \theta_*| \leq \sqrt{\epsilon}$ for all $t \geq t_0$ with probability at least $1 - 2 \exp( - \epsilon^2 TI_*/2)$.
    On this event, the \texttt{FIT-Q} algorithm after time $t_0$ will always ask questions that are $\sqrt{\epsilon}$-close to the hindsight Fisher-optimal query, i.e., $|X_t - x_*| \leq \sqrt{\epsilon}$ for all $t \geq t_0+1$.
    Accordingly, the distribution of the estimate $\hat{\theta}_t$ concentrates to the true value $\theta_*$ at a nearly optimal rate after time $t_0$.
    We obtain the desired result by utilizing the fact that $|\hat{\theta}_T - \theta_*| \leq \epsilon$ with probability at least $1 - \exp\big( - (\epsilon^2 +o(\epsilon^2))(T-t_0) I_*/2 \big)$ on the event.
\end{proof}

Observe that the asymptotic performance lower bound in Theorem~\ref{thm:FIT-Q-FB} matches the universal performance limit in Theorem~\ref{thm:FB-bound}. This leads to the conclusion that the \texttt{FIT-Q} algorithm achieves the optimal decay rate, ignoring $o(\epsilon^2)$ factors.

Although this result may appear to be a trivial consequence of concentration inequalities, the key analytical challenge lies in avoiding circular reasoning: in order for the estimate to concentrate quickly, near-optimal questions must be asked, and conversely, a sufficiently concentrated estimate is required to ask near-optimal questions. Such an interdependent behavior has been a well-recognized and persistent issue in the best-arm identification literature \citep{wang2024best}.

Our proof resolves this issue by separating the initial phase of concentration from the long-run phase of concentration: we show that the concentration during the initial phase is sufficient to accelerate the long-run concentration rate to its (near-)maximal level.
Ville's inequality (see Lemma \ref{lemma:Ville}) is utilized to derive a time-uniform deviation bound of the estimate process, playing a key role in the proof.
Our proof also exploits the fact that the \texttt{FIT-Q} algorithm maintains a constant gap between the current estimate and query.
This structure enables the concentration bound of the estimate process to be applied to bound the query process as well.

\subsection{Optimality of \texttt{FIT-Q} in Fixed-Confidence Setting} \label{subsec:FC-analysis}

In the fixed-confidence (FC) setting, we focus on the expected stopping time, $\mathbb{E}[\tau]$.
Analogous to the FB-consistency introduced for the fixed-budget setting, we begin by introducing a notion of consistency for the FC setting.

\begin{definition}[FC-consistent algorithms]\label{definition:FC-consistency}
    An algorithm $\pi$ is said to be \emph{FC-consistent} if it satisfies the following conditions for any $\theta_* \in \mathbb{R}$, $\epsilon \in \mathbb{R}_+$ and $\delta \in \mathbb{R}_+$.
    \begin{enumerate}[label=\textbf{(F\alph*)}, ref=(F\alph*), align=left, leftmargin=*]
        \item \label{cond:FC-defn-termination}
            Its stopping time $\tau$ is finite almost surely.
        \item \label{cond:FC-defn-correctness}
            Its failure probability does not exceed $\delta$, i.e., $\mathbb{P}_{\theta_*}^{\pi_{\epsilon,\delta}} \big( | \hat{\theta}_\tau - \theta_* | > \epsilon \big) \leq \delta$.
    \item \label{cond:FC-defn-stability}
        For any $\eta \in \mathbb{R}_+$, $\lim_{\delta \to 0} \mathbb{P}_{\theta_*}^{\pi_{\epsilon, \delta}} \big( | \hat{\theta}_\tau - \theta_* | > \eta \big) = 0$.
    \end{enumerate}
\end{definition}

Conditions~\ref{cond:FC-defn-termination} and \ref{cond:FC-defn-correctness} are necessary by the problem setup, while condition \ref{cond:FC-defn-stability} is a subtle, yet natural, regularity condition. It requires that as the confidence level becomes stricter (i.e., $\delta \to 0$), the distribution of the final prediction $\hat{\theta}_\tau$ not only satisfies the $\epsilon$-accuracy but also increasingly concentrates around the true value $\theta_*$. This requirement is not overly restrictive. For instance, the condition is naturally satisfied by algorithms coupling a consistent estimator (e.g., MME, MLE) with a stopping rule that ensures an unbounded increase of the sample size $\tau$ as the target failure probability $\delta$ approaches zero.

The following theorem characterizes a universal performance limit on the sample complexity for any FC-consistent algorithm.

\begin{theorem}[Universal performance limit in FC setting]\label{thm:FC-bound}
    For any FC-consistent algorithm $\pi$, the growth rate of its expected stopping time satisfies:
    \begin{equation}
		\liminf_{\delta \to 0}\frac{\mathbb{E}_{\theta_*}^{\pi_{\epsilon,\delta}} \left[ \tau \right]}{ \log (1/\delta)} \geq \left(  \frac{I(x_*;\theta_*)}{2}\epsilon^2 + o(\epsilon^2) \right)^{-1}.
    \end{equation}
\end{theorem}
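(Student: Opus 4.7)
The plan is to apply the classical change-of-measure template for fixed-confidence lower bounds, combining a data processing inequality with Wald's identity in the spirit of Kaufmann--Capp\'e--Garivier, tailored so as to exploit both parts of FC-consistency. Fix a small $\eta > 0$ and introduce the alternative parameter $\theta' = \theta_* + \epsilon + \eta$ (with $\theta_* - \epsilon - \eta$ handled symmetrically), which sits just outside the $\epsilon$-accuracy region of $\theta_*$ but at distance strictly less than the $2\epsilon$ demanded by standard BAI-style disjoint-event arguments. The data processing inequality for random stopping times yields, for any $\mathcal{F}_\tau$-measurable event $\mathcal{E}$,
\[
d\bigl(\mathbb{P}_{\theta_*}^{\pi}(\mathcal{E}),\mathbb{P}_{\theta'}^{\pi}(\mathcal{E})\bigr) \;\leq\; \mathbb{E}_{\theta_*}^{\pi}\!\left[\sum_{t=1}^{\tau} d\bigl(f(\theta_* - X_t),f(\theta' - X_t)\bigr)\right] \;\leq\; \mathbb{E}_{\theta_*}^{\pi}[\tau] \cdot \sup_{x \in \mathcal{X}} d\bigl(f(\theta_* - x),f(\theta' - x)\bigr),
\]
and a Taylor expansion of the binary KL divergence gives $\sup_x d\bigl(f(\theta_* - x),f(\theta' - x)\bigr) = \tfrac{I(x_*;\theta_*)}{2}(\epsilon + \eta)^2 + o((\epsilon + \eta)^2)$, with the maximum attained as $x \to x_*$.

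To activate the bound, I would take $\mathcal{E} = \{\hat\theta_\tau \leq \theta_* + \epsilon + \eta/2\}$. Condition~\ref{cond:FC-defn-correctness} immediately gives $\mathbb{P}_{\theta_*}^{\pi}(\mathcal{E}) \geq 1 - \delta$ since $\mathcal{E} \supseteq \{|\hat\theta_\tau - \theta_*| \leq \epsilon\}$; on the other side, condition~\ref{cond:FC-defn-stability} applied at $\theta'$ with tolerance $\eta/2$ yields $q(\delta) := \mathbb{P}_{\theta'}^{\pi}(\mathcal{E}) \to 0$ as $\delta \to 0$, because under $P_{\theta'}$ the event $\mathcal{E}$ forces $\hat\theta_\tau \leq \theta' - \eta/2$. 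If one can establish $d(1 - \delta, q(\delta)) \gtrsim \log(1/\delta)$, then chaining the two estimates, dividing by $\log(1/\delta)$, taking $\delta \to 0$, and finally letting $\eta \to 0$ would produce the desired bound $\bigl(\tfrac{I(x_*;\theta_*)}{2}\epsilon^2 + o(\epsilon^2)\bigr)^{-1}$.

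The hard part will be precisely the step $d(1 - \delta, q(\delta)) \gtrsim \log(1/\delta)$: since $d(1 - \delta, q) \approx \log(1/q)$ for small $\delta,q$, this effectively requires $q(\delta)$ to decay at least as fast as $\delta$, while condition~\ref{cond:FC-defn-stability} only guarantees $q(\delta) \to 0$ with no explicit rate. This gap is exactly what separates the tight constant in the theorem from the strictly weaker $\bigl(\tfrac{I(x_*;\theta_*)}{2}(2\epsilon)^2\bigr)^{-1}$ that emerges from the standard BAI argument, which takes $\theta'$ at distance $2\epsilon$ with disjoint correctness events and thereby automatically enjoys $q(\delta) \leq \delta$. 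One promising remedy is to decompose $\mathcal{E}$ under $P_{\theta'}$ into its $\epsilon$-correct portion (bounded by $\delta$ via FC-correctness applied at $\theta'$) and a residual piece that stability forces to be asymptotically negligible, possibly paired with a diagonal choice $\eta = \eta(\delta) \to 0$ calibrated so that the stability rate dominates. Reconciling the rate-free guarantee of~\ref{cond:FC-defn-stability} with the logarithmic rate required on the right-hand side is the crux of the argument and what allows the sharp constant $\tfrac{2}{I(x_*;\theta_*)\epsilon^2}$ to be recovered.
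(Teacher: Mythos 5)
Your high-level template (change of measure via the transportation lemma, KL Taylor expansion, correctness at one hypothesis plus stability at the other) is the same as the paper's, but you have assigned the two roles to the wrong hypotheses, and this is exactly what creates the gap you flag as ``the crux.'' You choose the event $\mathcal{E} = \{\hat\theta_\tau \leq \theta_* + \epsilon + \eta/2\}$, apply correctness at $\theta_*$ to get $\mathbb{P}_{\theta_*}(\mathcal{E}) \geq 1-\delta$, and rely on stability at $\theta'$ to get $q(\delta) = \mathbb{P}_{\theta'}(\mathcal{E}) \to 0$; you then need $d(1-\delta, q(\delta)) \gtrsim \log(1/\delta)$, which requires $q(\delta) \lesssim \delta$ --- a rate that condition~\ref{cond:FC-defn-stability} does not supply. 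Your proposed remedy (splitting $\mathcal{E}$ under $\mathbb{P}_{\theta'}$ into an $\epsilon$-correct part and a residual) only yields $q(\delta) \leq \delta + o(1)$ with a rate-free $o(1)$, so $\log(1/q(\delta))$ can still grow arbitrarily slowly compared to $\log(1/\delta)$, and the argument does not close.

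The fix is to swap the roles. Fix $\theta$ with $|\theta-\theta_*| > \epsilon$, set $\eta = |\theta-\theta_*| - \epsilon$, and take the event to be the \emph{failure event for the alternative}, $\mathcal{E}_\theta := \{\hat\theta_\tau \notin (\theta-\epsilon, \theta+\epsilon)\}$. Condition~\ref{cond:FC-defn-correctness} applied at $\theta$ gives $\mathbb{P}_{\theta}(\mathcal{E}_\theta) \leq \delta$ directly --- this is where the $\log(1/\delta)$ rate enters --- while condition~\ref{cond:FC-defn-stability} applied at $\theta_*$ with tolerance $\eta$ gives $\mathbb{P}_{\theta_*}(\mathcal{E}_\theta) \to 1$, since $\{|\hat\theta_\tau - \theta_*| < \eta\} \subseteq \mathcal{E}_\theta$; here only convergence, not a rate, is needed. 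Then by monotonicity of $q \mapsto d(p\mid q)$ on $(0,p)$ one has $d(\mathbb{P}_{\theta_*}(\mathcal{E}_\theta) \mid \mathbb{P}_{\theta}(\mathcal{E}_\theta)) \geq d(p_\delta \mid \delta)$ with $p_\delta \to 1$, whence $d(p_\delta\mid\delta)/\log(1/\delta) \to 1$, and the rest of your argument (the transportation bound $\mathbb{E}_{\theta_*}[\tau] \cdot \sup_x d(f(\theta_*-x)\mid f(\theta-x)) \geq d(\cdot\mid\cdot)$, the infimum over $\theta$, and the second-order KL expansion giving $\frac{I(x_*;\theta_*)}{2}\epsilon^2 + O(\epsilon^3)$) goes through as you outlined. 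Note also that with this choice there is no need for the parameter $\eta$ to shrink or for any diagonal calibration; the sharp constant is obtained by simply taking the infimum over all $\theta$ outside the $\epsilon$-ball.
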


This result is consistent with Theorem \ref{thm:FB-bound}; since the failure probability decays exponentially at most at a rate of $\frac{I(x_*;\theta_*)}{2}\epsilon^2$, it requires at least $\big(\frac{I(x_*;\theta_*)}{2}\epsilon^2\big)^{-1} \log(1/\delta)$ samples to meet the failure probability constraint.
The proof is based on an information-theoretic argument that extends techniques from the discrete decision space literature~\citep[e.g.,][]{bassamboo2023learning, kaufmann2016complexity} to our continuous setting. A key challenge in this extension to the continuous space is that, due to the overlapping nature of ``success intervals'', an algorithm can be technically $(\epsilon,\delta)$-correct for two hypotheses that are themselves separated by only $\epsilon$, without being able to distinguish between them. The FC-consistency, through condition \ref{cond:FC-defn-stability}, resolves this ambiguity by requiring convergence to the exact true parameter, which ensures that close hypotheses become distinguishable in the limit. This allows the information-theoretic machinery to yield a tight lower bound.

We now turn our attention to the performance of \texttt{FIT-Q} algorithm.
Recall that it adopts the following test statistics:
\begin{equation}
    Z_t^{\epsilon} \coloneqq \min_{\theta \in \{\hat{\theta}_t-\epsilon, \hat{\theta}_t+\epsilon\}} \sum_{s=1}^t \left[ \lambda_* \left| f(\hat{\theta}_t - X_s) - f(\theta - X_s) \right| - \phi( \lambda_*, f(\theta - X_s) ) \right].
\end{equation}
Although we have made specific suggestions for the choice of $\phi:\mathbb{R}_+ \times [0,1] \rightarrow \mathbb{R}_+$ and $\lambda_* \in \mathbb{R}_+$ as in \eqref{eq:phi} and \eqref{eq:lambda}, respectively, in Section \ref{section:algorithm}, here we provide general conditions for $\phi$ and $\lambda_*$ to ensure the proposed algorithm's FC-consistency (Theorem \ref{thm:FIT-Q-FC-consistency}) and optimality (Theorem \ref{thm:FIT-Q-FC-optimality}).

\begin{theorem}[FC-consistency of \texttt{FIT-Q}] \label{thm:FIT-Q-FC-consistency}
    Assume that the function $\phi(\lambda_*, \cdot)$, when evaluated with a suitable scaling $\lambda_* = \lambda_*(\epsilon)$, satisfies the following conditions: for any $\epsilon > 0$,
    \begin{enumerate}[label=\textbf{(C\alph*)}, ref=(C\alph*), align=left, leftmargin=*]
        \item \label{cond:FC-cons-quasi-convex} For any $p_0 \in [0,1]$, the function $p \mapsto \lambda_* | p - p_0 | - \phi(\lambda_*, p)$ is continuous and quasi-convex, and it is minimized at $p = p_0$.
        \item \label{cond:FC-cons-cgf-dominate} For any $p\in[0,1]$, $\phi(\lambda_*, p) \geq \max\{ \psi(\lambda_*, p), \psi(-\lambda_*, p) \}$, where $\psi(\lambda, p) \coloneqq \log \mathbb{E}[ \exp(\lambda (Y-p)) ]$ with ${Y \sim \text{Bernoulli}(p)}$.
        \item \label{cond:FC-cons-positive} The value $\min_{z\in\{z_*-\epsilon, z_* +\epsilon\}}[\lambda_* | f(z) - f(z_*) | - \phi( \lambda_*, f(z) ) ]$ is strictly positive.
    \end{enumerate}
    Then, the \texttt{FIT-Q} algorithm with the stopping time $\tau \coloneqq \inf\{ t \in \mathbb{N} : Z_t^\epsilon \geq \log(2/\delta)\}$ is \emph{FC-consistent}.
\end{theorem}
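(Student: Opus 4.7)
The plan is to verify the three FC-consistency conditions \ref{cond:FC-defn-termination}, \ref{cond:FC-defn-correctness}, \ref{cond:FC-defn-stability} one at a time. The analytical core is the correctness condition \ref{cond:FC-defn-correctness}, which will follow from translating the stopping inequality into a tail event for an explicit $\mathbb{P}_{\theta_*}$-martingale and applying Ville's inequality. Termination \ref{cond:FC-defn-termination} and stability \ref{cond:FC-defn-stability} will then follow from the linear-in-$t$ growth of $Z_t^\epsilon$ guaranteed by \ref{cond:FC-cons-positive}, combined with the strong consistency of $\hat{\theta}_t$ that the time-uniform Ville-based deviation bound from the proof of Theorem~\ref{thm:FIT-Q-FB} already implies.

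For \ref{cond:FC-defn-correctness}, I would first use condition \ref{cond:FC-cons-quasi-convex} to justify that $\min$ over the two endpoints $\{\hat{\theta}_\tau \pm \epsilon\}$ equals the infimum over the whole complement of the $\epsilon$-neighborhood: via the monotonicity of $f$, each per-observation term $\theta \mapsto \lambda_* |f(\hat{\theta}_\tau - X_s) - f(\theta - X_s)| - \phi(\lambda_*, f(\theta - X_s))$ is quasi-convex in $\theta$ and minimized at $\theta = \hat{\theta}_\tau$, so the infimum of the sum of such terms over $|\theta - \hat{\theta}_\tau| \geq \epsilon$ is attained at one of the two endpoints. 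Thus on the failure event $\{|\hat{\theta}_\tau - \theta_*| > \epsilon\}$, plugging in $\theta = \theta_*$ yields
\[
	\sum_{s=1}^{\tau}\bigl[\lambda_* |f(\hat{\theta}_\tau - X_s) - f(\theta_* - X_s)| - \phi(\lambda_*, f(\theta_* - X_s))\bigr] \;\geq\; Z_\tau^\epsilon \;\geq\; \log(2/\delta).
\]
By symmetry it suffices to treat the sub-event $\{\hat{\theta}_\tau > \theta_* + \epsilon\}$; the other direction is identical with $\lambda_*$ replaced by $-\lambda_*$. On this sub-event the monotonicity of $f$ collapses the absolute value with a uniform sign, and the MME identity $\sum_s f(\hat{\theta}_\tau - X_s) = \sum_s Y_s$ simplifies the display to
\[
	\sum_{s=1}^{\tau}\bigl[\lambda_*(Y_s - f(\theta_* - X_s)) - \phi(\lambda_*, f(\theta_* - X_s))\bigr] \;\geq\; \log(2/\delta).
\]
Applying condition \ref{cond:FC-cons-cgf-dominate} (i.e., $\phi(\lambda_*, p) \geq \psi(\lambda_*, p)$) and exponentiating, this is equivalent to $M_\tau \geq 2/\delta$ where
\[
	M_t \;\coloneqq\; \exp\!\left( \sum_{s=1}^{t}\lambda_*(Y_s - f(\theta_* - X_s)) - \sum_{s=1}^{t}\psi(\lambda_*, f(\theta_* - X_s)) \right)
\]
is, by the defining property of $\psi$ as the conditional cumulant generating function of $Y_s - f(\theta_* - X_s)$ under $\mathbb{P}_{\theta_*}$, a nonnegative unit-mean $\mathbb{F}$-martingale. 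Ville's inequality bounds this event by $\delta/2$, and the symmetric direction contributes another $\delta/2$.

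For \ref{cond:FC-defn-termination}, I would use the fact that, under the \texttt{FIT-Q} dynamics, $\hat{\theta}_t \to \theta_*$ and hence $X_t \to x_*$ almost surely, which follows from the time-uniform Ville-based deviation bound developed in the proof of Theorem~\ref{thm:FIT-Q-FB}. Consequently each per-observation term in $Z_t^\epsilon$ converges almost surely to $\min_{z \in \{z_* \pm \epsilon\}}[\lambda_* |f(z) - f(z_*)| - \phi(\lambda_*, f(z))]$, which is strictly positive by \ref{cond:FC-cons-positive}; hence $Z_t^\epsilon / t$ converges almost surely to a positive constant and $\tau$ is a.s.\ finite. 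Condition \ref{cond:FC-defn-stability} then follows because as $\delta \to 0$ the threshold $\log(2/\delta)$ diverges and forces $\tau \to \infty$, after which the a.s.\ consistency of the MME delivers $\hat{\theta}_\tau \to \theta_*$ in probability. The main obstacle I anticipate is inside the correctness step: the whole argument hinges on the sign of $f(\hat{\theta}_\tau - X_s) - f(\theta_* - X_s)$ being \emph{uniform in $s$}, which is what allows the absolute value to collapse into a clean martingale increment. Handling this forces the decomposition into the two one-sided sub-events $\{\hat{\theta}_\tau > \theta_* + \epsilon\}$ and $\{\hat{\theta}_\tau < \theta_* - \epsilon\}$ and relies essentially on the strict monotonicity of $f$ from Assumption~\ref{assumption:model}\ref{cond:increasing} holding at every single observation.
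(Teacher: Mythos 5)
Your proposal is correct and follows essentially the same route as the paper's proof: the one-sided decomposition of the failure event, the reduction via quasi-convexity to $Z_\tau^{\text{in}}(\theta_*)$, the MME relation and monotonicity collapsing the absolute values into a Ville-controlled (super)martingale for each direction, and termination/stability via the almost-sure convergence of $Z_t^\epsilon/t$ to the positive constant of condition \ref{cond:FC-cons-positive} together with strong consistency of the MME. The only cosmetic differences are that you obtain $\hat{\theta}_t\to\theta_*$ from the Ville-based maximal inequality rather than the martingale SLLN, and you should note that the MME ``identity'' is only an inequality when $\hat{\theta}_\tau=\pm\infty$, though it points in the right direction on each one-sided sub-event.
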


The conditions in Theorem~\ref{thm:FIT-Q-FC-consistency} are not arbitrary but embody crucial design principles for the proposed algorithm. The quasi-convex structure imposed by condition \ref{cond:FC-cons-quasi-convex} ensures computational tractability by guaranteeing that the most challenging alternative hypothesis lies at the boundary points $\hat{\theta}_t \pm \epsilon$, which reduces an otherwise intractable search problem to a simple comparison. The formal correctness guarantee, $\mathbb{P}_{\theta_*}^{\pi_{\epsilon,\delta}} \left( | \hat{\theta}_\tau - \theta_* | > \epsilon \right) \leq \delta$, is established through condition \ref{cond:FC-cons-cgf-dominate}, which enables the construction of a test supermartingale by ensuring that the penalty function $\phi$ conservatively bounds the cumulant generating function $\psi$; this is a prerequisite for applying Ville's inequality. Lastly, condition \ref{cond:FC-cons-positive} guarantees the termination of the algorithm by enforcing a strictly positive drift in the test statistic, ensuring that it will make forward progress and almost surely surpass any finite stopping threshold.

\begin{proof}[Proof sketch of Theorem \ref{thm:FIT-Q-FC-consistency}]
    We now sketch the proof that the \texttt{FIT-Q} algorithm satisfies conditions \ref{cond:FC-defn-termination}--\ref{cond:FC-defn-stability} for the FC-consistency.
    First, we show the stopping time $\tau$ is finite almost surely.
    Since the MME $\hat{\theta}_t$ converges to $\theta_*$ almost surely, the query $X_t$ also converges to $x_*$.
    Consequently, the increment of the test statistic, $Z_t^{\epsilon} - Z_{t-1}^{\epsilon}$, converges almost surely to a limit.
    This positive drift ensures that $Z_t^\epsilon$ will eventually surpass any finite threshold, guaranteeing that $\tau$ is finite.
    Next, to show the $(\epsilon,\delta)$-correctness guarantee, we introduce the following test supermartingale:
    \begin{equation}
        M_t \coloneqq \exp\left( \lambda_* \sum_{s=1}^t (f(\theta_* - X_s) - Y_s) - \phi(\lambda_*, f(\theta_* - X_s)) \right).
    \end{equation}
    By condition \ref{cond:FC-cons-cgf-dominate}, $\{M_t\}_{t\geq 0}$ is a supermartingale with $M_0=1$.
    Conditions \ref{cond:FC-cons-quasi-convex} and \ref{cond:FC-cons-positive} imply that, if $\theta_* > \hat{\theta}_t + \epsilon$, then $Z_t^\epsilon \leq \log M_t$.
    Applying Ville's inequality yields $\mathbb{P}( \exists t : Z_t^\epsilon \geq \log(2/\delta), \theta_* \geq \hat{\theta}_t + \epsilon ) \leq \mathbb{P}( \exists t: \log M_t \geq \log(2/\delta) ) \leq \delta/2$.
    A similar argument for the other failure case bounds the total failure probability by $\delta$, thus satisfying condition \ref{cond:FC-defn-correctness}.
    Finally, the \texttt{FIT-Q}'s query rule and the MME do not depend on $\delta$.
    This allows us to consider the sequence of stopping times indexed by $\delta$, $\{ \tau_\delta \}_{\delta}$, sharing the same test statistics process $\{Z_t^\epsilon\}_{t \geq 0}$.
    Since the test statistic has a bounded increment, we have $\tau_\delta \to \infty$ almost surely as $\delta \to 0$.
    The strong consistency of the MME then ensures that the estimate at this stopping time, $\hat{\theta}_{\tau_\delta}$, converges to $\theta_*$ almost surely. This satisfies condition \ref{cond:FC-defn-stability}.
\end{proof}

We remark that while our analysis focuses on the complete \texttt{FIT-Q} algorithm, the core components of our proposed framework are modular. A careful reading of our proof reveals that when the MME is used with a test statistic and stopping rule satisfying the conditions of Theorem~\ref{thm:FIT-Q-FC-consistency}, both termination (condition~\ref{cond:FC-defn-termination}) and $(\epsilon, \delta)$-correctness (condition~\ref{cond:FC-defn-correctness}) are guaranteed, regardless of the specific query rule employed. Additionally, if the query rule does not depend on $\delta$ or $\epsilon$, like \texttt{FIT-Q}, condition~\ref{cond:FC-defn-stability} is also satisfied, making the entire procedure FC-consistent.

Having established the FC-consistency of \texttt{FIT-Q}, we now proceed to show its asymptotic optimality by analyzing its sample complexity.

\begin{theorem}[Performance of \texttt{FIT-Q} in FC setting] \label{thm:FIT-Q-FC-optimality}
    Suppose that the assumptions in Theorem \ref{thm:FIT-Q-FC-consistency} hold. In addition, we assume that 
    \begin{enumerate}[label=\textbf{(O\alph*)}, ref=(O\alph*), align=left, leftmargin=*]
        \item \label{cond:FC-opt-phi} $\sup_{p \in [0,1]}\left| \phi(\lambda, p) - \frac{\lambda^2}{2} p(1-p) \right| = o(\lambda^2)$;
        
        \item \label{cond:FC-opt-lambda} $\left| \lambda_*(\epsilon) - \epsilon \times f'(z_*) / (f(z_*)(1-f(z_*))) \right| = o(\epsilon)$.
    \end{enumerate}
    Then, if $x_* \in \mathcal{X}$, the growth rate of \texttt{FIT-Q}'s expected stopping time does not exceed $\left(\frac{I(x_*;\theta_*)}{2} \epsilon^2\right)^{-1}$ asymptotically as $\delta$ vanishes; specifically, 
    \begin{equation}
		\limsup_{\delta \to 0}\frac{\mathbb{E}_{\theta_*}^{\pi_{\epsilon,\delta}^\texttt{FIT-Q}} \left[ \tau \right]}{ \log (1/\delta)} \leq \left(  \frac{I(x_*;\theta_*)}{2}\epsilon^2 + o(\epsilon^2) \right)^{-1}.
    \end{equation}
\end{theorem}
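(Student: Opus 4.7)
The plan is to show that the test statistic $Z_t^\epsilon$ has an asymptotic per-step drift equal to $\alpha(\epsilon) := \min_{z \in \{z_*-\epsilon, z_*+\epsilon\}}[\lambda_*|f(z) - f(z_*)| - \phi(\lambda_*, f(z))]$, which under (Oa)--(Ob) expands to $I(x_*;\theta_*)\epsilon^2/2 + o(\epsilon^2)$ as $\epsilon \to 0$. Since $\tau$ is the first time $Z_t^\epsilon$ crosses $\log(2/\delta)$, this will give $\limsup_{\delta \to 0} \mathbb{E}[\tau]/\log(1/\delta) \leq 1/\alpha(\epsilon)$, matching the universal lower bound of Theorem~\ref{thm:FC-bound} to leading order in $\epsilon$.

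The first step is to derive the expansion of $\alpha(\epsilon)$. By smoothness of $f$ and Taylor expansion, $f(z_*\pm\epsilon) - f(z_*) = \pm\epsilon f'(z_*) + O(\epsilon^2)$; combined with (Ob) this yields $\lambda_*|f(z_*\pm\epsilon) - f(z_*)| = I(x_*;\theta_*)\epsilon^2 + o(\epsilon^2)$. Applying (Oa) together with $f(z_*\pm\epsilon)(1 - f(z_*\pm\epsilon)) = f(z_*)(1-f(z_*)) + O(\epsilon)$ gives $\phi(\lambda_*, f(z_*\pm\epsilon)) = I(x_*;\theta_*)\epsilon^2/2 + o(\epsilon^2)$. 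Subtracting yields $\alpha(\epsilon) = I(x_*;\theta_*)\epsilon^2/2 + o(\epsilon^2)$.

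The second and main step is to convert time-uniform concentration of $\hat\theta_t$ into a linear lower bound on $Z_t^\epsilon$. Following the Ville's-inequality-based argument from the proof of Theorem~\ref{thm:FIT-Q-FB}, for any small $\eta>0$ and $t_0\in\mathbb{N}$, $\mathbb{P}(\exists s \geq t_0 : |\hat\theta_s - \theta_*| > \eta) \leq 2 e^{-c t_0 \eta^2}$ for some $c = c(f, \theta_*) > 0$. Since $x_* \in \mathcal{X}$, the projection~\eqref{eq:projection} is nonexpansive, so $|X_s - x_*| \leq |\hat\theta_{s-1} - \theta_*|$ and the same event controls $|X_s - x_*| \leq \eta$ for $s \geq t_0 + 1$. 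A uniform Taylor expansion of each summand of $Z_t^\epsilon$ in the perturbation $\hat\theta_t - X_s - z_*$ then shows that every summand with $s \geq t_0+1$ exceeds $\alpha(\epsilon)(1 - o(1))$ on this event (with the $o(1)$ in $\eta$), while each summand is bounded from below by a constant $-C = -C(\lambda_*)$ universally. Fix $\gamma > 0$, choose $\eta$ and $\beta > 0$ small enough that $(1-\beta)(1-o(1))\alpha(\epsilon) - \beta C \geq \alpha(\epsilon)/(1+\gamma)$, set $T_\delta := \lceil (1+\gamma)\log(2/\delta)/\alpha(\epsilon)\rceil$, and for each $t \geq T_\delta$ take $t_0 := \beta t$; then on the good event $Z_{t-1}^\epsilon \geq \log(2/\delta)$, forcing $\tau < t$. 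Consequently $\mathbb{P}(\tau \geq t) \leq 2 e^{-c\beta t \eta^2}$ for all $t \geq T_\delta$, so $\mathbb{E}[\tau] = \sum_{t \geq 1} \mathbb{P}(\tau \geq t) \leq T_\delta + O(1)$ for fixed $\epsilon$. Dividing by $\log(1/\delta)$ and sending $\delta \to 0$ then $\gamma \to 0$ concludes the argument.

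The hard part will be, once again, the estimate-query endogeneity the paper highlights: since $X_s$ depends on $\hat\theta_{s-1}$ and $\hat\theta_t$ on all past $(X_r, Y_r)$, the summands of $Z_t^\epsilon$ are not conditionally independent and no naive large-deviations argument applies to the drift. I would resolve this with the same two-pronged device as in Theorem~\ref{thm:FIT-Q-FB}: Ville's inequality delivers a \emph{time-uniform} deviation bound on $\hat\theta_t$ rather than a pointwise one, and the rigid offset $X_t = \hat\theta_{t-1} - z_*$ built into the Fisher-tracking query rule transports that bound verbatim to $X_t$. A secondary mechanical hurdle is ensuring that the $o(\lambda_*^2)$ slack in (Oa) and the $o(\epsilon)$ slack in (Ob) combine cleanly into the $o(\epsilon^2)$ margin in $\alpha(\epsilon)$ without being inflated by the Taylor remainder in $\hat\theta_t - X_s - z_*$.
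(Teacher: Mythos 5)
Your proposal is correct, and it reaches the conclusion by a route that differs from the paper's in its main step. The ingredients are the same: the time-uniform deviation bound for the MME via Ville's inequality (the paper isolates this as Proposition~\ref{prop:maximal}), the transfer of that bound to the query through the rigid offset $X_t = \mathrm{proj}_{\mathcal{X}}(\hat{\theta}_{t-1}-z_*)$ with $x_*\in\mathcal{X}$, the summand-wise lower bound $Z_t^\epsilon \geq \sum_{s} c(\hat{\theta}_t - X_s - z_*)$ obtained by pushing the minimum inside the sum, and the identical Taylor computation showing $c_* = \tfrac{I(x_*;\theta_*)}{2}\epsilon^2 + o(\epsilon^2)$ from \ref{cond:FC-opt-phi}--\ref{cond:FC-opt-lambda}. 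Where you diverge is in the assembly: the paper first shows $Z_t^\epsilon/t \to c_*$ almost surely (via strong consistency of the MME, not a quantitative bound), deduces $\limsup_\delta \tau_\delta/\log(1/\delta) \leq 1/c_*$ pathwise, and then passes to expectations by establishing uniform integrability through a separate second-moment bound $\sup_\delta \mathbb{E}[\tau_\delta^2/\log(1/\delta)^2]<\infty$ (using a $\lceil\sqrt{t}\rceil$ initial-phase cutoff in the good event). You instead bound $\mathbb{P}(\tau\geq t)$ directly for $t\geq T_\delta$ with a linear cutoff $t_0=\beta t$, getting $\mathbb{E}[\tau]\leq T_\delta + O(1)$ by summing an exponentially decaying tail; this collapses the a.s.-convergence and uniform-integrability steps into one and extracts the sharp constant directly from the deviation bound by sending $\beta,\eta,\gamma\to 0$. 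Your version is arguably more elementary and self-contained; the paper's version cleanly separates the ``what is the limiting rate'' question (answered softly via consistency) from the ``can I exchange limit and expectation'' question. The only loose ends in your write-up are bookkeeping: $t_0=\lceil\beta t\rceil$ must be an integer, the statistic you lower-bound is $Z_{t-1}^\epsilon$ so you need $t-1\geq \beta t$, and the additive $O(1)$ slack between $t\alpha(\epsilon)/(1+\gamma)$ and $\log(2/\delta)$ should be absorbed by a marginally larger $T_\delta$; none of these affects the asymptotics.
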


The two additional conditions are technical requirements to ensure the asymptotic optimality. Intuitively, condition~\ref{cond:FC-opt-phi} requires our surrogate penalty function $\phi$ to be a sufficiently accurate second-order approximation of the true cumulant generating function, which is crucial for the final rate calculation to be correct. Condition~\ref{cond:FC-opt-lambda} ensures that the scaling parameter $\lambda_*$ is tuned optimally as the error margin $\epsilon$ shrinks, guaranteeing that the test statistic achieves its maximum possible discriminative power.

\begin{proof}[Proof sketch of Theorem \ref{thm:FIT-Q-FC-optimality}]
    Let $c(\lambda, \epsilon) \coloneqq \min_{z \in \{z_* - \epsilon, z_* + \epsilon\}} [ \lambda | f(z_*) - f(z) | - \phi( \lambda, f(z) ) ] > 0$.
    Since the increment of the test statistic converges to $c(\lambda_*, \epsilon)$, $\frac{\tau_\delta}{\log(1/\delta)}$ converges to $\frac{1}{c(\lambda_*, \epsilon)}$ almost surely.
    Through a careful decomposition, it can be shown that $\mathbb{E}[\tau_\delta^2] < \infty$, and hence, the process $\{ \frac{\tau_\delta}{\log(1/\delta)} \}_\delta$ is uniformly integrable.
    Therefore, $\frac{ \mathbb{E}[\tau_\delta] }{ \log(1/\delta) }$ converges to $\frac{1}{c(\lambda_*,\epsilon)}$.
    Conditions \ref{cond:FC-opt-phi} and \ref{cond:FC-opt-lambda} lead to $c(\lambda_*(\epsilon), \epsilon) = \frac{I(x_*;\theta_*)}{2} \epsilon^2 + o(\epsilon^2)$.
\end{proof}
 
Theorems~\ref{thm:FIT-Q-FC-consistency} and~\ref{thm:FIT-Q-FC-optimality} highlight that the \texttt{FIT-Q} algorithm achieves consistency and asymptotic optimality in the fixed-confidence setting, provided that $\phi$ and $\lambda_*$ satisfy a set of regularity conditions. 
In Section~\ref{section:algorithm}, we proposed a specific construction of $\phi$ and the corresponding scaling parameter $\lambda_*$. The following proposition formally states that this choice satisfies all theoretical requirements:
\begin{proposition} \label{prop:verification}
    Define $\phi$ and $\lambda_*$ as in \eqref{eq:phi} and \eqref{eq:lambda}, respectively,  
    Then, all conditions in Theorems~\ref{thm:FIT-Q-FC-consistency} and \ref{thm:FIT-Q-FC-optimality} are satisfied.
\end{proposition}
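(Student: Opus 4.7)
The plan is to verify the five conditions---(Ca), (Cb), (Cc) from Theorem~\ref{thm:FIT-Q-FC-consistency} and (Oa), (Ob) from Theorem~\ref{thm:FIT-Q-FC-optimality}---in turn. The first four follow from elementary calculus plus a classical moment-generating-function bound, while (Ob) is the main substantive step and requires a perturbation analysis of the variational problem defining $\lambda_*(\epsilon)$.

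For (Ca), I would fix $p_0 \in [0,1]$ and analyze $g(p) \coloneqq \lambda_* |p - p_0| - \phi(\lambda_*, p)$ piecewise: on each of $[0, p_0]$ and $[p_0, 1]$ the function is a linear term plus the strictly convex $-(e^{\lambda_*} - \lambda_* - 1) p(1-p)$, hence convex. Requiring the one-sided derivatives $g'(p_0^+) \geq 0$ and $g'(p_0^-) \leq 0$ collapses into $(e^{\lambda_*} - \lambda_* - 1) |1 - 2 p_0| \leq \lambda_*$, which (since $|1 - 2p_0| \leq 1$) is implied by $e^{\lambda_*} - \lambda_* - 1 \leq \lambda_*$---precisely the inequality defining $\overline{\lambda}$ that \eqref{eq:lambda} enforces. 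For (Cb), I would invoke Bennett's MGF bound: for any centered $X$ with $X \leq c$ a.s.\ and $\mathrm{Var}(X) = \sigma^2$, one has $\log \mathbb{E}[e^{\lambda X}] \leq (\sigma^2/c^2)(e^{\lambda c} - \lambda c - 1)$ for $\lambda \geq 0$. Applied with $c = 1$ and $\sigma^2 = p(1-p)$ to both $X = Y - p$ and $X = p - Y$, this simultaneously yields $\max\{\psi(\lambda_*, p), \psi(-\lambda_*, p)\} \leq \phi(\lambda_*, p)$. For (Cc), strict monotonicity of $f$ gives $|f(z_* \pm \epsilon) - f(z_*)| > 0$, so expanding $\lambda|f(z) - f(z_*)| - \phi(\lambda, f(z)) = \lambda|f(z) - f(z_*)| - O(\lambda^2)$ around $\lambda = 0$ shows strict positivity for sufficiently small $\lambda > 0$; the min over $z \in \{z_* \pm \epsilon\}$ and the max over $\lambda$ preserve this. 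Finally, (Oa) follows from $e^\lambda - \lambda - 1 - \lambda^2/2 = O(\lambda^3)$ combined with $\sup_p p(1-p) = 1/4$.

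For (Ob), I would first establish $\lambda_*(\epsilon) = O(\epsilon)$: at any $\lambda$ bounded away from zero, the inner objective tends to $-\phi(\lambda, f(z_*)) < 0$ as $\epsilon \to 0$, whereas at $\lambda = O(\epsilon)$ the analysis below yields a value of order $\epsilon^2 > 0$, so the maximizer must lie in the scale $\lambda \sim \epsilon$. In this localized regime, using $|f(z_* \pm \epsilon) - f(z_*)| = f'(z_*)\epsilon + O(\epsilon^2)$ and $f(z_* \pm \epsilon)(1 - f(z_* \pm \epsilon)) = f(z_*)(1-f(z_*)) + O(\epsilon)$, I would write the inner objective as $G(\lambda, \epsilon) + R(\lambda, \epsilon)$ with $G(\lambda, \epsilon) \coloneqq \lambda f'(z_*)\epsilon - (e^\lambda - \lambda - 1) f(z_*)(1-f(z_*))$ and $R = O(\epsilon^3)$ (since both $\lambda$ and $e^\lambda - \lambda - 1$ are $O(\epsilon)$ and $O(\epsilon^2)$ respectively). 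Maximizing $G$ directly gives the unique interior critical point $\tilde{\lambda}(\epsilon) = \log(1 + f'(z_*)\epsilon / [f(z_*)(1-f(z_*))]) = \epsilon f'(z_*)/[f(z_*)(1-f(z_*))] + O(\epsilon^2)$; since $G_{\lambda\lambda}(\tilde{\lambda}(\epsilon), \epsilon) = -e^{\tilde{\lambda}(\epsilon)} f(z_*)(1-f(z_*))$ is bounded away from zero, the implicit function theorem applied to $\partial_\lambda(G + R) = 0$ transfers the expansion to the true maximizer: $\lambda_*(\epsilon) = \tilde{\lambda}(\epsilon) + O(\epsilon^2) = \epsilon f'(z_*)/[f(z_*)(1-f(z_*))] + o(\epsilon)$, which is (Ob). The main obstacle is the non-smoothness in $\lambda$ of the inner minimum over $z \in \{z_* \pm \epsilon\}$, which formally obstructs direct use of the implicit function theorem; my plan to bypass this is to note that the two candidate branches $g_\pm(\lambda, \epsilon) = \lambda|f(z_* \pm \epsilon) - f(z_*)| - \phi(\lambda, f(z_* \pm \epsilon))$ differ by $O(\lambda\epsilon^2) + O((e^\lambda - \lambda - 1)\epsilon)$, which collapses to $O(\epsilon^3)$ in the localized regime, so replacing the non-smooth minimum by a smooth proxy (either branch, or their average) distorts the argmax by at most $O(\epsilon^2) = o(\epsilon)$ and reduces the problem to a standard smooth perturbation argument.
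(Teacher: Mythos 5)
Your verification of conditions \ref{cond:FC-cons-quasi-convex}, \ref{cond:FC-cons-positive}, and \ref{cond:FC-opt-phi} matches the paper's proof essentially line for line (piecewise convexity plus the one-sided derivative test at $p_0$ reduced to $e^{\lambda_*}-\lambda_*-1\leq\lambda_*$; positivity of $a\lambda - b(e^\lambda-\lambda-1)$ near $\lambda=0$; the exact identity $\sup_p|\phi(\lambda,p)-\tfrac{\lambda^2}{2}p(1-p)|=\tfrac14|e^\lambda-\lambda-1-\tfrac{\lambda^2}{2}|$). For \ref{cond:FC-cons-cgf-dominate} you cite Bennett's MGF bound with $c=1$ rather than re-deriving the inequality as the paper does in Lemma~\ref{lemma:cgf} (via $\varphi(0)=\varphi'(0)=0$, $\varphi''<0$); both are valid, and your route is shorter at the cost of invoking monotonicity of $c\mapsto(e^{\lambda c}-\lambda c-1)/c^2$ to justify using $c=1$ when the actual support bound is $1-p$ or $p$.

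The genuine divergence is in condition \ref{cond:FC-opt-lambda}. The paper avoids the non-smoothness of the inner minimum entirely: it computes the two branch maximizers $\lambda_\pm(\epsilon)$ in closed form, shows each is $\epsilon f'(z_*)/(f(z_*)(1-f(z_*)))+o(\epsilon)$, and then uses concavity of both branches to sandwich $\lambda_*$ between $\min\{\lambda_+,\lambda_-\}$ and $\max\{\lambda_+,\lambda_-\}$ (the min of the two concave branches is increasing below the smaller maximizer and decreasing above the larger one). Your route---localize $\lambda_*$ to the scale $O(\epsilon)$, replace the objective by the smooth surrogate $G$, and transfer the expansion via nondegeneracy of $G_{\lambda\lambda}$---also works, but it forces you to confront the kink in the min, and your patch for that is where the argument is slightly loose: a uniform $O(\epsilon^3)$ perturbation of a strongly concave function with curvature of order one moves the argmax by $O(\epsilon^{3/2})$ under the standard value-comparison argument, not $O(\epsilon^2)$ as you claim; to get $O(\epsilon^2)$ you would need to compare first-order conditions of the two branches rather than function values. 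Since only $o(\epsilon)$ is required, either rate suffices and the conclusion stands, but the paper's sandwich argument is both tighter and free of this bookkeeping. Both approaches should also note (as the paper does implicitly) that the constraint $\lambda\leq\overline{\lambda}$ is inactive for small $\epsilon$ because $\lambda_\pm(\epsilon)\to 0$.
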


The proposed function $\phi(\lambda, p)$ in \eqref{eq:phi} is constructed as the tightest quadratic upper bound that simultaneously dominates both $\psi(\lambda, p)$ and $\psi(-\lambda, p)$ over $p \in [0,1]$, as illustrated in Figure~\ref{fig:phi}. 
The bound becomes tighter as $\lambda \searrow 0$, which ensures sharper control over the test statistic in a small-error regime.
The scaling parameter $\lambda_*$ in \eqref{eq:lambda} is selected to maximize the asymptotic growth rate of the test statistic under the \texttt{FIT-Q} algorithm.
Figure~\ref{fig:lambda} visualizes this objective: each dashed curve shows the objective value for a fixed $z \in \{z_* - \epsilon, z_* + \epsilon\}$, and the chosen $\lambda_*$ corresponds to the point that maximizes the minimum across the two.
Finally, the upper limit $\overline{\lambda}$ is imposed to ensure that $\phi$ satisfies condition~\ref{cond:FC-cons-cgf-dominate}.

\begin{figure}[h!]
    \centering
    \begin{subfigure}{0.5\linewidth}
        \includegraphics[width=\textwidth]{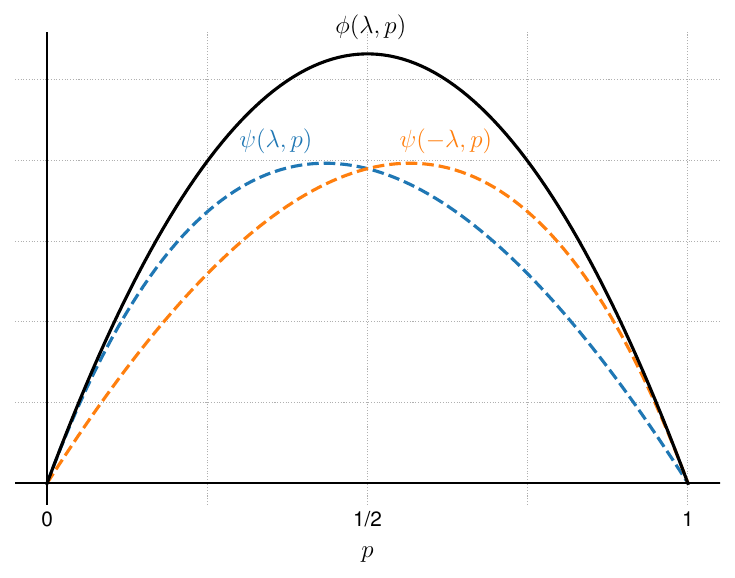}
        \caption{$\phi(\lambda, p)$ as an upper bound of $\psi(\lambda, p)$ and $\psi(-\lambda, p)$}
        \label{fig:phi}
    \end{subfigure}%
    \begin{subfigure}{0.5\linewidth}
        \includegraphics[width=\textwidth]{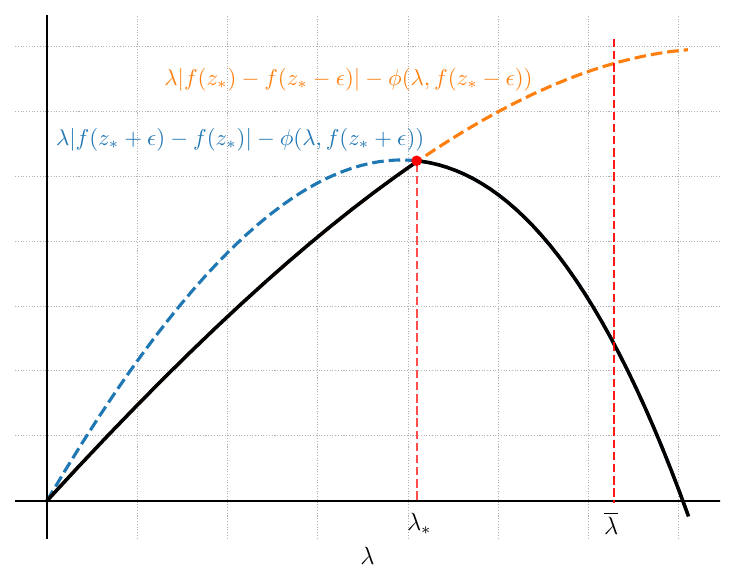}
        \caption{Objective curves for selecting $\lambda_*$ at $z = z_* \pm \epsilon$}
        \label{fig:lambda}
    \end{subfigure}
    \caption{Illustration of proposed $\phi$ and $\lambda_*$: (a) the proposed function $\phi(\lambda, p)$ (solid curve) and the log-moment generating functions $\psi(\lambda, p)$ and $\psi(-\lambda, p)$ (dashed curves), plotted over $p \in [0,1]$ for a fixed $\lambda > 0$; (b) the objective functions $\lambda \mapsto \lambda \cdot |f(z) - f(z_*)| - \phi(\lambda, f(z))$ for $z = z_* \pm \epsilon$ (dashed curves), used in the selection of $\lambda_*$.
}
\end{figure}

A key contribution of this work lies in developing a framework that natively addresses the continuous parameter and problem spaces, without leveraging common workarounds such as discretization. A straightforward alternative would be to discretize the continuous space, for instance by forming disjoint $2\epsilon$-intervals, and then apply test statistics from the discrete best-arm identification literature. However, such an approach can be sample-inefficient and requires ad-hoc choices for the discretization scheme. In contrast, our proposed test statistic is designed to operate directly on the original continuous problem. It provides a principled mechanism for accumulating evidence within the $\epsilon$-margin framework without altering the problem's structure, representing a more fundamental approach to sequential testing in continuous domains.

\section{Numerical Experiments}\label{section:experiment}

In this section, we verify our theoretical findings through numerical experiments.

\paragraph{Response models.}
We evaluate the proposed \texttt{FIT-Q} algorithm under two representative response models that satisfy Assumption~\ref{assumption:model} but exhibit distinct structural properties in terms of their Fisher information profiles:

\begin{itemize}
    \item \textbf{Logistic model}: The response function is defined as
        \begin{equation}
	   f(z) = \frac{1}{1 + \exp(-z)},
        \end{equation}
        which is commonly used in psychometrics and preference learning. Its fisher information function is given as
        \begin{equation}
            h(z) = f(z) \big(1 - f(z) \big) = \frac{\exp(-z)}{\big( 1 + \exp(-z) \big)^2},
        \end{equation}
        which is unimodal and symmetric, attaining a maximum at $z=z_*=0$, as shown in Figure \ref{fig:fisher-logistic}.
        Thus, the hindsight Fisher-optimal query is given by $x_* = \theta_*$, where the success probability is $f(z_*)=0.5$.
        
    \item \textbf{Algebraic-4 model}: The response function is given by
        \begin{equation}
    	f(z) = 0.5 \cdot \frac{z}{ \left( 1 + |z|^4 \right)^{1/4}}  + 0.5,
        \end{equation}
        which is smoother at the tails but exhibits bimodal Fisher information, as shown in Figure \ref{fig:fisher-algebraic4}.
        In this case, the Fisher information $h(z)$ attains its maximum at two symmetric points $z = \pm z_*$ where $z_* \approx 0.508$. The corresponding hindsight-optimal query is $x_* = \theta_* - z_*$, with the associated success probability approximately given by $f(z_*) \approx 0.75$.
\end{itemize}
Throughout all experiments, we fix \( \theta_* = 0 \) without loss of generality, leveraging the location invariance of the response model.

\begin{figure}[h!]
    \centering
    \begin{subfigure}{0.5\linewidth}
        \centering
        \includegraphics[width=\textwidth]{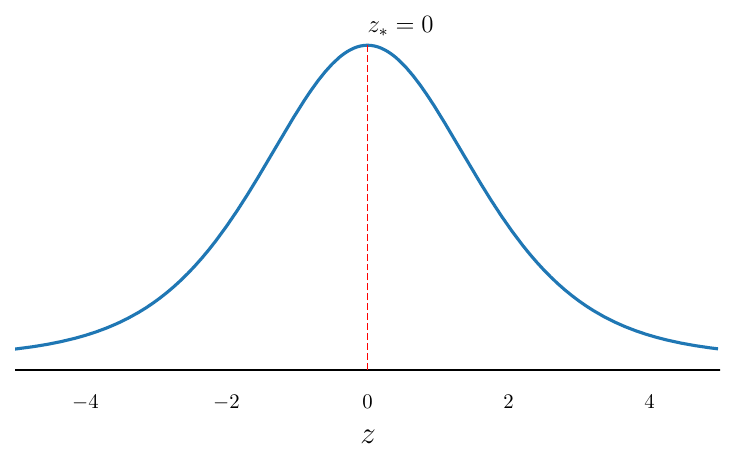}
        \caption{Logistic model}
        \label{fig:fisher-logistic}
    \end{subfigure}%
    \begin{subfigure}{0.5\linewidth}
        \centering
        \includegraphics[width=\textwidth]{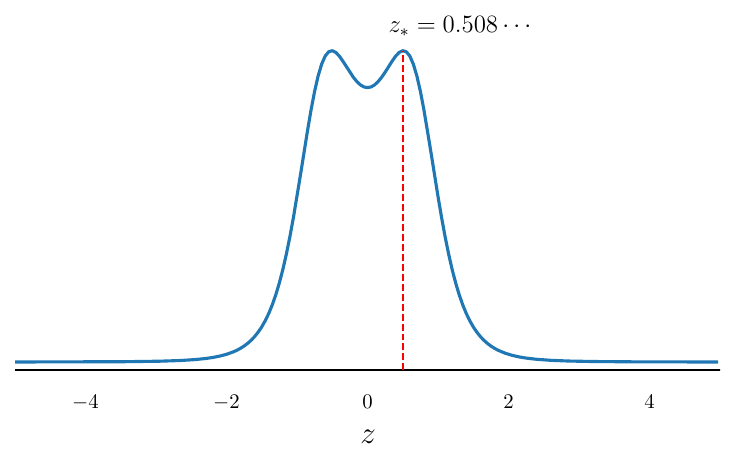}
        \caption{Algebraic-4 model}
        \label{fig:fisher-algebraic4}
    \end{subfigure}
        
    \caption{Fisher information function $h(z)$ for logistic and algebraic models.}
    \label{figure:fisher}
\end{figure}

\paragraph{Baseline algorithms.}
We compare the performance of \texttt{FIT-Q} against the following benchmark algorithms, each employing a distinct query strategy:

\begin{itemize}
    \item \textbf{Static query (optimal)}: The query is fixed at the hindsight Fisher-optimal point $x_* = \theta_* - z_*$, which is not implementable in practice but serves as an idealized performance upper bound. This algorithm is denoted by \texttt{Static($X_t = x_*$)}.
    
    \item \textbf{Static query (suboptimal)}: The query is fixed at a suboptimal point, specifically $x = 1$, which is typically far from the optimal query and results in low Fisher information. This algorithm is denoted by \texttt{Static($X_t = x$)}.

    \item \textbf{Uniform query}: At each step, the query is sampled independently and uniformly from the interval $(-1, 1)$, without adapting to past responses. This strategy serves as a randomized non-adaptive baseline. This algorithm is denoted by \texttt{Uniform(-1,1)}.

    \item \textbf{Robbins-Monro (RM)}: This algorithm applies the Robbins-Monro stochastic approximation method to iteratively solve for the query $x$ that achieves the target response probability $p_* = f(z_*)$. Specifically, the query is updated as
    \begin{equation}
        X_{t+1} = X_t + \alpha_t (Y_t - p_*),
    \end{equation}
    where the step size is set to $\alpha_t = 1 / (t f'(z_*))$, the asymptotically optimal rate under mild conditions.
    This algorithm is denoted by \texttt{Robbins-Monro}.
\end{itemize}

\paragraph{Fixed-budget setting.}  
We evaluate the accuracy of each algorithm under a fixed time budget \( T \), with the goal of minimizing the failure probability \( \mathbb{P}(|\hat{\theta}_T - \theta_*| > \epsilon) \). The error tolerance is fixed at \( \epsilon = 0.2 \), and we consider various time budgets: \( T \in \{200, 400, 600, 800, 1000\} \) for the logistic model, and \( T \in \{50, 100, 150, 200, 250\} \) for the algebraic model. For each configuration, we perform 100{,}000 independent runs to accurately estimate the empirical failure probability.

The results are summarized in Table~\ref{table:failure-combined}. In both models, \texttt{FIT-Q} consistently outperforms all adaptive and non-adaptive baselines, and closely matches the oracle performance of \texttt{Static(\( X_t = x_* \))}. As expected, \texttt{Static(\( X_t = 1 \))} performs poorly due to low Fisher information at the query point, and the \texttt{Uniform(-1,1)} strategy is consistently inferior to adaptive methods. Interestingly, the Robbins-Monro algorithm achieves competitive performance, particularly in the logistic model, where the Fisher information is unimodal.

\begin{table}[h!]
\centering
\begin{tabular}{llccccc}
\hline
\textbf{Model} & \textbf{Algorithm} & \textbf{$T=50/200$} & \textbf{$100/400$} & \textbf{$150/600$} & \textbf{$200/800$} & \textbf{$250/1000$} \\
\hline
\multirow{5}{*}{\textbf{Logistic}} 
& \texttt{FIT-Q} & 0.16417 & 0.04815 & 0.01516 & 0.00465 & 0.00149 \\
& \texttt{Static($X_t = x_*$)} & 0.17858 & 0.05174 & 0.01621 & 0.00516 & 0.00188 \\
& \texttt{Static($X_t = 1$)} & 0.20585 & 0.07176 & 0.02749 & 0.01269 & 0.00490 \\
& \texttt{Robbins-Monro} & 0.16507 & 0.04869 & 0.01516 & 0.00479 & 0.00150 \\
& \texttt{Uniform(-1,1)} & 0.17453 & 0.05584 & 0.01901 & 0.00676 & 0.00221 \\
\hline
\multirow{5}{*}{\textbf{Algebraic-4}} 
& \texttt{FIT-Q} & 0.16983 & 0.04306 & 0.01265 & 0.00386 & 0.00121 \\
& \texttt{Static($X_t = x_*$)} & 0.14761 & 0.03833 & 0.01106 & 0.00376 & 0.00100 \\
& \texttt{Static($X_t = 1$)} & 0.32834 & 0.11931 & 0.09963 & 0.04397 & 0.02003 \\
& \texttt{Robbins-Monro} & 0.17504 & 0.04487 & 0.01287 & 0.00358 & 0.00123 \\
& \texttt{Uniform(-1,1)} & 0.16618 & 0.04884 & 0.01612 & 0.00533 & 0.00159 \\
\hline
\end{tabular}
\caption{Failure probabilities of algorithms under fixed budgets. For the logistic model, $T \in \{200, 400, 600, 800, 1000\}$; for the algebraic-4 model, $T \in \{50, 100, 150, 200, 250\}$.}
\label{table:failure-combined}
\end{table}

To further examine asymptotic behavior, we plot the negative logarithm of failure probability, $-\log \mathbb{P}(|\hat{\theta}_T - \theta_*| > \epsilon)$, against $T$ in Figure~\ref{fig:log-failure}. 
All algorithms exhibit approximately linear decay, indicating exponential error reduction over time. 
Notably, \texttt{FIT-Q} aligns almost exactly with the slope predicted by Theorem~\ref{thm:FIT-Q-FB}, confirming that it achieves the optimal asymptotic decay rate up to leading order.

\begin{figure}[h!]
    \centering
    \begin{subfigure}[b]{0.495\textwidth}
        \centering
        \includegraphics[width=\textwidth]{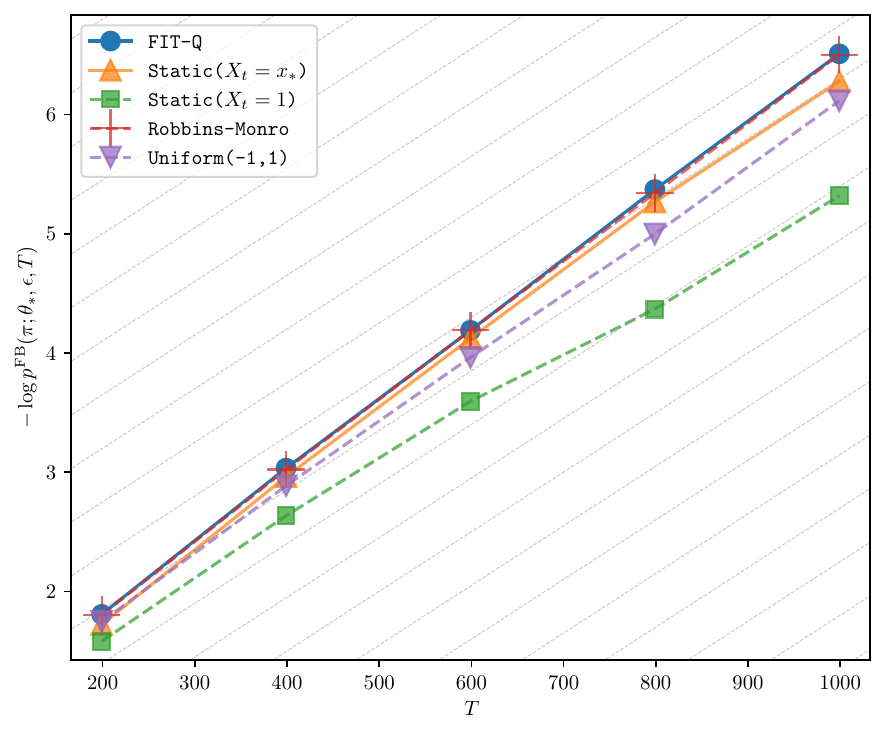}
        \caption{Logistic model}
        \label{fig:log-failure-logistic}
    \end{subfigure}
    \hfill
    \begin{subfigure}[b]{0.495\textwidth}
        \centering
        \includegraphics[width=\textwidth]{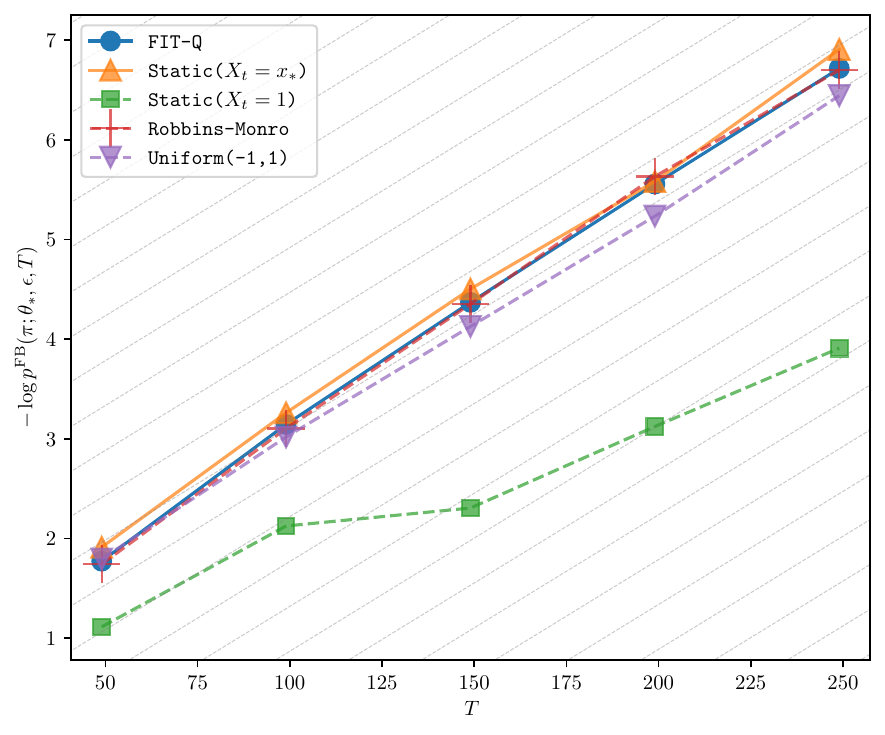}
        \caption{Algebraic-4 model}
        \label{fig:log-failure-algebraic4}
    \end{subfigure}
        \caption{Negative log failure probability under the fixed-budget setting for (a) logistic and (b) algebraic-4 models. The background diagonals correspond to the optimal slope $\frac{1}{2} I(x_*; \theta_*) \epsilon^2$ predicted by Theorem~\ref{thm:FB-bound}.}
    \label{fig:log-failure}
\end{figure}

\newpage
\paragraph{Fixed-confidence setting.}

We evaluate the performance of each algorithm in the fixed-confidence setting, where the goal is to estimate $\theta_*$ with tolerance $\epsilon = 0.2$ and failure probability at most $\delta$. 
We vary $\delta \in \{5 \times 10^{-2}, \dots, 5 \times 10^{-7} \}$, and estimate the expected stopping time $\mathbb{E}[\tau]$ via 3,000 independent replications per setting. 
\textit{All algorithms use the same stopping rule defined in~\eqref{eq:stopping-criteria}, applied to the test statistics introduced in~\eqref{eq:test-stats}.}

The results are reported in Table~\ref{table:stopping-combined}. 
In both models, FIT-Q consistently achieves the lowest expected stopping time among all implementable strategies, closely matching the oracle performance of \texttt{Static($X_t = x_*$)}. 
While the Robbins-Monro method also performs competitively, especially in the logistic case, it lags behind FIT-Q in the algebraic-4 model where the Fisher information is bimodal. 
Non-adaptive strategies such as \texttt{Static($X_t = 1$)} and \texttt{Uniform(-1,1)} incur significantly larger sample complexity across all confidence levels.

\begin{table}[h!]
\centering
\begin{tabular}{llcccccc}
\hline
\textbf{Model} & \textbf{Algorithm} & $\delta=5 \times 10^{-2}$ & $5 \times 10^{-3}$ & $5 \times 10^{-4}$ & $5 \times 10^{-5}$ & $5 \times 10^{-6}$ & $5 \times 10^{-7}$ \\
\hline
\multirow{5}{*}{\textbf{Logistic}} 
& \texttt{FIT-Q} & 791.5 & 1280.9 & 1770.2 & 2259.5 & 2748.4 & 3237.4 \\
& \texttt{Static($X_t = x_*$)} & 784.9 & 1273.8 & 1762.5 & 2251.4 & 2740.2 & 3229.1 \\
& \texttt{Static($X_t = 1$)} & 1002.7 & 1627.3 & 2250.5 & 2874.5 & 3498.6 & 4122.1 \\
& \texttt{Robbins-Monro} & 791.9 & 1281.3 & 1770.7 & 2259.8 & 2748.8 & 3237.8 \\
& \texttt{Uniform(-1,1)} & 849.3 & 1378.6 & 1907.9 & 2437.2 & 2966.5 & 3495.9 \\
\hline
\multirow{5}{*}{\textbf{Algebraic-4}} 
& \texttt{FIT-Q} & 212.2 & 339.9 & 467.9 & 595.5 & 723.2 & 850.8 \\
& \texttt{Static($X_t = x_*$)} & 206.0 & 333.2 & 460.6 & 588.2 & 715.4 & 842.9 \\
& \texttt{Static($X_t = 1$)} & 443.4 & 719.6 & 994.7 & 1265.9 & 1535.0 & 1815.8 \\
& \texttt{Robbins-Monro} & 211.3 & 338.8 & 466.5 & 594.0 & 721.5 & 849.1 \\
& \texttt{Uniform(-1,1)} & 218.1 & 352.9 & 487.8 & 622.6 & 757.5 & 892.3 \\
\hline
\end{tabular}
\caption{
Expected stopping times under the fixed-confidence setting. Each algorithm uses the stopping rule in~\eqref{eq:stopping-criteria} with test statistics from~\eqref{eq:test-stats}. 
For both models, we vary $\delta \in \{5 \times 10^{-2}, 5 \times 10^{-3}, \dots, 5 \times 10^{-7}\}$ with fixed tolerance $\epsilon = 0.2$.
}
\label{table:stopping-combined}
\end{table}

Figure~\ref{fig:stop} shows the relationship between $\mathbb{E}[\tau]$ and $\log(1/\delta)$, plotted on a log-linear scale. 
All curves exhibit approximately linear trends, indicating logarithmic dependence on $1/\delta$. 
Notably, the slope of the \texttt{FIT-Q} curve aligns closely with the inverse of the Fisher information at the optimal query, consistent with the theoretical lower bound in Theorem~\ref{thm:FC-bound}.

\begin{figure}[h!]
    \centering
    \begin{subfigure}[b]{0.49\textwidth}
        \centering
        \includegraphics[width=\textwidth]{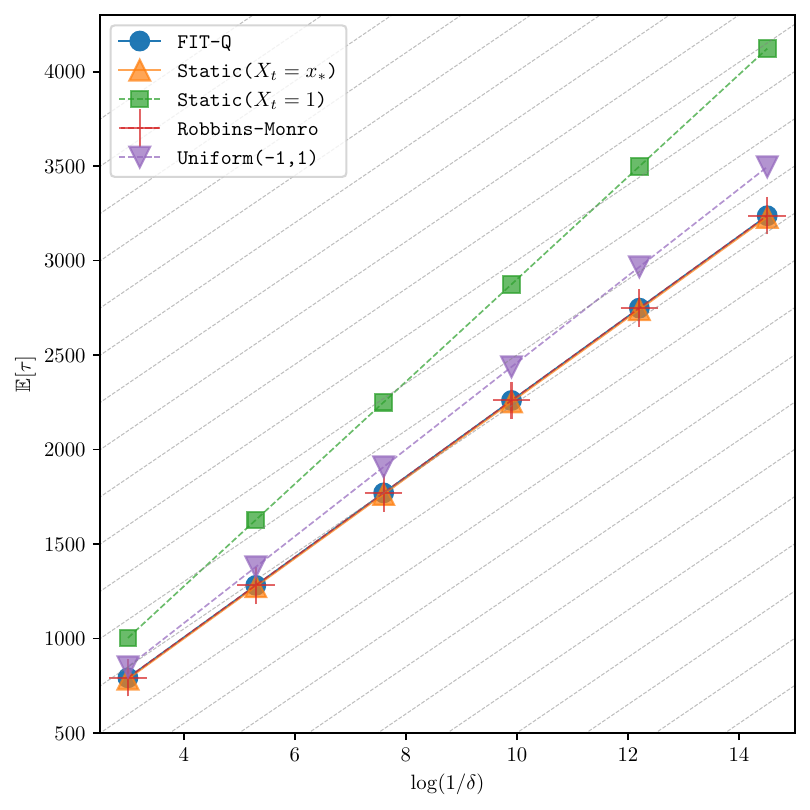}
        \caption{Logistic model}
        \label{figure:stop-logistic}
    \end{subfigure}
    \hfill
    \begin{subfigure}[b]{0.49\textwidth}
        \centering
        \includegraphics[width=\textwidth]{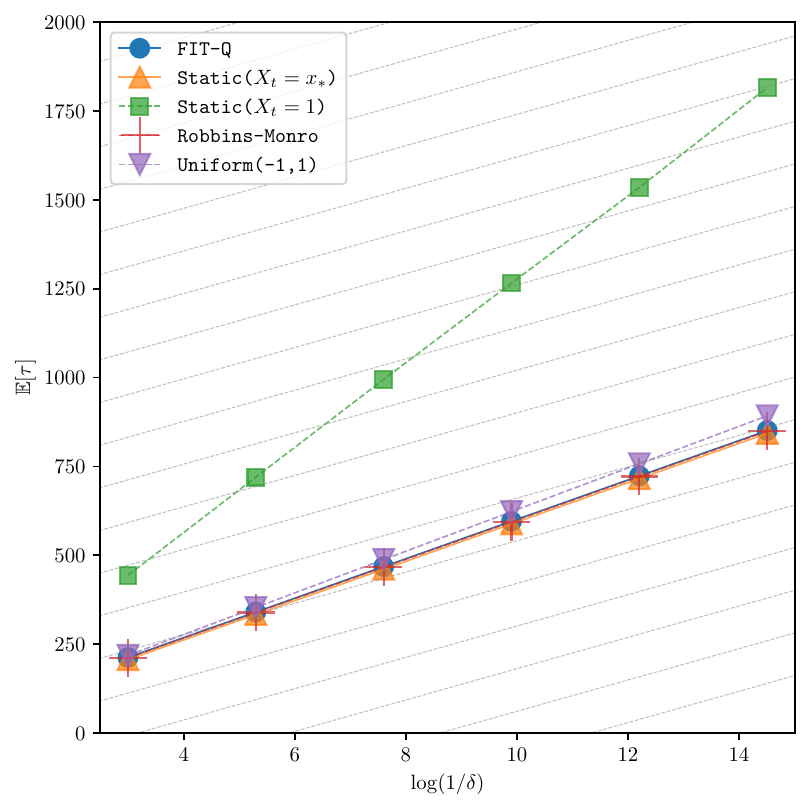}
        \caption{Algebraic4 model}
        \label{figure:stop-algebraic4}
    \end{subfigure}
        \caption{
Expected stopping times under the fixed-confidence setting, plotted against $\log(1/\delta)$ for (a) logistic and (b) algebraic-4 models. 
The background diagonals correspond to the theoretical slope predicted by Theorem~\ref{thm:FC-bound}.
}
    \label{fig:stop}
\end{figure}

\section{Conclusion}\label{section:conclusion}

In this paper, we studied the adaptive testing problem of identifying a candidate's ability parameter from sequential binary responses.
We proposed a simple and intuitive algorithm, \texttt{FIT-Q}, that adaptively selects questions to maximize Fisher information based on a method-of-moments estimate.
Our theoretical analysis demonstrated that this strategy, despite its simplicity, is information-theoretically justified and achieves asymptotic optimality in both fixed-budget and fixed-confidence settings, which was verified by our numerical experiments as well.

While our findings provide strong theoretical support for Fisher information tracking strategies, this work is not without its limitations and suggests several promising directions for future research.

First, our analysis relies on a response model that may be restrictive in some practical scenarios. We assume that the probability of a correct response is governed by a one-dimensional and shift-invariant function of the ability-difficulty gap, i.e., $f(\theta_*-x)$.
An intriguing extension would be to investigate a response function of the form $f({\boldsymbol\theta}_*, {\bf x})$, where both the ability ${\boldsymbol\theta}_*$ and the difficulty ${\bf x}$ are multivariate. This generalization, however, introduces significant theoretical complexities. In particular, when considering the multivariate case, where the Fisher information becomes a matrix, the criterion for optimal query selection---such as maximizing the determinant (D-optimality) or the minimum eigenvalue (E-optimality)---requires careful consideration. Beyond this extension, further research could analyze the sub-optimality of the \texttt{FIT-Q} algorithm under misspecified response functions or develop non-parametric methods relaxing structural assumptions on the response function.

Second, our primary theoretical guarantees are concerned with asymptotic optimality defined within a specific asymptotic regime.
Specifically, for the fixed-budget setting, this regime involves taking the limit as the budget $T \to \infty$ followed by the error margin $\epsilon \to 0$, while for the fixed-confidence setting, the limit is taken as the confidence level $\delta \to 0$ before $\epsilon \to 0$.
A more explicit characterization of this two-dimensional asymptotic behavior, or the derivation of performance bounds that hold uniformly for finite parameters, would constitute a valuable theoretical advancement.

Finally, our framework exclusively utilizes the method of moments estimator (MME).
Although MME offers computational and analytical convenience, other estimators, such as the maximum likelihood estimator (MLE), are widely used and possess desirable statistical properties.
An interesting open question, therefore, is whether the \texttt{FIT-Q} framework can be extended to a broader class of estimators and how the choice of estimator impacts the algorithm's overall efficiency and convergence guarantees.

\newpage
\appendix
\section*{Appendix}

\section{Preliminary Analysis}\label{appendix:lemmas}

\begin{lemma}\label{lemma:KL-approximation}
    The KL-divergence between $\text{Bernoulli}(f(\theta - x))$ and $\text{Bernoulli}(f(\theta - x +\epsilon))$ can be approximated by Fisher information with error of order $O(\epsilon^3)$ uniformly in $x \in \mathcal{X}$, i.e.,
    \begin{equation}
        d\big( f(\theta - x +\epsilon) | f(\theta - x) \big) = \frac{1}{2}I(x; \theta) \epsilon^2 + O(\epsilon^3),
    \end{equation}
    and
    \begin{equation}
        d\big( f(\theta - x) | f(\theta - x +\epsilon) \big) = \frac{1}{2}I(x; \theta) \epsilon^2 + O(\epsilon^3),
    \end{equation}
    where $O(\cdot)$ terms do not have dependence on $x$.
    Similarly, KL-divergence between $\text{Bernoulli}(f(\theta - x))$ and $\text{Bernoulli}(f(\theta -x-\epsilon))$ can be approximated with error of the same order uniformly in $x \in \mathcal{X}$.
\end{lemma}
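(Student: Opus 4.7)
The plan is to reduce the claim to a standard Taylor expansion around $\epsilon=0$ of a smooth one-variable function and then to handle uniformity via a compactness argument on $\mathcal{X}$.

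Fix $\theta\in\mathbb{R}$ and $x\in\mathcal{X}$, set $z\coloneqq \theta-x$, and define
\begin{equation}
g(\epsilon)\coloneqq d\bigl(f(z+\epsilon)\,\big|\,f(z)\bigr).
\end{equation}
Under Assumption~\ref{assumption:model}, $f$ is $C^3$ and strictly between $0$ and $1$ on any compact set of $z$, so $g$ is $C^3$ in $\epsilon$ on a neighborhood of $0$ and all the logarithms that appear below are well-defined. First I would compute $g(0)=0$, and differentiate under the standard form $d(p|q)=p\log(p/q)+(1-p)\log((1-p)/(1-q))$ with $p=f(z+\epsilon)$ and $q=f(z)$ fixed. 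A direct computation gives
\begin{equation}
g'(\epsilon)=f'(z+\epsilon)\,\log\!\left(\frac{f(z+\epsilon)\,(1-f(z))}{f(z)\,(1-f(z+\epsilon))}\right),
\end{equation}
so $g'(0)=0$. Differentiating once more and evaluating at $\epsilon=0$ causes the logarithmic factor to vanish, leaving
\begin{equation}
g''(0)=\frac{f'(z)^2}{f(z)\,(1-f(z))}=I(x;\theta).
\end{equation}

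Next, I would apply Taylor's theorem with the Lagrange remainder, which yields
\begin{equation}
g(\epsilon)=\tfrac{1}{2}I(x;\theta)\,\epsilon^2+\tfrac{1}{6}g'''(\xi)\,\epsilon^3
\end{equation}
for some $\xi=\xi(x,\epsilon)$ between $0$ and $\epsilon$. To upgrade this to the stated $O(\epsilon^3)$ bound \emph{uniformly in $x\in\mathcal{X}$}, I would show that $g'''$ is bounded by a constant independent of $x$ on, say, $\{(x,\epsilon):x\in\mathcal{X},\,|\epsilon|\le 1\}$. Since $\mathcal{X}=[x_L,x_R]$ is compact, $z+\epsilon$ ranges over a compact set $K$; on $K$ the functions $f,f',f'',f'''$ are continuous and hence bounded, and $f,1-f$ are bounded away from zero by $f$'s strict monotonicity together with $f(K)\subset(0,1)$. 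A symbolic expression for $g'''(\epsilon)$ obtained by one further differentiation of $g''$ is a rational function in $f(z+\epsilon),\,1-f(z+\epsilon),\,f'(z+\epsilon),\,f''(z+\epsilon),\,f'''(z+\epsilon)$ and $\log(f(z+\epsilon)(1-f(z))/(f(z)(1-f(z+\epsilon))))$, all of which are uniformly bounded on $K\times[-1,1]$. This yields a constant $C$, depending only on $f,\theta,\mathcal{X}$, with $|g'''(\xi)|\le C$, and the first claim follows.

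The other three cases are handled identically. For $d(f(z)|f(z+\epsilon))$, swapping the arguments merely swaps the roles of $p$ and $q$ in the KL formula; the same Taylor computation produces the same second-order term $\tfrac{1}{2}I(x;\theta)\epsilon^2$ (this is the standard symmetry of the Fisher information as the leading-order KL coefficient), with a $C^3$ remainder bounded uniformly on $\mathcal{X}$ by the same compactness argument. Replacing $\epsilon$ by $-\epsilon$ handles the remaining two statements. The main obstacle I anticipate is purely bookkeeping, namely writing $g'''$ in a form that makes the uniform bound transparent; the core Taylor argument is routine.
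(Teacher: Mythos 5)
Your proposal is correct. Both you and the paper ultimately rely on Taylor's theorem with a Lagrange remainder plus a compactness argument over $\mathcal{X}$ (using that $f$ is $C^3$ and, being strictly increasing into $[0,1]$, stays uniformly bounded away from $0$ and $1$ on compact sets) to get a remainder constant independent of $x$. The difference is in the decomposition. The paper performs two separate, simpler expansions: it Taylor-expands $p\mapsto d(p\mid q)$ around $q$ to third order, where the third derivative has the clean closed form $-1/p^2+1/(1-p)^2$ and is trivially bounded on $[p_{\min},p_{\max}]$, and it separately Taylor-expands $f$ to control $\Delta(x)=|f(\theta-x+\epsilon)-f(\theta-x)|=f'(\theta-x)\epsilon+O(\epsilon^2)$, then combines the two. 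You instead expand the composite map $\epsilon\mapsto d(f(z+\epsilon)\mid f(z))$ directly; your computations of $g'(0)=0$ and $g''(0)=I(x;\theta)$ are correct, and the same symmetry argument correctly handles $d(f(z)\mid f(z+\epsilon))$ since the second derivative of the KL in its second argument at the diagonal also equals $1/(p(1-p))$. The cost of your route is that $g'''$ involves products of $f',f'',f'''$ with the logarithmic factor and reciprocals of $f$ and $1-f$, which you assert is uniformly bounded without writing it out; this is genuinely only bookkeeping given the compactness facts you establish, but it is the one step the paper's factored approach avoids by keeping the KL derivatives and the derivatives of $f$ separate. Either way the argument is sound and yields the stated uniform $O(\epsilon^3)$ bound.
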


\begin{proof}

Consider a KL-divergence between two Bernoulli distributions with probability $p$ and $q$, and treat it as a function about $p$, i.e.
\begin{equation}
    D_q(p) \coloneqq d( p | q) = p \log \frac{p}{q} + (1-p) \log \frac{1-p}{1-q}.
\end{equation}
Then, its derivatives are given as
\begin{gather}
    D_q'(p) = \log \frac{p}{q} - \log \frac{1-p}{1-q},\\
    D_q''(p) = \frac{1}{p} + \frac{1}{1-p},\\
    D_q'''(p) = - \frac{1}{p^2}+ \frac{1}{(1-p)^2}.
\end{gather}
By Taylor's theorem, for some $\xi$ between $p$ and $q$, 
\begin{equation}
    d(p|q) = \frac{(p-q)^2}{2 q(1-q)} + \frac{(p-q)^3}{6}D_q'''(\xi).
\end{equation}
For a fixed $\theta \in \mathbb{R}$, we have a compact problem set $\mathcal{X}$, $p$ and $q$ in our consideration lie in the compact interval $[p_\text{min}, p_\text{max}]$, where $0 < p_\text{min} < p_\text{max} < 1$.
Precisely, two thresholds are given as $ p_\text{min} \coloneqq \min_{x\in\mathcal{X}} f(\theta - x - \epsilon)$ and  $p_\text{max} \coloneqq \max_{x \in \mathcal{X}}f(\theta - x + \epsilon)]$, respectively.
Now, we obtain the bound $C$ on the value of $|D_q'''|$ 
\begin{equation}
    |D_q'''(p)| \leq \frac{1}{p^2} + \frac{1}{(1-p)^2} \leq \frac{1}{p_{\text{min}}^2} + \frac{1}{(1-p_{\max})^2} \eqqcolon C,
\end{equation}
which is independent of $p$.

Now, we claim that $ \Delta(x) \coloneqq |f(\theta - x + \epsilon) - f(\theta - x)|$ can be approximated as $f'(\theta - x) \epsilon$ with error of order $O(\epsilon^2)$ uniformly in $x \in \mathcal{X}$.
Note $\theta - x$ is contained in the compact interval $\theta - \mathcal{X} \coloneqq [\theta - x_R, \theta - x_L]$.
Since $f$ is thrice continuously differentiable, $f'$ and $f''$ are bounded on the compact set, i.e. $|f'| \leq M_1$ and $|f''| \leq 2M_2$.
Consequently, by Taylor's theorem, we can conclude that 
\begin{equation}
     \sup_{x \in \mathcal{X}}\big| \Delta(x) - f'(\theta - x) \epsilon \big| \leq  M_2\epsilon^2,
\end{equation}
i.e., $\Delta(x) = f'(\theta - x) \epsilon + O(\epsilon^2)$ uniformly.
Accordingly, we also get that $\Delta(x)^2 = f'(\theta - x)^2 \epsilon^2 + O(\epsilon^3)$ uniformly since $f'$ is bounded, i.e. $\sup_{x \in \mathcal{X}} | \Delta(x)^2 - f'(\theta - x)^2 \epsilon^2| \leq M_3 \epsilon^3$.
Combining the results together, we get 
\begin{align}
    \left| d\big(f (\theta - x + \epsilon) | f(\theta - x) \big) - \frac{I(x ; \theta)}{2} \epsilon^2 \right| \leq\; & \left| \frac{\Delta(x)^2}{2f(\theta- x) \big(1 - f(\theta-x)\big)} - \frac{f'(\theta - x)^2 \epsilon^2}{2f(\theta- x) \big(1 - f(\theta-x)\big)} \right| + \frac{C}{6} \Delta(x)^3\\
    \leq\; & \left| \frac{\Delta(x)^2- f'(\theta - x)^2 \epsilon^2}{2f(\theta- x) \big(1 - f(\theta-x)\big)} \right| + \frac{C}{6} \left( M_1 \epsilon + M_2 \epsilon^2 \right)^3\\
    \leq\; & \left| \frac{\Delta(x)^2- f'(\theta - x)^2 \epsilon^2}{2f(\theta- x) \big(1 - f(\theta-x)\big)} \right| + \frac{C}{6} \left( M_1 \epsilon + M_2 \epsilon^2 \right)^3\\
    \leq\; & \frac{M_3}{2f(\theta- x) \big(1 - f(\theta-x)\big)} \epsilon^3 + \frac{C}{6}(M_1 + M_2 \epsilon)^3 \epsilon^3,
\end{align}
which implies the desired result.

Similarly, if we define a KL-divergence between two Bernoulli distributions with probability $p$ and $q$ as a function about $q$, i.e., $D_p(q) \coloneq d(p |q)$, its derivatives are given as
\begin{gather}
    D_p'(q) = -\frac{p}{q} + \frac{1-p}{1-q},\\
    D_p''(q) = \frac{p}{q^2} + \frac{1-p}{(1-q)^2},\\
    D_p'''(q) = -2 \frac{p}{q^3} +2 \frac{1-p}{(1-q)^3}.
\end{gather}
By Taylor's theorem, for some $\xi$ between $p$ and $q$,
\begin{equation}
    d(p|q) = \frac{(p-q)^2}{2p(1-p)} + \frac{(p-q)^3}{6} D_q'''(\xi),
\end{equation}
and $D_q'''$ is bounded as 
\begin{equation}
    |D_p'''(q)| = \left| -2 \frac{p}{q^3} +2 \frac{1-p}{(1-q)^3} \right| \leq 2 \left|\frac{1}{q^3} + \frac{1}{(1-q)^3}\right| \leq \frac{2}{p_\text{min}^3} + \frac{2}{(1-p_\text{max})^3}
\end{equation}
regardless of the value of $q$.
Therefore, by the same reason above, we also get the desired result.

\end{proof}

\begin{proposition}[Strong consistency of MME]\label{prop:MME-consistency}
	Let $\hat{\theta}_t$ be method of moment estimator. Under any query rule, $\hat{\theta}_t$ is strongly consistent, i.e., $\hat{\theta}_t \to \theta_*$ almost surely as $t \to \infty$, hence $\hat{\theta}_t \to \theta_*$ in probability.	
\end{proposition}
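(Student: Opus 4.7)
The plan is to establish almost sure convergence by first controlling the random noise in the responses via a martingale strong law, and then exploiting the monotonicity of $f$ to transfer this aggregate convergence into pointwise convergence of the estimator.

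First, define the centered responses $D_s \coloneqq Y_s - f(\theta_* - X_s)$. Since $\mathbb{E}[Y_s \mid \mathcal{F}_{s-1}] = f(\theta_* - X_s)$ under our response model, $\{D_s\}$ is a martingale difference sequence bounded by $1$ in absolute value. A standard martingale SLLN (e.g., Azuma--Hoeffding combined with Borel--Cantelli applied along geometric blocks) then yields
\begin{equation}
    \frac{1}{t}\sum_{s=1}^t \big[Y_s - f(\theta_* - X_s)\big] \longrightarrow 0 \quad \text{a.s.}
\end{equation}

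Next, I would rule out the boundary cases $\hat{\theta}_t \in \{-\infty, +\infty\}$ for large $t$. Because $\mathcal{X} = [x_L, x_R]$ is compact and $f$ is continuous and strictly between $0$ and $1$, the probabilities $f(\theta_* - X_s)$ and $1 - f(\theta_* - X_s)$ are uniformly bounded below by a positive constant. Consequently, the event $\{Y_s = 0 \text{ for all } s\}$ has probability zero, and similarly for the all-ones event. Thus on a probability-one set, the MME is finite for all sufficiently large $t$, and the defining moment condition $\sum_{s=1}^t f(\hat{\theta}_t - X_s) = \sum_{s=1}^t Y_s$ holds. Combining this with the first step gives
\begin{equation}
    \frac{1}{t}\sum_{s=1}^t \big[f(\hat{\theta}_t - X_s) - f(\theta_* - X_s)\big] \longrightarrow 0 \quad \text{a.s.}
\end{equation}

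The main obstacle, which I expect to be the most delicate part, is converting this aggregate convergence into the pointwise statement $\hat{\theta}_t \to \theta_*$, since the summand depends on the random estimator itself and a direct LLN argument is not available. My approach is a contradiction argument using monotonicity and compactness. Suppose there exist $\varepsilon > 0$ and a subsequence $\{t_k\}$ with $\hat{\theta}_{t_k} \geq \theta_* + \varepsilon$ (the case $\hat{\theta}_{t_k} \leq \theta_* - \varepsilon$ is symmetric). By the strict monotonicity of $f$, for every $s$,
\begin{equation}
    f(\hat{\theta}_{t_k} - X_s) - f(\theta_* - X_s) \geq f(\theta_* - X_s + \varepsilon) - f(\theta_* - X_s) \geq m_\varepsilon,
\end{equation}
where $m_\varepsilon \coloneqq \min_{x \in \mathcal{X}} \big[ f(\theta_* - x + \varepsilon) - f(\theta_* - x) \big] > 0$, because the integrand is continuous and strictly positive on the compact set $\mathcal{X}$. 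Averaging over $s \leq t_k$ gives a uniform lower bound of $m_\varepsilon > 0$ along the subsequence, contradicting the a.s. limit. This establishes $\hat{\theta}_t \to \theta_*$ almost surely, which in turn implies convergence in probability by the standard implication.
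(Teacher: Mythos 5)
Your proposal is correct and follows essentially the same route as the paper: a martingale SLLN for $\frac1t\sum_{s=1}^t[Y_s - f(\theta_*-X_s)]$, transfer to $\frac1t\sum_{s=1}^t[f(\hat\theta_t-X_s)-f(\theta_*-X_s)]$ via the moment condition, and a uniform-over-$\mathcal{X}$ monotonicity bound to conclude; your contradiction argument with $m_\varepsilon$ is just the contrapositive of the paper's argument with $g(\Delta)\coloneqq\inf_{x\in\mathcal{X}}|f(\theta_*+\Delta-x)-f(\theta_*-x)|$. One minor imprecision: your justification for ruling out $\hat\theta_t=\pm\infty$ (the all-zeros/all-ones events being null) only covers the case $\inf f=0$ and $\sup f=1$; under Assumption~\ref{assumption:model} one may have $\inf_z f(z)>0$, in which case $\hat\theta_t=-\infty$ can occur without all responses being $0$, so you should instead invoke your own Step~1 (the empirical mean eventually lies in the compact interval $[f(\theta_*-x_R),f(\theta_*-x_L)]\subset(\inf f,\sup f)$), or, as the paper does, avoid the issue entirely by noting that the inequality $\bigl|\frac1t\sum_s(Y_s-f(\theta_*-X_s))\bigr|\ge\bigl|\frac1t\sum_s(f(\hat\theta_t-X_s)-f(\theta_*-X_s))\bigr|$ holds in all three cases, finite or not.
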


\begin{proof}
We first consider the quantity $\sum_{s=1}^{t} [Y_s - f(\theta_* - X_s)]/t$. Since its numerator $\sum_{s=1}^{t} [ Y_s - f(\theta_* - X_s)]$ is a martingale with bounded increments, we have
\begin{equation}
    \frac{\sum_{s=1}^{t} [ Y_s - f(\theta_* - X_s)]}{t} \to 0, \quad \text{almost surely},
\end{equation}
by the martingale strong law of large number.
Note that, according to our definition of MME (see Section \ref{section:algorithm}), we have either $\hat{\theta}_t \in \mathbb{R}$, $\hat{\theta}_t = \infty$, or $\hat{\theta}_t = - \infty$. When $\hat{\theta}_t \in \mathbb{R}$, we have $\sum_{s=1}^{t} Y_s = \sum_{s=1}^{t} f(\hat{\theta}_t - X_s)$; when $\hat{\theta}_t = \infty$, we have $\sum_{s=1}^{t} Y_s \geq \sum_{s=1}^{t} f(\hat{\theta}_t - X_s) \geq \sum_{s=1}^{t} f(\theta_* - X_s)$; and when $\hat{\theta}_t = - \infty$, we have $\sum_{s=1}^{t} Y_s \leq \sum_{s=1}^{t} f(\hat{\theta}_t - X_s) \leq \sum_{s=1}^{t} f(\theta_* - X_s)$. In any case, we have
\begin{equation}
	\left| \frac{\sum_{s=1}^{t} Y_s - f(\theta_* - X_s)}{t} \right| \geq \left| \frac{\sum_{s=1}^{t} f( \hat{\theta}_t - X_s) - f(\theta_* - X_s)}{t} \right|.
\end{equation}
Also note that
\begin{equation}
    \left| \sum_{s=1}^{t} f( \hat{\theta}_t - X_s) - f(\theta_* - X_s) \right| =  \sum_{s=1}^{t} \left| f( \hat{\theta}_t - X_s) - f(\theta_* - X_s) \right|,
\end{equation}
due to the monotonicity of $f(\cdot)$.
Combining these results gives
\begin{equation}
    \frac{\sum_{s=1}^{t} \left| f( \hat{\theta}_t - X_s) - f(\theta_* - X_s) \right| }{t} \to 0 \quad \text{almost surely}.
\end{equation}

Define $g(\Delta) := \inf_{x \in \mathcal{X}} |f(\theta_* + \Delta - x) - f(\theta_*-x)|$ and $\Delta_t := \hat{\theta}_t - \theta_*$.
Since $g(\Delta_t) = \inf_{x \in \mathcal{X}} |f(\hat{\theta}_t - x) - f(\theta_*-x)| \leq | f( \hat{\theta}_t - X_s) - f(\theta_* - X_s) |$ for all $s$, we have $g(\Delta_t) \leq \frac{1}{t} \sum_{s=1}^{t} | f( \hat{\theta}_t - X_s) - f(\theta_* - X_s) |$, and therefore, $g(\Delta_t) \to 0$ almost surely.
Note that $g(\Delta)$ is strictly increasing on $[0,\infty)$, strictly decreasing on $(-\infty,0]$ and $g(0)=0$, due to the compactness of $\mathcal{X}$ and the strict monotonicity of $f(\cdot)$. Therefore, $\Delta_t \to 0$ almost surely, which implies that $\hat{\theta}_t \to \theta_*$ almost surely.

\end{proof}

\begin{lemma}\label{lemma:cgf}
	Consider $Y \sim Bernoulli(p)$. Let $\psi(\lambda; p)\coloneqq \log \mathbb{E} \left[ \exp(\lambda(Y-p))\right] = \log(1-p+pe^\lambda) - \lambda p$. Then, we have
	\begin{enumerate}
		\item $\mathbb{E}\left[\exp(\lambda(Y-p) - \psi(\lambda; p)) \right] = 1$.
		\item $\psi(\lambda; p) \leq \frac{1}{8}\lambda^2$ for any $\lambda \in \mathbb{R}$.
		\item $\psi(\lambda;p) \leq \left( e^\lambda - \lambda - 1 \right) p(1-p)$ and $\psi(-\lambda;p) \leq \left( e^\lambda - \lambda - 1 \right) p(1-p)$ for any $\lambda >0$.
	\end{enumerate}
\end{lemma}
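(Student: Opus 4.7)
}

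\textbf{Part~1} is immediate from the definition of $\psi$. Writing $\psi(\lambda;p) = \log \mathbb{E}[\exp(\lambda(Y-p))]$ and exponentiating gives $\mathbb{E}[\exp(\lambda(Y-p))] = \exp(\psi(\lambda;p))$, from which $\mathbb{E}[\exp(\lambda(Y-p) - \psi(\lambda;p))] = 1$ follows by pulling the constant $\exp(-\psi(\lambda;p))$ out of the expectation.

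For \textbf{Parts~2 and 3}, the unifying plan is to compute $\psi''(\lambda;p)$ explicitly and then bound it two different ways. Differentiating $\psi(\lambda;p) = \log(1-p + p e^\lambda) - \lambda p$ twice and simplifying yields
\begin{equation}
\psi'(\lambda;p) = \frac{p(1-p)(e^\lambda - 1)}{1-p + p e^\lambda}, \qquad \psi''(\lambda;p) = \frac{p(1-p) e^\lambda}{(1-p + p e^\lambda)^2},
\end{equation}
so in particular $\psi(0;p) = \psi'(0;p) = 0$. For Part~2, the AM--GM inequality gives $(1-p+pe^\lambda)^2 \ge 4(1-p)(pe^\lambda) = 4 p(1-p) e^\lambda$, hence $\psi''(\lambda;p) \le \tfrac14$ for every $\lambda \in \mathbb{R}$. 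Integrating twice from $0$ using $\psi(0;p) = \psi'(0;p) = 0$ then gives $\psi(\lambda;p) \le \tfrac18 \lambda^2$, which is Hoeffding's lemma.

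For the first inequality in Part~3, I would use the same $\psi''$ formula but the simpler bound $(1-p + p e^\lambda)^2 \ge 1$, which holds whenever $\lambda \ge 0$ since then $p e^\lambda \ge p$. This gives $\psi''(\lambda;p) \le p(1-p) e^\lambda$ for $\lambda \ge 0$, and integrating twice from $0$ yields $\psi(\lambda;p) \le p(1-p)(e^\lambda - \lambda - 1)$ as desired. For the second inequality, I would exploit the symmetry
\begin{equation}
\psi(-\lambda; p) = \log \mathbb{E}\bigl[e^{-\lambda(Y-p)}\bigr] = \log \mathbb{E}\bigl[e^{\lambda((1-Y) - (1-p))}\bigr] = \psi(\lambda; 1-p),
\end{equation}
which follows because $1-Y \sim \mathrm{Bernoulli}(1-p)$. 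Applying the already-proved bound with $p$ replaced by $1-p$ and noting $(1-p)p = p(1-p)$ then gives $\psi(-\lambda;p) \le (e^\lambda - \lambda - 1) p(1-p)$ for $\lambda > 0$.

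There is no real obstacle here; the only place to be careful is that the bound $(1-p+pe^\lambda)^2 \ge 1$ in Part~3 requires $\lambda \ge 0$, which is precisely why the statement splits into two cases and why the symmetry identity $\psi(-\lambda;p) = \psi(\lambda;1-p)$ is needed to cover the second one.
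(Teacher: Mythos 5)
Your proposal is correct, and for the main inequality of Part~3 it is essentially the paper's argument in different clothing: the paper studies $\varphi(\lambda) \coloneqq \psi(\lambda;p) - (e^\lambda-\lambda-1)p(1-p)$ and shows $\varphi(0)=\varphi'(0)=0$ and $\varphi''<0$ for $\lambda>0$, which is exactly your observation that $\psi''(\lambda;p) = p(1-p)e^\lambda/(1-p+pe^\lambda)^2 \le p(1-p)e^\lambda$ once $1-p+pe^\lambda \ge 1$, followed by integrating twice. Two minor differences are worth noting. For Part~2 the paper simply cites Hoeffding's lemma, whereas you reprove it via the AM--GM bound $(1-p+pe^\lambda)^2 \ge 4p(1-p)e^\lambda$; this is more self-contained but buys nothing new. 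More substantively, the paper's written proof only establishes $\psi(\lambda;p) \le (e^\lambda-\lambda-1)p(1-p)$ and never addresses the claimed bound on $\psi(-\lambda;p)$; your symmetry identity $\psi(-\lambda;p)=\psi(\lambda;1-p)$, combined with the invariance of $p(1-p)$ under $p\mapsto 1-p$, cleanly fills that gap (one could alternatively rerun the difference-function argument for $\lambda<0$, using that $(1-p+pe^\lambda)^2\le 1$ there, but your route is shorter). So your proof is not only correct but slightly more complete than the one in the paper.
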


\begin{proof}
By the definition of $\psi$, we get
\begin{align}
    \mathbb{E} \left[ \exp \left( \lambda(Y-p) - \psi(\lambda; p) \right) \right] = \frac{\mathbb{E} \left[ \exp \left( \lambda(Y-p) \right) \right]}{ \exp \left( \psi(\lambda; p) \right)}=1.
\end{align}
By Hoeffding's lemma, for any $\lambda \in \mathbb{R}$
\begin{equation}
    \mathbb{E} \left[ \exp(\lambda( Y - \mathbb{E}[Y]) ) \right] \leq \exp \left( \frac{\lambda^2}{8} \right),
\end{equation}
which implies the second result.

Define a function
\begin{align}
	\varphi(\lambda) \coloneqq\; & \psi(\lambda; p) -  \left( e^\lambda - \lambda - 1 \right) p(1-p)\\
	=\; &\ln(1 - p +pe^\lambda) - \lambda p - \left( e^\lambda - \lambda - 1 \right) p(1-p).
\end{align}
Note that $\varphi(0) = 0$, then we now claim that $\varphi$ decreases in $\lambda>0$, i.e., $\varphi'(\lambda) < 0$ for $\lambda > 0$.
We have
\begin{align}
	\varphi '(\lambda) =\; & \frac{p e^\lambda}{1 - p + p e^\lambda} - p - \left(e^\lambda - 1 \right) p(1-p)
\end{align}
and
\begin{align}
	\varphi ''(\lambda) =\; & \frac{p e^\lambda}{1 - p + p e^\lambda} - \frac{\big( p e^\lambda \big)^2}{\big(1 - p + p e^\lambda \big)^2} - p(1-p) e^\lambda \\
	=\; & \frac{p (1-p) e^\lambda}{\big(1 - p + p e^\lambda \big)^2} - p(1-p) e^\lambda \\
    =\; & p(1-p)e^\lambda \left[ \frac{1}{(1-p+pe^\lambda)^2} - 1 \right].
\end{align}
Note that $\varphi'(0) = 0$ and $\varphi''(\lambda)<0$ since $1-p + pe^\lambda >0$ for $\lambda >0$.
Therefore, we can conclude that $\varphi'(\lambda)<0$ for $\lambda >0$.
\end{proof}

\begin{lemma}[Ville's inequality]\label{lemma:Ville}
    Let $X_0, X_1, \cdots$ be a nonnegative supermartingale. Then, for any real number $a > 0$,
    \begin{equation}
        \mathbb{P} \left( \sup_{t > 0} X_t \geq a \right) \leq \frac{\mathbb{E}[X_0]}{a},
    \end{equation}
    or equivalently, 
    \begin{equation}
        \mathbb{P} \Big( \exists t \geq 1 : X_t \geq a \Big) \leq \frac{\mathbb{E}[X_0 ]}{a}.
    \end{equation}
\end{lemma}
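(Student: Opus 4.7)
The plan is to prove Ville's inequality by the classical optional-stopping argument applied to the first passage time of the supermartingale above level $a$. Throughout, I let $\{\mathcal{F}_t\}_{t \geq 0}$ denote the filtration with respect to which $\{X_t\}_{t \geq 0}$ is a nonnegative supermartingale, and I treat the "$\exists t \geq 1$" form as the event $\{\sup_{t \geq 0} X_t \geq a\}$ up to handling $X_0$ separately (on the set $\{X_0 \geq a\}$ the bound is immediate from Markov's inequality, so the interesting case is $X_0 < a$).

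First I would introduce the stopping time $\tau_a \coloneqq \inf\{t \geq 0 : X_t \geq a\}$, with the convention $\inf \emptyset = +\infty$. The key observation is that the stopped process $\{X_{t \wedge \tau_a}\}_{t \geq 0}$ is itself a nonnegative supermartingale adapted to $\{\mathcal{F}_t\}$: this follows from the standard fact that stopping a supermartingale at a bounded stopping time preserves the supermartingale property, since $\tau_a \wedge t$ is bounded by $t$. Consequently, for every fixed $t$,
\begin{equation}
    \mathbb{E}\bigl[X_{t \wedge \tau_a}\bigr] \leq \mathbb{E}[X_0].
\end{equation}

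Next I would exploit the structure of $\tau_a$. On the event $\{\tau_a \leq t\}$, by the definition of the first passage time we have $X_{\tau_a} \geq a$, so $X_{t \wedge \tau_a} = X_{\tau_a} \geq a$. Combining this with the nonnegativity of $X_{t \wedge \tau_a}$ on the complementary event $\{\tau_a > t\}$ yields
\begin{equation}
    a \cdot \mathbb{P}(\tau_a \leq t) \leq \mathbb{E}\bigl[X_{t \wedge \tau_a} \mathbf{1}\{\tau_a \leq t\}\bigr] \leq \mathbb{E}\bigl[X_{t \wedge \tau_a}\bigr] \leq \mathbb{E}[X_0].
\end{equation}

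Finally I would pass to the limit $t \to \infty$. Since $\{\tau_a \leq t\}$ is an increasing sequence of events with union $\{\tau_a < \infty\} = \{\sup_{s \geq 0} X_s \geq a\}$, monotone continuity of probability gives $\mathbb{P}(\tau_a \leq t) \uparrow \mathbb{P}(\sup_{s \geq 0} X_s \geq a)$, and dividing by $a$ yields the stated bound. The only delicate point I anticipate is making sure the stopping and the pointwise inequality $X_{\tau_a} \geq a$ are valid when $X_t$ can jump across the threshold rather than hit it exactly; here the argument works because supermartingale stopping requires only the level-crossing, not equality, and nonnegativity lets us discard the $\{\tau_a > t\}$ contribution without losing anything. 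This is essentially bookkeeping, so I do not expect any substantive obstacle beyond ensuring the two bounding steps are clearly separated from the limiting step.
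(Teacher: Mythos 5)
Your proof is correct: it is the standard optional-stopping argument for Ville's inequality (stop at the first passage time above level $a$, use that the stopped process remains a nonnegative supermartingale, bound $a\,\mathbb{P}(\tau_a \leq t)$ by $\mathbb{E}[X_0]$, and let $t \to \infty$). The paper states this lemma as a classical fact and gives no proof of its own, so there is nothing to compare against; the only residual quibble is that $\{\sup_t X_t \geq a\}$ can strictly contain $\{\exists t : X_t \geq a\}$ when the supremum is approached but never attained, a gap the paper's own "equivalently" also glosses over and which is closed by applying your bound at levels $a' < a$ and letting $a' \uparrow a$.
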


\newpage

\section{Proofs of Results in Section \ref{subsec:FB-analysis} (Fixed-Budget Setting)} \label{appendix:proof-FB}

\subsection{Proof of Theorem \ref{thm:FB-bound}}

\begin{proof}
Let $\mathcal{E}_T$ be the event that $\hat{\theta}_T$ falls within the interval $( \theta_* - \epsilon, \theta_* + \epsilon )$, i.e. $\mathcal{E}_T \coloneqq \big\{ \hat{\theta}_T \in ( \theta_* - \epsilon, \theta_* + \epsilon ) \big\}$, 
then the failure probability can be written as $p^\text{FB}(\pi; \theta_*, \epsilon, T) = 1 - \mathbb{P}_{\theta_*}(\mathcal{E}_T)$.

Fix $\theta$ satisfying $| \theta - \theta_*| > \epsilon$.
As $T \to \infty$, by Proposition~\ref{prop:MME-consistency}, we have $\mathbb{P}_{\theta_*}(\mathcal{E}_T) \to 1$, and $\mathbb{P}_{\theta} (\mathcal{E}_T) \to 0$.
Also fix $\eta \in (0,1)$. Then, there exists $t_0$ such that, for all $T > t_0$, $\mathbb{P}_{\theta}(\mathcal{E}_T) \leq \eta < \mathbb{P}_{\theta_*} ( \mathcal{E}_T )$, and thus,
\begin{align}
	d \big( \mathbb{P}_{\theta} ( \mathcal{E}_T) \mid \mathbb{P}_{\theta_*} (\mathcal{E}_T) \big) \geq\;  & d \big( \eta \mid \mathbb{P}_{\theta_*} (\mathcal{E}_T) \big)\\
	=\; & d \big( \eta \mid 1 - p^\text{FB}(\pi; \theta_*, \epsilon, T) \big)\\
	=\; & \eta \log \frac{\eta}{1 - p^\text{FB}(\pi; \theta_*, \epsilon, T)} + (1-\eta) \log \frac{1-\eta}{p^\text{FB}(\pi; \theta_*, \epsilon, T)}\\
	\geq\; & \eta \log \eta + (1-\eta) \log \frac{1-\eta}{p^\text{FB}(\pi; \theta_*, \epsilon, T)}.
\end{align}

On the other hand, by the lemma 2 of \cite{bassamboo2023learning}, we have
\begin{equation}
	\int_{\mathcal{X}} d \big( f(\theta - x) \mid f(\theta_* - x) \big) d\mu_{\theta}^{\pi} \geq d \big( \mathbb{P}_{\theta} ( \mathcal{E}_T) \mid \mathbb{P}_{\theta_*} (\mathcal{E}_T) \big).
\end{equation}
where $\mu_{\theta}^{\pi}$ is a probability measure on the query set $\mathcal{X}$ defined as $\mu_\theta^\pi ( (-\infty, x]) = \mathbb{E}_{\theta}^{\pi_{\epsilon,T}} \left[ \sum_{t=1}^{T} \mathbb{P} ( X_t \leq x \mid \mathcal{F}_{t-1}) \right]$.
Since $\mu_{\theta}^\pi(\mathcal{X}) = T$, we have
\begin{equation}
    \int_{\mathcal{X}} d \big( f(\theta - x) \mid f(\theta_* - x) \big) d\mu_{\theta}^{\pi} \leq T \cdot \sup_{x \in \mathcal{X}} d \big( f(\theta - x) \mid f(\theta_* - x) \big).
\end{equation}
Combining these results, we get that
\begin{equation}
    \eta \log \eta + (1-\eta) \log \frac{1-\eta}{p^\text{FB}(\pi; \theta_*, \epsilon, T)}
    \leq
    T \cdot \sup_{x \in \mathcal{X}} d \big( f(\theta - x) \mid f(\theta_* - x) \big),
\end{equation}
for $T > t_0$, and therefore,
\begin{equation}
    \limsup_{T \to \infty} \frac{1 - \eta}{T} \log \frac{1-\eta}{ p^\text{FB}(\pi; \theta_*, \epsilon, T)} \leq \sup_{x \in \mathcal{X}} d \big( f(\theta - x) \mid f(\theta_* - x) \big).
\end{equation}

Since the choice of $\eta$ was arbitrary, by taking $\eta \searrow 0$, we deduce that
\begin{equation}
	\limsup_{T \to \infty} - \frac{1}{T} \log p^\text{FB}(\pi; \theta_*, \epsilon, T) \leq \sup_{x \in \mathcal{X}} d \big( f(\theta - x) \mid f(\theta_* - x) \big).
\end{equation}
Since this inequality holds for any $\theta$ such that $|\theta - \theta_*| > \epsilon$,
\begin{equation}
    \limsup_{T \to \infty} - \frac{1}{T} \log p^\text{FB}(\pi; \theta_*, \epsilon, T) \leq \inf_{\theta : | \theta - \theta_* | > \epsilon} \sup_{x \in \mathcal{X}} d \big( f(\theta - x) \mid f(\theta_* - x) \big).
\end{equation}

We now establish an upper bound on the right hand side using Fisher information.
Lemma~\ref{lemma:KL-approximation} shows that there exists a constant $C_1 >0$ (possibly $\theta_*$-dependent) such that
\begin{equation}
    \sup_{x \in \mathcal{X}}\left| d \big( f(\theta_* - x + \epsilon) \mid f(\theta_* - x) \big) - \frac{I(x;\theta_*)}{2}\epsilon^2 \right| \leq C_1 \epsilon^3,
\end{equation}
which implies that
\begin{equation}
    \sup_{x \in \mathcal{X}} d \big( f(\theta_* - x + \epsilon) \mid f(\theta_* - x) \big)
        \; \leq \;
        \sup_{x \in \mathcal{X}} \frac{I(x;\theta_*)}{2}\epsilon^2 + C_1 \epsilon^3
        \; = \;
        \frac{I(x_*;\theta_*)}{2}\epsilon^2 + C_1 \epsilon^3.
\end{equation}
Similarly, there exists a constant $C_2 >0$ such that
\begin{equation}
   \sup_{x \in \mathcal{X}} d \big( f(\theta_* - x - \epsilon) \mid f(\theta_* - x) \big)
   \; \leq \; \frac{I(x_*;\theta_*)}{2}\epsilon^2 + C_2 \epsilon^3.
\end{equation}
Therefore,
\begin{align}
    \inf_{\theta: |\theta - \theta_*| >\epsilon}\sup_{x \in \mathcal{X}} d \big(f(\theta - x) \mid f(\theta_* - x) \big)
    &\leq \min_{\theta \in \{\theta_* + \epsilon, \theta_* - \epsilon\}} \left\{ \sup_{x \in\mathcal{X}} d \big( f(\theta - x) \mid f(\theta_* - x) \big) \right\}
    \\&\leq \min \left\{ \frac{I(x_*;\theta_*)}{2}\epsilon^2 + C_1 \epsilon^3, \frac{I(x_*;\theta_*)}{2}\epsilon^2 + C_2 \epsilon^3 \right\}
    \\&= \frac{I(x_*;\theta_*)}{2}\epsilon^2 + \min\{ C_1, C_2 \} \epsilon^3,
\end{align}
which concludes the proof.

\end{proof}

\subsection{Proof of Theorem \ref{thm:FIT-Q-FB}}

\begin{proof}
To obtain the desired result, it is sufficient to show that
\begin{equation}
	\limsup_{\epsilon \to 0} \limsup_{T \to \infty} \frac{1}{\epsilon^2 T} \log \mathbb{P} \left( | \hat{\theta}_T - \theta_* | \geq \epsilon \right) \leq - \frac{I(x_*;\theta_*)}{2}.
\end{equation}
Denote $g_* \coloneqq f' (\theta_* - x_*)$, and $v_* \coloneqq f(\theta_* - x_*) \big( 1 - f(\theta_* - x_*) \big)$, then the maximum Fisher information $I(x_* ; \theta_*)$ can be expressed as 
\begin{equation}
    I(x_*; \theta_*) = \frac{f'(\theta_* - x_*)^2}{ f(\theta_* - x_*) \big(1 - f(\theta_* - x_*) \big)} = \frac{g_*^2}{v_*}.
\end{equation}

\paragraph{Step 1. Failure probability decomposition.}
Define the good event $\mathcal{E}_0$ as
\begin{equation}
	\mathcal{E}_0 \coloneqq \left\{ |\hat{\theta}_t - \theta_* | < \sqrt{\epsilon}, \;\;\;\; \forall t \geq t_0 \right\}.
\end{equation}
We can decompose the failure probability as
\begin{align}
	\mathbb{P} \left( | \hat{\theta}_T - \theta_* | \geq \epsilon \right) =\; & \mathbb{P} \left( | \hat{\theta}_T - \theta_* | \geq \epsilon , \;\; \mathcal{E}_0 \right) + \mathbb{P} \left( | \hat{\theta}_T - \theta_* | \geq \epsilon , \;\; \mathcal{E}_0^c \right)\\
	\leq\; & \mathbb{P} \left( | \hat{\theta}_T - \theta_* | \geq \epsilon , \;\; \mathcal{E}_0 \right) + \mathbb{P} \left( \mathcal{E}_0^c \right)\\
	\leq\; & \underbrace{ \mathbb{P} \left( \hat{\theta}_T \geq \theta_* + \epsilon, \;\; \mathcal{E}_0 \right) }_{\text{(A)}} + \underbrace{ \mathbb{P} \left( \hat{\theta}_T \leq \theta_* - \epsilon, \;\; \mathcal{E}_0 \right) }_{\text{(B)}} + \underbrace{ \mathbb{P} \left( \mathcal{E}_0^c \right) }_{\text{(C)}}.
\end{align}
We bound each term after introducing useful notation.

Define the following constants:
\begin{equation}
	\kappa(\epsilon) \coloneqq  \inf_{x \in \mathcal{X}} \min_{ \theta \in \{ \theta_* + \sqrt{\epsilon}, \theta_* - \sqrt{\epsilon}\}} |f(\theta - x) - f(\theta_* - x) |,
\end{equation}
\begin{equation}
	g_0(\epsilon) \coloneqq \inf_{x : | x - x_*| \leq \sqrt{\epsilon}} \min_{\theta \in \{ \theta_* + \epsilon, \theta_* - \epsilon \} } \frac{ \big| f(\theta' - x ) - f(\theta_* - x) \big|}{\epsilon},
\end{equation}
and
\begin{equation}
	v_0(\epsilon) \coloneqq \sup_{x: |x - x_*| \leq \sqrt{\epsilon}} f(\theta_* - x) \big( 1- f(\theta_* - x) \big).
\end{equation}
Further define a threshold $t_0(\epsilon,T)$:
\begin{equation}
	t_0(\epsilon, T) \coloneqq \left\lceil \frac{g_*^2}{4v_* \kappa(\epsilon)^2} \times \epsilon^2 T \right\rceil.
\end{equation}
Note that we have
\begin{equation}
    \lim_{\epsilon \to 0} g_0(\epsilon) = g_*
    , \quad
    \lim_{\epsilon \to 0} v_0(\epsilon) = v_*
    , \quad
    \text{and} \quad
    \lim_{\epsilon \to 0} \lim_{T \to \infty} \frac{t_0(\epsilon, T)}{T} = 0.
\end{equation}

\paragraph{Step 2. Bounding the term (A).}
Note that for any $\lambda > 0$,
\begin{align}
    &\left\{ \hat{\theta}_T \geq \theta_* + \epsilon \right\}
    \\&\stackrel{(a)}{=} \left\{ \sum_{t=1}^{T} f(\hat{\theta}_T - X_t) \geq \sum_{t=1}^{T} f(\theta_* - X_t + \epsilon) \right\}
    \\&\stackrel{(b)}{\subseteq} \left\{ \sum_{t=1}^{T} Y_t  \geq \sum_{t=1}^{T} f(\theta_* - X_t + \epsilon)  \right\}
    \\&= \left\{ \sum_{t=1}^{T} \big( Y_t - f(\theta_* - X_t)\big) \geq \sum_{t=1}^{T} \big( f(\theta_* - X_t + \epsilon) - f(\theta_* - X_t) \big) \right\}
    \\&= \left\{ \lambda \cdot \sum_{t=1}^{T} \big( Y_t - f(\theta_* - X_t)\big) \geq \lambda \cdot \sum_{t=1}^{T} \big( f(\theta_* - X_t + \epsilon) - f(\theta_* - X_t) \big) \right\}
    \\&= \left\{ \sum_{t=1}^{T} \Big[ \lambda\big( Y_t - f(\theta_* - X_t)\big) - \psi(\lambda, f(\theta_* - X_t)) \Big] \geq \sum_{t=1}^{T} \Big[ \lambda\big( f(\theta_* - X_t + \epsilon) - f(\theta_* - X_t) \big) - \psi(\lambda, f(\theta_* - X_t)) \Big] \right\},
\end{align}
where step (a) uses the monotonicity of $f$, and step (b) uses the fact that, if $\hat{\theta}_T \geq \theta_* + \epsilon$, then $\hat{\theta}_T = \infty$ or $\hat{\theta}_T \in \mathbb{R}$, which implies that
\begin{equation}
    \sum_{t=1}^{T} Y_t \geq \sum_{t=1}^{T} f(\hat{\theta}_T - X_t).
\end{equation}

By Lemma \ref{lemma:cgf}, we have
\begin{equation}
	\psi(\lambda; f(\theta_* - X_t) ) \leq \left( e^\lambda - \lambda - 1\right) f(\theta_* - X_t) \big( 1 - f(\theta_* - X_t) \big).
\end{equation}
Under $\mathcal{E}_0$, for any $t \geq t_0 +1$, since $|X_t - x_*| < \sqrt{\epsilon}$, by the definition of constants $g_0$ and $v_0$, we get 
\begin{equation}
	\lambda \big( f(\theta_* - X_t + \epsilon) - f(\theta_* - X_t) \big) - \psi(\lambda; f(\theta_* - X_s)) \geq \lambda g_0 \epsilon - \left(e^\lambda - \lambda - 1 \right) v_0.
\end{equation}
On the other hand, $\mathcal{E}_0$ does not contain any implication about the behavior of $X_t$ for $t \leq t_0$, but we can exploit the universal facts that $f(\theta_* - X_t + \epsilon) - f(\theta_* - X_t) \geq 0$ and $\psi(\lambda; f(\theta_* - X_t) ) \leq \lambda^2/8$.
Later, it is shown that the effect of these loose bounds becomes negligible in the asymptotic regime.
Combining them, we finally get 
\begin{align}
	\mathbb{P} \Big(& \hat{\theta}_T \geq \theta_* + \epsilon, \;\; \mathcal{E}_0 \Big)\\
	 \leq\; & \mathbb{P} \left( \sum_{t=1}^{T} \Big[ \lambda \big( Y_t - f(\theta_* - X_t) \big) - \psi(\lambda; f(\theta_* - X_t)) \Big] \geq \sum_{t=1}^{T} \Big[ \lambda \big(f(\theta_* - X_t + \epsilon) - f(\theta_* - X_t) \big) - \psi(\lambda; f(\theta_* - X_t)) \Big], \;\; \mathcal{E}_0 \right)\\
	\leq\; & \mathbb{P} \left( \sum_{t=1}^{T} \Big[ \lambda \big( Y_t - f(\theta_* - X_t) \big) - \psi(\lambda; f(\theta_* - X_t)) \Big] \geq \sum_{t=1}^{t_0}\left[ 0 - \frac{1}{8} \lambda^2 \right] + \sum_{t=t_0 + 1}^{T} \left( \lambda g_0 \epsilon - \left(e^\lambda - \lambda - 1\right) v_0 \right) \right)\\
	=\; & \mathbb{P} \left( \exp\left( \sum_{t=1}^{T} \Big[ \lambda \big( Y_t - f(\theta_* - X_t) \big) - \psi(\lambda; f(\theta_* - X_t)) \Big] \right) \geq \exp \left( -\frac{1}{8}\lambda^2 t_0 + (T - t_0)  \left( \lambda g_0 \epsilon - \left( e^\lambda - \lambda - 1 \right) v_0 \right) \right) \right)\\
	\leq\; & \frac{1}{\exp \Big( -\frac{1}{8}\lambda^2 t_0 + (T - t_0)  \left( \lambda g_0 \epsilon - \left(e^\lambda - \lambda - 1\right) v_0 \right) \Big)}.
\end{align}
The last inequality is due to Ville's inequality (Lemma \ref{lemma:Ville}).

\paragraph{Step 3. Bounding the term (B).}
Similarly,
\begin{align}
	\mathbb{P} \Big(& \hat{\theta}_T \leq \theta_* - \epsilon, \;\; \mathcal{E}_0 \Big)\\
	 =\; & \mathbb{P} \left( \sum_{t=1}^{T} \Big[ \lambda \big( f(\theta_* - X_t) - Y_t \big) - \psi(-\lambda; f(\theta_* - X_t)) \Big] \geq \sum_{t=1}^{T} \Big[ \lambda \big(f(\theta_* - X_t + \epsilon) - f(\theta_* - X_t) \big) - \psi(-\lambda; f(\theta_* - X_t)) \Big], \;\; \mathcal{E}_0 \right)\\
	\leq\; & \mathbb{P} \left( \sum_{t=1}^{T} \Big[ \lambda \big( f(\theta_* - X_t) - Y_t \big) - \psi(-\lambda; f(\theta_* - X_t)) \Big] \geq \sum_{t=1}^{t_0}\left[ 0 - \frac{1}{8} \lambda^2 \right] + \sum_{t=t_0 + 1}^{T} \left( \lambda g_0 \epsilon - \left( e^\lambda - \lambda - 1\right) v_0 \right) \right)\\
	=\; & \mathbb{P} \left( \exp\left( \sum_{t=1}^{T} \Big[ \lambda \big( f(\theta_* - X_t) - Y_t \big) - \psi(-\lambda; f(\theta_* - X_t)) \Big] \right) \geq \exp \left( -\frac{1}{8}\lambda^2 t_0 + (T - t_0)  \left( \lambda g_0 \epsilon - \left( e^\lambda - \lambda - 1\right)v_0 \right) \right) \right)\\
	\leq\; & \frac{1}{\exp \left( -\frac{1}{8}\lambda^2 t_0 + (T - t_0)  \left( \lambda g_0 \epsilon - \left( e^\lambda - \lambda - 1\right) v_0 \right) \right)}.
\end{align}

\paragraph{Step 4. Bounding the term (C).}
Following from Proposition \ref{prop:maximal}, we have
\begin{align}
	\mathbb{P} \left( \mathcal{E}_0^c \right) =\; & \mathbb{P} \left( \exists t \geq t_0 : |\hat{\theta}_t - \theta_* | \geq \sqrt{\epsilon} \right)\\
	\leq\; & 2\exp \left( - 2 \kappa^2 t_0 \right)\\
	=\; & 2\exp \left( - 2 \kappa^2 \left\lceil \frac{g_*^2}{4v_* \kappa^2} \times \epsilon^2 T \right\rceil\right)\\
	\leq\; & 2\exp \left( - \frac{g_*^2}{2v_* } \times \epsilon^2 T \right).
\end{align}

\paragraph{Step 5. Computing the limit.}
Combining above results, we deduce that
\begin{align}
	\mathbb{P} \left( | \hat{\theta}_T - \theta_* | \geq \epsilon \right) =\; & \mathbb{P} \left( | \hat{\theta}_T - \theta_* | \geq \epsilon, \;\; \mathcal{E}_0 \right) + \mathbb{P} \left( \mathcal{E}_0^c \right) \\
	\leq\; & 2 \exp \left(-\gamma(\epsilon, T) \right) + 2\exp \left( - \frac{g_*^2}{2v_* } \times \epsilon^2 T \right),
\end{align}
where $\gamma(\epsilon, T) \coloneqq -\frac{1}{8}\lambda^2 t_0 + (T - t_0)  \left( \lambda g_0 \epsilon - \left( e^\lambda - \lambda - 1\right) v_0 \right)$.
Here, we set $\lambda \coloneqq g_0 \epsilon / v_0$, then we have $\lim_{\epsilon \to 0} \lambda / \epsilon = g_* /v_*$, and thus
\begin{align}
    \limsup_{\epsilon \to 0}\limsup_{T \to \infty} \frac{-\gamma(\epsilon,T)}{\epsilon^2 T} =\; & \limsup_{\epsilon \to 0}\limsup_{T \to \infty} \frac{\frac{1}{8}\lambda^2 t_0 - (T - t_0)  \left( \lambda g_0 \epsilon - \left( e^\lambda - \lambda - 1\right) v_0 \right)}{\epsilon^2 T}\\
    =\; & \limsup_{\epsilon \to 0} \frac{-\lambda g_0 \epsilon + \left(e^\lambda - \lambda - 1\right) v_0}{\epsilon^2}\\
    =\; & \limsup_{\epsilon \to 0} \left[ -\frac{\lambda g_0}{\epsilon} + \frac{e^\lambda - \lambda- 1}{\epsilon^2} v_0\right]\\
    =\; & -\frac{g_*}{v_*} g_* + \frac{1}{2} \left(\frac{g_*}{v_*}\right)^2 v_*\\
    =\; & -\frac{g_*^2}{2v_*}.
\end{align}
We also have
\begin{equation}
    \limsup_{\epsilon\to0}\limsup_{T \to \infty} \frac{1}{\epsilon^2 T} \log \exp\left( - \frac{g_*^2}{2v_*} \times \epsilon^2 T \right) = - \frac{g_*^2}{2v_*}.
\end{equation}

Finally, since
	\begin{equation}
		\limsup_{n \to \infty} \frac{1}{n}\log \big( a_n + b_n \big) \leq \max \left\{ \limsup_{n \to \infty} \frac{1}{n} \log a_n, \limsup_{n \to \infty} \frac{1}{n} \log b_n \right\},
	\end{equation}
for any real-valued sequences
$(a_n)_{n\in\mathbb{N}}$ and $(b_n)_{n\in\mathbb{N}}$, we conclude that
\begin{equation}
    \limsup_{\epsilon \to 0}\limsup_{T \to \infty} \frac{1}{\epsilon^2 T} \log \mathbb{P} \left( | \hat{\theta}_T - \theta_* | \geq \epsilon \right) \leq - \frac{g_*^2}{2v_*} = -\frac{I(x_*; \theta_*)}{2},
\end{equation}
which is equivalent to
\begin{equation}
    \liminf_{T \to \infty} - \frac{1}{ T} \log \mathbb{P} \left( | \hat{\theta}_T - \theta_* | \geq \epsilon \right) \geq  \frac{I(x_*; \theta_*)}{2} \epsilon^2 + o(\epsilon^2).
\end{equation}
\end{proof}

\newpage
\section{Proofs of Results in Section \ref{subsec:FC-analysis} (Fixed-Confidence Setting)}\label{appendix:proof-FC}

\subsection{Proof of Theorem \ref{thm:FC-bound}}

\begin{proof}
Fix $\theta$ satisfying $| \theta - \theta_*| > \epsilon$, and pick $\eta \coloneqq | \theta - \theta_*| - \epsilon > 0$. Define an event $\mathcal{E}_\theta \coloneqq \big\{ \hat{\theta}_{\tau} \not\in (\theta - \epsilon, \theta + \epsilon) \big\}$.

For any FC-consistent algorithm $\pi$, we have  $\mathbb{P}_{\theta}^{\pi_{\epsilon,\delta}}(\mathcal{E}_\theta) \leq \delta$ by Condition \ref{cond:FC-defn-correctness}, and $\mathbb{P}_{\theta_*}^{\pi_{\epsilon,\delta}} ( | \theta_{\tau} - \theta_* | < \eta ) \to 1$ as $\delta \to 0$ by Condition \ref{cond:FC-defn-stability}.
Since $\{ | \theta_{\tau} - \theta_* | < \eta \} \subset \mathcal{E}_\theta$, we get that $\mathbb{P}_{\theta_*}^{\pi_{\epsilon,\delta}}(\mathcal{E}_\theta) \to 1$ as $\delta \to 0$.
Therefore, for sufficiently small $\delta$, we have $\mathbb{P}_{\theta_*}^{\pi_{\epsilon,\delta}} (\mathcal{E}_\theta) \geq \delta \geq \mathbb{P}_{\theta}^{\pi_{\epsilon,\delta}}(\mathcal{E}_\theta)$, and hence $d \big( \mathbb{P}_{\theta_*}^{\pi_{\epsilon,\delta}} (\mathcal{E}_\theta) \mid \mathbb{P}_{\theta}^{\pi_{\epsilon,\delta}}(\mathcal{E}_\theta) \big) \geq d \big( \mathbb{P}_{\theta_*}^{\pi_{\epsilon,\delta}} (\mathcal{E}_\theta) \mid \delta \big)$.
Therefore, with $p_\delta \coloneqq \mathbb{P}_{\theta_*}^{\pi_{\epsilon,\delta}} (\mathcal{E}_\theta)$,
\begin{align}
    \liminf_{\delta \to 0} \frac{d \big( \mathbb{P}_{\theta_*}^{\pi_{\epsilon,\delta}} (\mathcal{E}_\theta) \mid \mathbb{P}_{\theta}^{\pi_{\epsilon,\delta}}(\mathcal{E}_\theta) \big)}{\log(1/\delta)}
    &\geq \liminf_{\delta \to 0} \frac{d \big( p_\delta \mid \delta  \big)}{\log(1/\delta)}
    \\&= \liminf_{\delta \to 0} \frac{ p_\delta \log \frac{p_\delta}{\delta} + (1-p_\delta) \log \frac{1-p_\delta}{1-\delta}}{\log(1/\delta)}
    \\&= 1.
\end{align}

Let $\mu_{\theta_*}^{\pi_{\epsilon,\delta}}$ be the measure defined on the query set $\mathcal{X}$, satisfying $
    \mu_{\theta_*}^{\pi_{\epsilon,\delta}}( (-\infty, x] ) = \mathbb{E}_{\theta_*}^{\pi_{\epsilon,\delta}}\big[ \sum_{t=1}^\tau \mathbb{P}( X_t \leq x \mid \mathcal{F}_{t-1} ) \big]
$ for all $x \in \mathbb{R}$.
Since $\int_{\mathcal{X}} d\mu_{\theta_*}^{\pi_{\epsilon,\delta}} = \mathbb{E}_{\theta_*}^{\pi_{\epsilon,\delta}}[\tau]$ by the definition, we can define a distribution $F_{\theta_*}^{\pi_{\epsilon,\delta}}$ on $\mathcal{X}$ corresponding to the measure $\mu_{\theta_*}^{\pi_{\epsilon,\delta}}$ as $F_{\theta_*}^{\pi_{\epsilon,\delta}} = \mu_{\theta_*}^{\pi_{\epsilon,\delta}}/\mathbb{E}_{\theta_*}^{\pi_{\epsilon,\delta}}[\tau]$.
Then, we have
\begin{align}
    \mathbb{E}_{\theta_*}^{\pi_{\epsilon,\delta}} \left[ \tau \right]
    &=
    \frac{ \int_{\mathcal{X}} d \big( f(\theta_* - x) \mid f(\theta - x) \big) d\mu_{\theta_*}^{\pi_{\epsilon,\delta}} }{\int_{\mathcal{X}} d \big( f(\theta_* - x) \mid f(\theta - x) \big) dF_{\theta_*}^{\pi_{\epsilon,\delta}}}
    \\&\geq 
    \frac{ \int_{\mathcal{X}} d \big( f(\theta_* - x) \mid f(\theta - x) \big) d\mu_{\theta_*}^{\pi_{\epsilon,\delta}} }{ \sup_{x \in \mathcal{X}} d \big( f(\theta_* - x) \mid f(\theta - x) \big)}
    \\&\geq \frac{ d \big( \mathbb{P}_{\theta_*}^{\pi_{\epsilon,\delta}} ( \mathcal{E}_\theta) \mid \mathbb{P}_{\theta}^{\pi_{\epsilon,\delta}} (\mathcal{E}_\theta) \big) }{ \sup_{x \in \mathcal{X}} d \big( f(\theta_* - x) \mid f(\theta - x) \big)},
\end{align}
where the last inequality is due to lemma 2 of \citet{bassamboo2023learning}.
Combining this with the above result gives
\begin{equation}
    \liminf_{\delta \to 0} \frac{\mathbb{E}_{\theta_*}^{\pi_{\epsilon,\delta}} \left[ \tau \right]}{\log(1/\delta)}
    \geq 
    \liminf_{\delta \to 0}\frac{d \big(\mathbb{P}_{\theta_*}^{\pi_{\epsilon,\delta}} (\mathcal{E}_\theta) \mid \mathbb{P}_{\theta}^{\pi_{\epsilon,\delta}} (\mathcal{E}_\theta) \big)}{\log(1/\delta)} \left[ \sup_{x \in \mathcal{X}} d \big(f(\theta_* - x) \mid f(\theta - x) \big) \right]^{-1}
    \geq \left[ \sup_{x \in \mathcal{X}} d \big(f(\theta_* - x) \mid f(\theta - x) \big) \right]^{-1}.
\end{equation}
Since this inequality holds for any $\theta$ such that $|\theta - \theta_*| > \epsilon$, we have
\begin{equation}
    \liminf_{\delta \to 0} \frac{\mathbb{E}_{\theta_*}^{\pi_{\epsilon,\delta}} \left[ \tau \right]}{\log(1/\delta)}
    \geq \left[ \inf_{\theta: |\theta - \theta_*| > \epsilon} \sup_{x \in \mathcal{X}} d \big(f(\theta_* - x) \mid f(\theta - x) \big) \right]^{-1}.
\end{equation}
Similarly to the proof of Theorem~\ref{thm:FB-bound}, we can show that $\inf_{\theta: |\theta - \theta_*| > \epsilon} \sup_{x \in \mathcal{X}} d \big(f(\theta_* - x) \mid f(\theta - x) \big) = \frac{I(x_*;\theta_*)}{2} \epsilon^2 + O(\epsilon^3)$, which concludes the proof.\footnote{The only minor difference is that we here have $d \big(f(\theta_* - x) \mid f(\theta - x) \big)$ instead of $d \big(f(\theta - x) \mid f(\theta_* - x) \big)$, but Lemma~\ref{lemma:KL-approximation} also covers this case.}
\end{proof}

\newpage

\subsection{Proof of Theorem \ref{thm:FIT-Q-FC-consistency}}

\begin{lemma}\label{lemma:ts-convergence}
    Suppose that the function $\phi( \lambda_*, \cdot)$ satisfies Condition~\ref{cond:FC-cons-quasi-convex} of Theorem~\ref{thm:FIT-Q-FC-consistency}.
    Further assume that $\hat{\theta}_t \to \theta_*$ and $X_t \to x_*$ almost surely as $t \to \infty$.
    Then, the test statistics satisfies
    \begin{equation}
        \lim_{t \to \infty} \frac{Z_t^\epsilon}{t} = \min_{z \in \{z_*-\epsilon, z_* + \epsilon\}} \big[\lambda_* |f(z) - f(z_*)| - \phi(\lambda_*, f(z))\big]
        , \quad \text{almost surely.}
    \end{equation}
\end{lemma}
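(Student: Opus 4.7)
The plan is to express $Z_t^\epsilon/t$ as a pointwise minimum of two empirical averages, show that each average converges almost surely to a deterministic limit given by the continuous functional evaluated at $(\theta_*, x_*)$, and then invoke continuity of $\min$ to conclude.

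First I would define, for the fixed $\lambda_*$ and $\epsilon$, the functions
\begin{equation}
    h_\pm(\eta, x) \coloneqq \lambda_*\, \bigl|\,f(\eta - x) - f(\eta \pm \epsilon - x)\,\bigr| \;-\; \phi\bigl(\lambda_*,\, f(\eta \pm \epsilon - x)\bigr),
\end{equation}
so that $Z_t^\epsilon / t = \min\bigl(A_t^{(+)},\, A_t^{(-)}\bigr)$, where $A_t^{(\pm)} \coloneqq \frac{1}{t}\sum_{s=1}^{t} h_\pm(\hat\theta_t, X_s)$. Since $f$ is continuous with $f(\cdot)\in[0,1]$ (Assumption~\ref{assumption:model}\ref{cond:C2}) and $\phi(\lambda_*,\cdot)$ is continuous on $[0,1]$ by Condition~\ref{cond:FC-cons-quasi-convex}, each $h_\pm$ is continuous on $\mathbb{R}^2$ and uniformly bounded; call this bound $M$ and write $h_*^{(\pm)} \coloneqq h_\pm(\theta_*, x_*) = \lambda_* |f(z_*) - f(z_*\pm\epsilon)| - \phi(\lambda_*, f(z_*\pm\epsilon))$, using $\theta_* - x_* = z_*$.

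The core step is to show $A_t^{(\pm)} \to h_*^{(\pm)}$ almost surely. The one subtlety, which is also where I expect the main work to lie, is that the first argument of $h_\pm(\hat\theta_t, X_s)$ depends on $t$ (through the running estimate $\hat\theta_t$) rather than on $s$, so the summands are not a fixed functional of the sample path and a bare Cesaro argument does not apply. I would handle this by a standard early/late split. Given any $\varepsilon' > 0$, uniform continuity of $h_\pm$ on a compact neighborhood of $(\theta_*, x_*)$ yields $\rho > 0$ such that $|h_\pm(\eta, x) - h_*^{(\pm)}| < \varepsilon'$ whenever $\max(|\eta-\theta_*|, |x-x_*|) < \rho$. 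By the hypothesized almost-sure convergences $\hat\theta_t \to \theta_*$ and $X_s \to x_*$, there is a random index $N$ with $|\hat\theta_t - \theta_*| < \rho$ for all $t \geq N$ and $|X_s - x_*| < \rho$ for all $s \geq N$. For every $t \geq N$ one then has
\begin{equation}
    \bigl|A_t^{(\pm)} - h_*^{(\pm)}\bigr| \;\leq\; \frac{1}{t}\sum_{s=1}^{N-1}\bigl|h_\pm(\hat\theta_t, X_s) - h_*^{(\pm)}\bigr| \;+\; \frac{1}{t}\sum_{s=N}^{t}\bigl|h_\pm(\hat\theta_t, X_s) - h_*^{(\pm)}\bigr| \;\leq\; \frac{2M(N-1)}{t} + \varepsilon',
\end{equation}
where the early tail is absorbed by uniform boundedness of $h_\pm$ and the late tail by uniform continuity. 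Passing $t \to \infty$ and then $\varepsilon' \downarrow 0$ along a countable sequence yields $A_t^{(\pm)} \to h_*^{(\pm)}$ almost surely.

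Finally, since $(a,b)\mapsto \min(a,b)$ is continuous, $Z_t^\epsilon/t = \min(A_t^{(+)}, A_t^{(-)}) \to \min(h_*^{(+)}, h_*^{(-)})$ almost surely, and the right-hand side is precisely $\min_{z\in\{z_*-\epsilon,\,z_*+\epsilon\}}\bigl[\lambda_*|f(z)-f(z_*)| - \phi(\lambda_*, f(z))\bigr]$, as claimed. The rest of the argument is entirely routine once the early/late split is in place; that split is the only nontrivial ingredient, and it is specifically what lets the time-varying first argument $\hat\theta_t$ coexist with the time-indexed second argument $X_s$ in a single sample average.
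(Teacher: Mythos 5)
Your proposal is correct and follows essentially the same route as the paper's proof: the same decomposition of $Z_t^\epsilon/t$ into a minimum of two empirical averages of a continuous bounded function of $(\hat\theta_t, X_s)$, the same early/late split exploiting boundedness for the initial terms and continuity near $(\theta_*, x_*)$ for the tail, and the same passage through the minimum at the end. No gaps.
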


\begin{proof}

Given condition \ref{cond:FC-cons-quasi-convex}, $\phi(\lambda_*, \cdot)$ is continuous, then a function defined as
\begin{equation}
    G_1(\theta, x) \coloneqq \lambda_* \big| f(\theta - x + \epsilon) - f(\theta - x) \big| - \phi(\lambda_*, f(\theta - x + \epsilon))
\end{equation}
is also continuous in $\theta$ and $x$, and bounded, i.e., there exists $M$ such that $|G_1| \leq M$.
Due to the continuity of $G_1$, for any $\xi >0$, there is $\eta$ such that if $|\theta - \theta_*| < \eta$ and $| x - x_* | < \eta$, then
\begin{equation}
    \big| G_1(\theta, x) -  G_1(\theta_*, x_* )\big| < \xi.
\end{equation}
For any fixed sample path, there exist $t_1(\eta)$ such that $|\hat{\theta}_t - \theta_* | < \eta $ for $t \geq t_1(\eta)$ and $t_2(\eta)$ such that $|X_t - x_* |< \eta$ for $t \geq t_2(\eta)$ for any $\eta<0$.
For any $t \geq t_0 \coloneqq \max\{ t_1(\eta), t_2(\eta) \}$, we have $|\hat{\theta}_t - \theta_*| <\eta$  and $|X_t - x_* | < \eta$, and thus
\begin{align}
    \left| \frac{1}{t} \sum_{s=1}^{t} G_1(\hat{\theta}_t, X_s) - G_1(\theta_*, x_*) \right| \leq \frac{t_0 - 1}{t} M + \frac{1}{t}\sum_{s =t_0}^{t} \xi.
\end{align}
Letting $t \to \infty$ makes the right hand side converge to $\xi$.
Since $\xi$ is arbitrary, we deduce that
\begin{equation}
    \lim_{t \to \infty} \frac{1}{t} \sum_{s=1}^{t} G_1(\hat{\theta}_t, X_s)
    = G_1(\theta_*, x_* ),
\end{equation}
almost surely.
Similarly, a function defined as 
\begin{equation}
    G_2(\theta, x) \coloneqq \lambda_*  \big| f(\theta - x - \epsilon) - f(\theta - x) \big| - \phi(\lambda_*, f(\theta - x -\epsilon))
\end{equation}
has the same convergence result, i.e., given that $\hat{\theta}_t \to \theta_*$ and $X_t \to x_*$ almost surely,
\begin{equation}
    \lim_{t \to \infty} \frac{1}{t} \sum_{s=1}^{t} G_1(\hat{\theta}_t, X_s)
    = G_1(\theta_*, x_* ),
\end{equation}
almost surely.
Finally,
\begin{align}
    \lim_{t \to \infty} \frac{Z_t^\epsilon}{t}
    &= \lim_{t \to \infty} \frac{1}{t} \min_{\theta \in \{ \hat{\theta}_t + \epsilon, \hat{\theta}_t - \epsilon \}}  \sum_{s=1}^{t} \left[ \lambda_* \left| f(\hat{\theta}_t - X_s) - f(\theta - X_s) \right| - \phi(\lambda_* , f(\theta - X_s) ) \right]
    \\&= \lim_{t \to \infty} \frac{1}{t} \min \left\{ \sum_{s=1}^{t} G_1(\hat{\theta}_t , X_s), \sum_{s=1}^{t} G_2(\hat{\theta}_t , X_s) \right\}
    \\&= \min\left\{ \lim_{t \to \infty} \frac{1}{t} \sum_{s=1}^{t} G_1(\hat{\theta}_t, X_s), \; \lim_{t \to \infty} \frac{1}{t} \sum_{s=1}^{t} G_2(\hat{\theta}_t, X_s ) \right\}
    \\& = \min\left\{ G_1(\theta_*, x_*), \; G_2(\theta_*, x_*) \right\}
    \\&= \min_{\theta \in \{ \theta_* + \epsilon, \theta_* - \epsilon \}}  \left[ \lambda_* \left| f(\theta_* - x_*) - f(\theta - x_*) \right| - \phi(\lambda_* , f(\theta - x_*) ) \right]
    \\&= \min_{z \in \{z_* + \epsilon, z_* - \epsilon\} }\left[ \lambda_* \left| f(z_*) - f(z) \right| - \phi(\lambda_* , f(z) ) \right].
\end{align}

\end{proof}

\begin{proof}[\textbf{Proof of Theorem \ref{thm:FIT-Q-FC-consistency}}]

We want to show that \texttt{FIT-Q} algorithm, denoted by $\pi_{\epsilon,\delta}^\texttt{FIT-Q}$, satisfies that for any $\theta_* \in \mathbb{R}$, $\epsilon \in \mathbb{R}^+$ and $\delta \in (0,1)$,
\begin{enumerate}[label=(\alph*)]
    \item Its stopping time $\tau$ is finite almost surely.
    \item Its failure probability does not exceed $\delta$.
    \begin{equation}
        \mathbb{P}_{\theta_*}^{\pi_{\epsilon,\delta}} \left( | \hat{\theta}_\tau - \theta_* | > \epsilon \right) \leq \delta.
    \end{equation}
    \item For any $\eta \in \mathbb{R}_+$,
    \begin{equation}
        \lim_{\delta \to 0} \mathbb{P}_{\theta_*}^{\pi_{\epsilon, \delta}^\texttt{FIT-Q}} \left( | \hat{\theta}_\tau - \theta_* | > \eta \right) = 0.
    \end{equation}
\end{enumerate}
We begin the proof by fixing $\theta_*$, $\epsilon$ and $\delta$.

\paragraph{Step 1. Showing that $\tau < \infty$ almost surely.}
Define
\begin{equation}
    c_* \coloneqq \min_{z \in \{z_*-\epsilon, z_* + \epsilon\}} \big[\lambda_* |f(z) - f(z_*)| - \phi(\lambda_*, f(z))\big].
\end{equation}
Condition~\ref{cond:FC-cons-positive} implies that $c_* > 0$, and Lemma \ref{lemma:ts-convergence} implies that $Z_t^\epsilon/t$ converges to $c_*$ almost surely.
If $\tau = \infty$, then $Z_t^\epsilon < \log\frac{2}{\delta}$ for all $t$, so that $Z_t^\epsilon/t$ should converge to $0$, which is a contradiction.
Therefore, $\tau < \infty$.

\paragraph{Step 2. Showing that $\mathbb{P}_{\theta_*}^{\pi_{\epsilon,\delta}} \left( | \hat{\theta}_\tau - \theta_* | > \epsilon \right) \leq \delta$.}
We begin by decomposing the failure probability:
\begin{align}
    \mathbb{P}\left(  | \hat{\theta}_\tau - \theta_* | > \epsilon \right)
    &\leq \mathbb{P}\left( \exists t : \, Z_t^\epsilon \geq \log \frac{2}{\delta}, \, |\hat{\theta}_t - \theta_* | > \epsilon \right)
    \\&\leq \underbrace{ \mathbb{P}\left( \exists t : \, Z_t^\epsilon \geq \log \frac{2}{\delta}, \, \theta_* > \hat{\theta}_t + \epsilon \right) }_{\text{(A)}}
    + \underbrace{ \mathbb{P}\left( \exists t : \, Z_t^\epsilon \geq \log \frac{2}{\delta}, \, \theta_* < \hat{\theta}_t - \epsilon \right) }_{\text{(B)}}.
\end{align}
We bound each term.

Define $Z_t^\text{in}(\theta)$ as
\begin{equation}
    Z_t^\text{in}(\theta) := \sum_{s=1}^{t} \left[ \lambda_* \big| f(\hat{\theta}_t - X_s) - f(\theta - X_s) \big| - \phi(\lambda_*, f(\theta - X_s)) \right].
\end{equation}
By Condition~\ref{cond:FC-cons-quasi-convex}, each summand $\lambda_* \big| f(\hat{\theta}_t - X_s) - f(\theta - X_s) \big| - \phi(\lambda_*, f(\theta - X_s))$ as a function $\theta$ is quasi-convex and minimized at $\theta=\hat{\theta}_t$, and therefore, so is $Z_t^\text{in}(\theta)$.
That is, $Z_t^\text{in}(\theta)$ is non-increasing on $(-\infty, \hat{\theta}_t]$ and non-decreasing on $[\hat{\theta}_t, \infty)$.

\textbf{To bound the term (A)}, fix $t$ and consider an event $\{ \theta_* > \hat{\theta}_t + \epsilon 
 \}$.
On this event, we have
\begin{align}
    Z_t^\epsilon
    &\stackrel{\text{(a)}}{=} \min\left\{ Z_t^\text{in}(\hat{\theta}_t-\epsilon), Z_t^\text{in}(\hat{\theta}_t+\epsilon) \right\}
    \\&\leq Z_t^\text{in}(\hat{\theta}_t+\epsilon)
    \\&\stackrel{\text{(b)}}{\leq} Z_t^\text{in}(\theta_*)
    \\&= \sum_{s=1}^{t} \left[ \lambda_* \big( f(\theta_* - X_s) - f(\hat{\theta}_t - X_s) \big) - \phi(\lambda_*, f(\theta_* - X_s)) \right]
    \\&\stackrel{\text{(c)}}{\leq}  \sum_{s=1}^{t}\left[ \lambda_* \big( f(\theta_* - X_s) - Y_s \big) - \phi(\lambda_*, f(\theta_* - X_s))\right],
\end{align}
where step (a) uses the definition of $Z_t^\epsilon$, and step (b) uses the fact that $Z_t^\text{in}(\cdot)$ is non-decreasing on $[\hat{\theta}_t, \infty)$ and $\theta_* \geq \hat{\theta}_t + \epsilon$ on this event, and step (c) uses the property of MME.
Define 
\begin{equation}
    M_t^+ := \exp\left( \sum_{s=1}^{t}\left[ \lambda_* \big( f(\theta_* - X_s) - Y_s \big) - \phi(\lambda_*, f(\theta_* - X_s))\right] \right).
\end{equation}
Due to Condition~\ref{cond:FC-cons-cgf-dominate}, $\{M_t^+\}_{t \in \mathbb{N}}$ is a nonnegative supermartingale such that $M_0^+ =1$.
Combining these results, we deduce that
\begin{align}
    \mathbb{P}\left( \exists t : \, Z_t^\epsilon \geq \log \frac{2}{\delta}, \, \theta_* > \hat{\theta}_t + \epsilon \right)
    &\leq \mathbb{P}\left( \exists t : \, \log M_t^+ \geq \log \frac{2}{\delta}, \, \theta_* > \hat{\theta}_t + \epsilon \right)
    \\&= \mathbb{P}\left( \exists t : \, M_t^+ \geq \frac{2}{\delta}, \, \theta_* > \hat{\theta}_t + \epsilon \right)
    \\&\leq \mathbb{P}\left( \exists t : \, M_t^+ \geq \frac{2}{\delta} \right)
    \\&\leq \frac{\delta}{2},
\end{align}
where the last inequality follows from Ville's inequality (Lemma~\ref{lemma:Ville}).

\textbf{To bound the term (B)}, consider an event $\{ \theta_* < \hat{\theta}_t - \epsilon \}$. We can show that, on this event,
\begin{align}
    Z_t^\epsilon
    &\leq Z_t^\text{in}(\hat{\theta}_t-\epsilon)
    \\&\leq Z_t^\text{in}(\theta_*)
    \\&= \sum_{s=1}^{t} \left[ \lambda_* \big( f(\hat{\theta}_t - X_s) - f(\theta_* - X_s) \big) - \phi(\lambda_*, f(\theta_* - X_s)) \right]
    \\&\leq  \sum_{s=1}^{t}\left[ \lambda_* \big( Y_s - f(\theta_* - X_s) \big) - \phi(\lambda_*, f(\theta_* - X_s))\right]
    \\&= \log M_t^-,
\end{align}
where
\begin{equation}
    M_t^- := \exp\left( \sum_{s=1}^{t}\left[ \lambda_* \big( Y_s - f(\theta_* - X_s) \big) - \phi(\lambda_*, f(\theta_* - X_s))\right] \right).
\end{equation}
Since $\{M_t^-\}_{t \in \mathbb{N}}$ is a nonnegative supermartingale such that $M_0^- = 1$, we have
\begin{align}
    \mathbb{P}\left( \exists t : \, Z_t^\epsilon \geq \log \frac{2}{\delta}, \, \theta_* < \hat{\theta}_t - \epsilon \right)
    \leq \mathbb{P}\left( \exists t : \, M_t^- \geq \frac{2}{\delta} \right)
    \leq \frac{\delta}{2}.
\end{align}

\paragraph{Step 3. Showing that $\lim_{\delta \to 0} \mathbb{P}_{\theta_*}^{\pi_{\epsilon, \delta}^\texttt{FIT-Q}} \left( | \hat{\theta}_\tau - \theta_* | > \eta \right) = 0$ for any $\eta > 0$.}
Condition~\ref{cond:FC-cons-cgf-dominate} implies that $\phi(\lambda_*, \cdot)$ is a non-negative function.
Therefore, for any $\theta$,
\begin{align}
    Z_t^\text{in}(\theta) 
    &= \sum_{s=1}^{t} \left[ \lambda_* \big| f(\hat{\theta}_t - X_s) - f(\theta - X_s) \big| - \phi(\lambda_*, f(\theta - X_s)) \right]
    \\&\leq \sum_{s=1}^{t}  \lambda_* \big| f(\hat{\theta}_t - X_s) - f(\theta - X_s) \big|
    \\&\leq \lambda_*t.
\end{align}
That is, $Z_t^\epsilon$ cannot grow faster than $\lambda_*$, and thus, $\tau \geq \log (2/\delta)/ \lambda_*$.

Note that \texttt{FIT-Q} query rule does not depend on $\epsilon$ or $\delta$. Therefore, we can consider a sequence of stopping times $\{ \tau_\delta \}_{\delta > 0}$, where $\tau_\delta$ denotes the stopping time given $\delta$, that are defined on the same probability space and share the same test statistic process $\{Z_t^\epsilon\}_{t \geq 0}$.
Above argument shows that when $\delta \to 0$, $\tau_\delta \to \infty$ almost surely.
Finally, by the strong consistency of MME (Proposition \ref{prop:MME-consistency}), we derive that $\hat{\theta}_{\tau_\delta} \to \theta_*$ almost surely as $\delta \to 0$, which concludes the proof.
\end{proof}

\newpage

\subsection{Proof of Theorem \ref{thm:FIT-Q-FC-optimality}}

\begin{proposition}\label{prop:maximal}
    Under any query rule, the following holds for any $t_0 \in \mathbb{N}$ and $\eta>0$:
	\begin{equation}
		\mathbb{P} \left( \exists t \geq t_0 : |\hat{\theta}_t - \theta_* | \geq \eta \right) \leq 2 \exp \big(- 2\kappa_\eta^2 t_0 \big),
	\end{equation}
    where $\hat{\theta}_t$ is the MME, and $\kappa_\eta \coloneqq \inf_{x \in \mathcal{X}} \min_{\theta \in \{\theta_* + \eta, \theta_* - \eta\}} | f(\theta - x) - f(\theta_* - x) |$.
\end{proposition}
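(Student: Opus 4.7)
The plan is to reduce the event of large MME deviation to a uniform tail event for the bounded-increment martingale $M_t \coloneqq \sum_{s=1}^t \bigl(Y_s - f(\theta_* - X_s)\bigr)$, and then apply Ville's inequality (Lemma~\ref{lemma:Ville}) to an exponentially tilted supermartingale. The essential feature is that $\{M_t\}$ is a martingale under \emph{any} adaptive query rule, which is precisely what enables a statement uniform over all query policies.

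For the reduction, suppose first that $\hat{\theta}_t \geq \theta_* + \eta$. As observed in the proof of Proposition~\ref{prop:MME-consistency}, the defining condition of the MME---together with its boundary behavior when $\hat{\theta}_t = +\infty$---yields $\sum_{s=1}^{t} Y_s \geq \sum_{s=1}^{t} f(\hat{\theta}_t - X_s)$. Combining this with the monotonicity of $f$ and the definition of $\kappa_\eta$ gives
\begin{equation}
    M_t = \sum_{s=1}^{t} \bigl(Y_s - f(\theta_* - X_s)\bigr) \geq \sum_{s=1}^{t}\bigl(f(\theta_* + \eta - X_s) - f(\theta_* - X_s)\bigr) \geq \kappa_\eta t.
\end{equation}
A symmetric argument shows $-M_t \geq \kappa_\eta t$ whenever $\hat{\theta}_t \leq \theta_* - \eta$. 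Hence the target event is contained in $\{\exists t \geq t_0 : M_t \geq \kappa_\eta t\} \cup \{\exists t \geq t_0 : -M_t \geq \kappa_\eta t\}$, and it suffices to bound each of these by $\exp(-2\kappa_\eta^2 t_0)$.

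For the first event, Lemma~\ref{lemma:cgf} (in particular the Hoeffding-type bound $\psi(\lambda;p) \leq \lambda^2/8$) implies that $L_t \coloneqq \exp\bigl(\lambda M_t - \lambda^2 t / 8\bigr)$ is a nonnegative supermartingale with $L_0 = 1$ for every $\lambda \in \mathbb{R}$. Choosing $\lambda = 4\kappa_\eta$ produces $\lambda \kappa_\eta - \lambda^2/8 = 2\kappa_\eta^2$, so on $\{M_t \geq \kappa_\eta t\}$ one has $L_t \geq \exp(2\kappa_\eta^2 t) \geq \exp(2\kappa_\eta^2 t_0)$ whenever $t \geq t_0$. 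Ville's inequality then yields
\begin{equation}
    \mathbb{P}\bigl(\exists t \geq t_0 : M_t \geq \kappa_\eta t\bigr) \leq \mathbb{P}\bigl(\exists t : L_t \geq \exp(2\kappa_\eta^2 t_0)\bigr) \leq \exp(-2\kappa_\eta^2 t_0),
\end{equation}
and the identical argument applied to $-M_t$ controls the opposite direction; a union bound produces the factor of $2$.

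There is no serious analytic obstacle here; the key structural observation is that the deviation threshold $\kappa_\eta t$ grows linearly in $t$, so a single fixed tilt $\lambda = 4\kappa_\eta$ already drives the tilted supermartingale past a level that itself grows exponentially in $t$. Consequently, no time peeling or union bound over $t \geq t_0$ is needed---Ville's inequality alone does the job, and this is what produces the clean bound $2\exp(-2\kappa_\eta^2 t_0)$ rather than a geometric-sum analogue. The only detail that deserves care is the handling of the boundary cases $\hat{\theta}_t \in \{\pm\infty\}$ in the reduction step, but this is exactly the case analysis already carried out in the proof of Proposition~\ref{prop:MME-consistency} and transfers verbatim.
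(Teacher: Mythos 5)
Your proposal is correct and follows essentially the same route as the paper's proof: the same reduction of $\{\hat{\theta}_t \geq \theta_* + \eta\}$ to $\sum_{s\leq t}(Y_s - f(\theta_*-X_s)) \geq \kappa_\eta t$ via the MME moment condition and monotonicity of $f$, the same tilt $\lambda = 4\kappa_\eta$ with the Hoeffding bound $\psi(\lambda;p)\leq\lambda^2/8$ from Lemma~\ref{lemma:cgf}, and the same application of Ville's inequality plus a union bound over the two tails. The only cosmetic difference is that the paper keeps the exact cumulant term $\psi(\lambda; f(\theta_*-X_s))$ inside the exponential (yielding an exact martingale) and absorbs the $\lambda^2/8$ bound into the deterministic threshold, whereas you fold $\lambda^2 t/8$ directly into the exponent to obtain a supermartingale; both are valid and yield the identical bound.
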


\begin{proof}

We can decompose the probability as:
\begin{equation}
    \mathbb{P} \left( \exists t \geq t_0 : |\hat{\theta}_t - \theta_* | \geq \eta \right) = \underbrace{ \mathbb{P} \left( \exists t \geq t_0 : \hat{\theta}_t \geq \theta_* + \eta \right) }_{\text{(A)}} + \underbrace{ \mathbb{P} \left( \exists t \geq t_0 : \hat{\theta}_t \leq \theta_* - \eta \right) }_{\text{(B)}}.
\end{equation}
In what follows, we show that both (A) and (B) are bounded by $\exp \big(- 2\kappa_\eta^2 t_0 \big)$.

\paragraph{Step 1. Bounding the term (A).}
For any $\lambda > 0$, we have
\begin{align}
    &\left\{ \hat{\theta}_t \geq \theta_* +\eta \right\}
    \\&\stackrel{\text{(a)}}{=} \left\{ \sum_{s=1}^{t} f(\hat{\theta}_t - X_s) \geq \sum_{s=1}^{t} f(\theta_* - X_s + \eta) \right\}
    \\&\stackrel{\text{(b)}}{\subseteq} \left\{ \sum_{s=1}^{t} Y_s \geq \sum_{s=1}^{t} f(\theta_* - X_s + \eta) \right\}
    \\&= 	\left\{ \sum_{s=1}^{t} \Big[ \lambda \big(Y_s - f(\theta_* - X_s) \big) - \psi(\lambda; f(\theta_* - X_s)) \Big] \geq \sum_{s=1}^{t} \Big[ \lambda \big( f(\theta_* - X_s + \eta) - f(\theta_* - X_s) \big) - \psi(\lambda; f(\theta_* - X_s)) \Big] \right\}
    \\&\stackrel{\text{(c)}}{\subseteq} \left\{ \sum_{s=1}^{t} \Big[ \lambda \big(Y_s - f(\theta_* - X_s) \big) - \psi(\lambda; f(\theta_* - X_s)) \Big] \geq \sum_{s=1}^{t} \left(\lambda \kappa_\eta -\frac{1}{8}\lambda^2 \right)  \right\},
\end{align}
where step (a) uses the monotonicity of $f(\cdot)$, step (b) uses the fact that $\sum_{s=1}^{t} Y_s \geq \sum_{s=1}^{t} f(\hat{\theta}_t - X_s)$ when $\hat{\theta}_t \geq \theta_* +\eta$, and step (c) uses the definition of $\kappa_\eta$ together with Lemma \ref{lemma:cgf}.

Pick $\lambda = 4\kappa_\eta$ so that $\lambda \kappa_\eta -\frac{1}{8}\lambda^2 = 2\kappa_\eta^2$, and define
\begin{equation}
	M_t^+ \coloneqq \exp \left( \sum_{s=1}^{t} \Big[ \lambda \big( Y_s - f(\theta_* - X_s) \big) - \psi(\lambda; f(\theta_* - X_s)) \Big] \right).
\end{equation}
Then, the above result reduces to
\begin{equation}
    \left\{ \hat{\theta}_t \geq \theta_* +\eta \right\}
    \subseteq
    \left\{ \log M_t^+ \geq 2 \kappa_\eta^2 t \right\}
    = \left\{ M_t^+ \geq \exp(2 \kappa_\eta^2 t) \right\}.
\end{equation}
Since $(M_t^+)_{t \in \mathbb{N}}$ is a martingale with $M_0^+=1$, by Ville's inequality (Lemma~\ref{lemma:Ville}), we conclude that
\begin{equation}
    \mathbb{P} \left( \exists t \geq t_0 : \hat{\theta}_t \geq \theta_* + \eta \right)
    \leq \mathbb{P} \left( \exists t \geq t_0 : M_t^+ \geq \exp(2 \kappa_\eta^2 t) \right)
    \leq \exp\left( - 2\kappa_\eta^2 t_0 \right).
\end{equation}

\paragraph{Step 2. Bounding the term (B).}

Similarly to above, we can show that, for any $\lambda > 0$,
\begin{equation}
	\left\{ \hat{\theta}_t \leq \theta_* - \eta \right\} \subseteq \left\{ \sum_{s=1}^{t} \Big[ \lambda \big( f(\theta_* - X_s) - Y_s \big) - \psi(\lambda; f(\theta_* - X_s)) \Big] \geq \sum_{s=1}^{t} \left(\lambda \kappa_\eta -\frac{1}{8}\lambda^2\right)  \right\}.
\end{equation}
Picking $\lambda = 4\kappa_\eta$ and defining
\begin{equation}
    M_t^- \coloneqq \exp \left( \sum_{s=1}^{t} \Big[ \lambda \big( f(\theta_* - X_s) - Y_s \big) - \psi(-\lambda; f(\theta_* - X_s)) \Big] \right)
\end{equation}
gives
\begin{equation}
    \mathbb{P} \Big( \exists t \geq t_0 : \hat{\theta}_t \leq \theta_* - \eta \Big)
    \leq \mathbb{P} \left( \exists t \geq t_0 : M_t^- \geq \exp(2 \kappa_\eta^2 t) \right)
    \leq \exp\left( - 2\kappa_\eta^2 t_0 \right).
\end{equation}
\end{proof}

\newpage

\begin{proposition}
    Consider the setting of Theorem~\ref{thm:FIT-Q-FC-optimality}.
    Then, for any $\epsilon > 0$, there exists $\delta_0 > 0$ such that $\sup_{\delta \in (0, \delta_0)} \mathbb{E}_{\theta_*}^{\pi_{\epsilon,\delta}^\texttt{FIT-Q}} \left[ \frac{ \tau^2 }{\log(1/\delta)^2 } \right] < \infty$.
\end{proposition}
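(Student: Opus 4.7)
The plan is to establish an exponential tail bound $\mathbb{P}(\tau > t) \le 2 \exp(-\beta t)$ for all $t \ge L_\delta$, with $\beta > 0$ a constant depending only on $\epsilon$ and $L_\delta = O(\log(1/\delta))$, and then integrate. Writing
\begin{equation}
    \mathbb{E}[\tau^2] \;=\; \sum_{t=0}^\infty (2t+1)\,\mathbb{P}(\tau > t)
\end{equation}
and splitting the sum at $L_\delta$ gives $\mathbb{E}[\tau^2] \le L_\delta^2 + \sum_{t \ge L_\delta}(2t+1)\cdot 2e^{-\beta t}$, where the tail sum is bounded by a constant independent of $\delta$. Dividing by $\log(1/\delta)^2$ and using $L_\delta/\log(1/\delta) \to 2/c_*$ (with $c_*$ defined below) then yields the claim.

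The crux is to show that on a high-probability event where the MME has stabilized near $\theta_*$, $Z_t^\epsilon$ grows linearly at a rate close to
\begin{equation}
    c_* \;\coloneqq\; \min_{z \in \{z_* - \epsilon, z_* + \epsilon\}} \bigl[\lambda_* |f(z) - f(z_*)| - \phi(\lambda_*, f(z))\bigr],
\end{equation}
which is strictly positive by Condition~\ref{cond:FC-cons-positive}. Fix small constants $\eta_0 > 0$ and $\alpha \in (0, 1/2)$ to be chosen, and define $F_t^\alpha \coloneqq \{|\hat{\theta}_s - \theta_*| \le \eta_0 \text{ for all } s \ge \lceil \alpha t\rceil\}$. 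Since $x_* \in \mathcal{X}$ and $\text{proj}_{\mathcal{X}}$ is $1$-Lipschitz, on $F_t^\alpha$ we also have $|X_s - x_*| \le \eta_0$ for every $s \ge \lceil \alpha t\rceil + 1$. For each $\theta \in \{\hat{\theta}_t \pm \epsilon\}$, split the sum defining $Z_t^\epsilon$ at index $\lceil \alpha t\rceil$. Each summand is bounded in absolute value by some constant $C$ (since $\lambda_* \le \bar{\lambda}$ and $\phi(\lambda_*, \cdot)$ is bounded on $[0,1]$), so the first $\lceil \alpha t\rceil$ terms contribute at least $-C\alpha t - C$. For the remaining terms, continuity together with the bounds on $|\hat\theta_t - \theta_*|$ and $|X_s - x_*|$ yields---for $\eta_0$ small enough---that each is within $c_*/4$ of its limit $\lambda_*|f(z_*) - f(z_* \mp \epsilon)| - \phi(\lambda_*, f(z_* \mp \epsilon)) \ge c_*$, hence at least $3c_*/4$. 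Choosing any $\alpha$ with $3c_*(1-\alpha)/4 - C\alpha \ge c_*/2$ (for instance $\alpha \le c_*/(3c_* + 4C)$) then gives $Z_t^\epsilon \ge (c_*/2)\,t - C'$ on $F_t^\alpha$ for some $\delta$-independent constant $C'$.

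Setting $L_\delta \coloneqq \lceil (2\log(2/\delta) + 2C')/c_* \rceil$ makes $Z_t^\epsilon \ge \log(2/\delta)$ on $F_t^\alpha$ for every $t \ge L_\delta$, so $\{\tau > t\} \subseteq (F_t^\alpha)^c$. Proposition~\ref{prop:maximal} then gives $\mathbb{P}((F_t^\alpha)^c) \le 2\exp(-2\kappa_{\eta_0}^2 \alpha t)$, completing the tail bound with $\beta \coloneqq 2\kappa_{\eta_0}^2 \alpha > 0$; all constants depend only on $\epsilon$ and the response function, not on $\delta$. The main obstacle is the middle paragraph: the early summands of $Z_t^\epsilon$ may be negative before the MME stabilizes, and must not consume the linear growth coming from the later summands. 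The argument resolves this via a joint calibration of $\alpha$ (smaller weakens the maximal inequality but reduces the ``bad'' budget) and $\eta_0$ (smaller tightens the per-summand lower bound but shrinks $\kappa_{\eta_0}$), keeping the net drift $3c_*(1-\alpha)/4 - C\alpha$ bounded away from zero while $\kappa_{\eta_0} > 0$.
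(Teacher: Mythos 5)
Your proof is correct and follows essentially the same strategy as the paper's: decompose $\mathbb{E}[\tau^2]=\sum_{t}(2t+1)\,\mathbb{P}(\tau>t)$, restrict to a good event on which the MME (and hence, via the projection structure and $x_*\in\mathcal{X}$, the queries) have stabilized after an initial phase, invoke Proposition~\ref{prop:maximal} for the complement, and lower-bound $Z_t^\epsilon$ linearly on the good event so that $\tau$ is deterministically $O(\log(1/\delta))$ there. The only substantive difference is the length of the initial phase: the paper cuts at $\lceil\sqrt{t}\rceil$, which makes the negative early contribution $o(t)$ automatically at the price of a stretched-exponential bad-event tail $e^{-2\kappa^2\sqrt{t}}$ (still summable against $(2t+1)$), whereas your linear cut $\lceil\alpha t\rceil$ yields a genuinely exponential tail but requires the joint calibration of $\alpha$ and $\eta_0$ that you correctly carry out.
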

\begin{proof}
    Define a function $c:\mathbb{R} \to \mathbb{R}$,
    \begin{equation}
        c(\xi) := \min_{z \in \{z_*+\xi-\epsilon, z_*+\xi + \epsilon\}} \big[\lambda_* |f(z_* + \xi) - f(z)| - \phi(\lambda_*, f(z))\big].
    \end{equation}
    Lemma~\ref{lemma:ts-convergence} shows that $Z_t^\epsilon/t$ converges to $c_* = c(0)$ almost surely, and Condition~\ref{cond:FC-cons-positive} guarantees that $c(0) > 0$.
    Also note that $c(\cdot)$ is continuous due to Condition~\ref{cond:FC-cons-quasi-convex}.
    Therefore, there exists $\xi_0 > 0$ such that
    \begin{equation}
        \min_{\xi \in [-\xi_0, +\xi_0]} c(\xi) > 0.
    \end{equation}
    Note that the value of $\xi_0$ is irrelevant to $\delta$.

We define a good event as
\begin{equation}
	\mathcal{G}_t \coloneqq \left\{ \forall s \in \{ \lceil \sqrt{t} \rceil, \cdots, t\}, \; | \hat{\theta}_s - \theta_* | < \frac{\xi_0}{2} \right\}.
\end{equation}
Then, we decompose $\mathbb{E}[\tau_\delta^2]$ as:
\begin{align}
	\mathbb{E} [\tau_\delta^2] =\; & \sum_{t=0}^{\infty} (2t + 1) \mathbb{P} (\tau_\delta > t)\\
	=\; &  \sum_{t=0}^{\infty} (2t+1) \mathbb{P} \left( \tau_\delta > t , \;\mathcal{G}_t^c \right) + \sum_{t=0}^{\infty} (2t + 1) \mathbb{P} \left( \tau_\delta > t , \; \mathcal{G}_t \right)\\
	\leq\; & \sum_{t=0}^{\infty} (2t+1) \mathbb{P} \left( \mathcal{G}_t^c \right) + \sum_{t=0}^{\infty} (2t + 1) \mathbb{P} \left( \tau_\delta > t , \; \mathcal{G}_t \right).
\end{align}
To obtain the desired result, it is sufficient to show that
\begin{align}
    \underbrace{ \limsup_{\delta \to 0} \frac{ \sum_{t=0}^{\infty} (2t+1) \mathbb{P} \left( \mathcal{G}_t^c \right) }{ \log(1/\delta) } }_{\text{(A)}} < \infty
    , \quad
    \underbrace{ \limsup_{\delta \to 0} \frac{ \sum_{t=0}^{\infty} (2t + 1) \mathbb{P} \left( \tau_\delta > t , \; \mathcal{G}_t \right) }{ \log(1/\delta) } }_{\text{(B)}} < \infty
\end{align}

\paragraph{Step 1. Showing that the term (A) is finite.}
Proposition \ref{prop:maximal} implies that 
\begin{equation}
    \mathbb{P}(\mathcal{G}_t^c)
        \leq \mathbb{P} \left( \exists s \geq \lceil \sqrt{t} \rceil : | \hat{\theta}_s - \theta_* | \geq \frac{\xi_0}{2} \right)
        \leq
        2 \exp \left( - 2 \kappa_{\xi_0/2}^2 \sqrt{t} \right),
\end{equation}
for some constant $\kappa_{\xi_0/2} > 0$.
Therefore,
\begin{equation}
    \sum_{t=0}^{\infty} (2t + 1) \mathbb{P} (\mathcal{G}_t^c)
    \leq
    \sum_{t=0}^{\infty} (2t + 1) \cdot 2 \exp \left( - 2 \kappa_{\xi_0/2}^2 \sqrt{t} \right).
\end{equation}
Note that the right hand side is a finite term, which does not have any dependence on $\delta$.
Therefore, the term (A) is $0$.

\paragraph{Step 2. Showing that the term (B) is finite.}
We begin by establishing a deterministic lower bound on $Z_t^\epsilon$ under event $\mathcal{G}_t$.
Observe that
\begin{align}
    Z_t^{\epsilon} &\coloneqq \min_{\theta \in \{\hat{\theta}_t-\epsilon, \hat{\theta}_t+\epsilon\}} \sum_{s=1}^t \left[ \lambda_* \left| f(\hat{\theta}_t - X_s) - f(\theta - X_s) \right| - \phi( \lambda_*, f(\theta - X_s) ) \right]
    \\&\geq \sum_{s=1}^t \min_{\theta \in \{\hat{\theta}_t-\epsilon, \hat{\theta}_t+\epsilon\}} \left[ \lambda_* \left| f(\hat{\theta}_t - X_s) - f(\theta - X_s) \right| - \phi( \lambda_*, f(\theta - X_s) ) \right]
    \\&= \sum_{s=1}^t c(\hat{\theta}_t-X_s - z_*).
\end{align}
Also note that
\begin{equation}
    \inf_{\xi \in \mathbb{R}} c(\xi) \geq \inf_{z \in \mathbb{R}}\left\{ -\phi(\lambda_*, f(z)) \right\}
    \geq - \max_{p \in [0,1]} \phi(\lambda_*, p).
\end{equation}

On event $\mathcal{G}_t$, \texttt{FIT-Q} algorithm must have chosen the $\frac{\xi_0}{2}$-optimal queries after the moment $\lceil \sqrt{t} \rceil$, and therefore,
\begin{equation}
    |\hat{\theta}_t-X_s - z_*|
    = |(\hat{\theta}_t-X_s) - (\theta_* - x_*)|
    \leq |\hat{\theta}_t - \theta_*| + |X_s - x_*|
    \leq \frac{\xi_0}{2} + \frac{\xi_0}{2}
    = \xi_0,
\end{equation}
for $s=\lceil \sqrt{t} \rceil+1, \ldots, t$.

Define $c_0 := \min_{\xi \in [-\xi_0, +\xi_0]} c(\xi) > 0$ and $\phi_0 := \max_{p \in [0,1]} \phi(\lambda_*, p) < \infty$.
Combining above results, we obtain that, on event $\mathcal{G}_t$,
\begin{align}
    Z_t^{\epsilon} &= \sum_{s=1}^t c(\hat{\theta}_t-X_s - z_*)
    \\&= \sum_{s=1}^{\lceil \sqrt{t} \rceil} c(\hat{\theta}_t-X_s - z_*)
        + \sum_{s=\lceil \sqrt{t} \rceil+1}^{t} c(\hat{\theta}_t-X_s - z_*)
    \\&\geq \sum_{s=1}^{\lceil \sqrt{t} \rceil} -\phi_0
        + \sum_{s=\lceil \sqrt{t} \rceil+1}^{t} c_0
    \\&\geq - (\sqrt{t}+1) \phi_0 + (t-\sqrt{t}-1) c_0.
\end{align}
Let $t_0 := \max\left\{ 10, 64\phi_0^2/c_0^2, \frac{\log(2/\delta)}{c_0/4} \right\}$.
Then, for any $t \geq t_0$,
\begin{align}
    - (\sqrt{t}+1) \phi_0 + (t-\sqrt{t}-1) c_0
    \quad \stackrel{\text{(a)}}{\geq} \quad
    -2\sqrt{t} \cdot \phi_0 + t/2 \cdot c_0
    \quad \stackrel{\text{(b)}}{\geq} \quad t/4 \cdot c_0
    \quad \stackrel{\text{(c)}}{\geq} \quad
    \log \frac{2}{\delta},
\end{align}
where step (a) uses that $\sqrt{t}+1 \leq 2\sqrt{t}$ and $t-\sqrt{t}-1 \geq t/2$ for any $t \geq 10$, step (b) uses that $t/4 \cdot c_0 \geq 2 \sqrt{t} \cdot \phi_0$ for any $t \geq 64\phi_0^2/c_0^2$, and step (c) uses that $t/4 \cdot c_0 \geq \log(2/\delta)$ for any $t \geq \frac{\log(2/\delta)}{c_0/4}$.
And hence
\begin{equation}
    \mathbb{P} \left( Z_t^\epsilon < \log \frac{2}{\delta}, \; \mathcal{G}_t \right) 
    \leq \mathbb{P} \left( - (\sqrt{t}+1) \phi_0 + (t-\sqrt{t}-1) c_0 < \log \frac{2}{\delta}, \; \mathcal{G}_t \right) 
    \leq \mathbb{I}\{ t \leq t_0 \}.
\end{equation}

Finally, we can deduce that
\begin{align}
    \sum_{t=0}^{\infty} (2t + 1) \mathbb{P} \left( \tau > t , \; \mathcal{G}_t \right) 
    \leq\; & \sum_{t=0}^{\infty} (2t + 1) \mathbb{P} \left( Z_t^\epsilon < \log \frac{2}{\delta}, \; \mathcal{G}_t \right)\\ 
    \leq\; &  \sum_{t=0}^{\infty} (2t + 1) \mathbb{I}\{ t \leq t_0 \}\\
    =\; & \sum_{t=0}^{\lfloor t_0 \rfloor} (2t+1)\\
    =\; & (\lfloor t_0 \rfloor +1)^2,
\end{align}
where the first inequality follows from that $\{ \tau > t \} \subseteq \{ Z_t^\epsilon < \log \frac{2}{\delta} \}$.
We conclude that the term (B) is finite by observing that
\begin{equation}
    \limsup_{\delta \to 0} \frac{t_0}{\log(1/\delta)} 
    = \limsup_{\delta \to 0} \frac{\max\left\{ 10, 64\phi_0^2/c_0^2, \frac{\log(2/\delta)}{c_0/4} \right\}}{\log(1/\delta)} 
    = \frac{c_0}{4} < \infty.
\end{equation}

\end{proof}

\begin{proof}[\textbf{Proof of Theorem \ref{thm:FIT-Q-FC-optimality}}]
As done in the proof of Theorem \ref{thm:FIT-Q-FC-consistency}, we consider the sequence of stopping times $\{ \tau_\delta \}_{\delta > 0}$ that are defined on the same probability space and share the same test statistics process $\{ Z_t^\epsilon \}_{t \in \mathbb{N}}$.
This is possible since \texttt{FIT-Q} query rule does not depend on $\delta$.

Define
\begin{equation}
    c_* := \min_{z \in \{z_*-\epsilon, z_* + \epsilon\}} \big[\lambda |f(z) - f(z_*)| - \phi(\lambda, f(z))\big].
\end{equation}
Lemma~\ref{lemma:ts-convergence} shows that $Z_t^\epsilon/t \to c_*$ almost surely as $t \to \infty$.
Fix a random outcome $\omega$ at which $Z_t^\epsilon(\omega)/t$ converges to $c_*$.
For any $\eta \in (0, c_*)$, there exists $T_\eta(\omega) \in \mathbb{N}$ such that $Z_t^\epsilon(\omega)/t \geq c_* - \eta$ for all $t \geq T_\eta(\omega)$, and thus $\tau_\delta(\omega) \leq \max\{ \frac{ \log(2/\delta) }{c_* - \eta }, T_\eta(\omega) \}$.
Since $T_\eta(\omega)$ does not depend on $\delta$, we deduce that
\begin{equation}
    \limsup_{\delta \to 0} \frac{\tau_\delta(\omega)}{\log(1/\delta)}
        \leq \lim_{\delta \to 0} \frac{ \log(2/\delta)/(c_*-\eta) }{ \log(1/\delta) }
        = \frac{1}{c_*-\eta}.
\end{equation}
Since the choice of $\eta$ was arbitrary,
\begin{equation}
    \limsup_{\delta \to 0} \frac{\tau_\delta}{ \log(1/\delta)} \leq \frac{1}{c_*}
    \quad \text{almost surely}.
\end{equation}

By Proposition~\ref{prop:maximal}, the sequence $\{\frac{\tau_\delta}{\log \frac{1}{\delta}} \}_\delta$ is uniformly integrable.
Therefore,
\begin{equation}
    \limsup_{\delta \to 0}\frac{\mathbb{E}_{\theta_*}^{\pi_{\epsilon,\delta}^\texttt{FIT-Q}} \left[ \tau_\delta \right]}{ \log (1/\delta)} 
    \leq \frac{1}{c_*}.
\end{equation}

Now, we show that $c_* = I(x_*; \theta_*) \epsilon^2/2 + o(\epsilon^2)$ by showing that for $z \in \{ z_* - \epsilon, z_* +\epsilon \}$
\begin{equation}
    \underbrace{\lim_{\epsilon\to 0} \frac{\lambda_*(\epsilon) | f(z) - f(z_*) |}{\epsilon^2} }_{(A)}
    = \frac{g_*^2}{v_*}
    , \quad
    \underbrace{ \lim_{\epsilon\to 0} \frac{\phi(\lambda_*(\epsilon), f(z))}{\epsilon^2} }_{(B)}
    = \frac{g_*^2}{2v_*},
\end{equation}
where $g_* = f'(z_*)$, and $v_* = f(z_*) (1-f(z_*))$.
Fix $z = z_* + \epsilon$.
For the term (A), we have
\begin{equation}
    \lim_{\epsilon\to 0} \frac{\lambda_*(\epsilon) | f(z_*+\epsilon) - f(z_*) |}{\epsilon^2} 
    = \lim_{\epsilon\to 0} \frac{\lambda_*(\epsilon)}{\epsilon} \cdot \frac{| f(z_*+\epsilon) - f(z_*) |}{\epsilon}
    = \frac{g_*^2}{v_*},
\end{equation}
where the last equality follows from Condition~\ref{cond:FC-opt-lambda}.
Condition~\ref{cond:FC-opt-phi} implies that there exists a constant $C$ such that 
\begin{equation}
    \left| \phi(\lambda_*(\epsilon), f(z_*+\epsilon)) - \frac{\lambda_*^2(\epsilon)}{2} f(z_*+\epsilon)(1-f(z_*+\epsilon)) \right| 
        \leq C\lambda_*^3(\epsilon),
\end{equation}
for any $\epsilon > 0$.
Note that
\begin{equation}
    \lim_{\epsilon \to 0} \frac{ \frac{\lambda_*^2(\epsilon)}{2} f(z_*+\epsilon)(1-f(z_*+\epsilon)) }{\epsilon^2}
    = \frac{g_*^2}{2v_*},
\end{equation}
and
\begin{equation}
    \lim_{\epsilon \to 0} \frac{\lambda_*^3(\epsilon)}{\epsilon^2} = \lim_{\epsilon \to 0} \epsilon \cdot \left( \frac{\lambda_*(\epsilon)}{\epsilon} \right)^3 = 0,
\end{equation}
by Condition~\ref{cond:FC-opt-lambda}.
Combining these results, we obtain
\begin{equation}
    \lim_{\epsilon\to 0} \frac{\phi(\lambda_*(\epsilon), f(z_*+\epsilon))}{\epsilon^2} = \frac{g_*^2}{2v_*},
\end{equation}
which completes the proof for the term (B).
We can derive the same conclusion for $z=z_*-\epsilon$.
\end{proof}

\subsubsection{Proof of Proposition \ref{prop:verification}}
\begin{proposition}
    Consider $\phi$ and $\lambda_*$ defined in Section~\ref{section:algorithm}:
\begin{equation}
    \phi(\lambda, p) \coloneqq \big( e^\lambda - \lambda - 1 \big) p (1-p),
\end{equation}
and
\begin{equation}
    \lambda_* \coloneqq \argmax_{\lambda \in (0,\overline{\lambda}]} \min_{z \in \{z_*-\epsilon, z_*+\epsilon\} } \left[ \lambda \big| f(z) - f(z_*) \big| - \phi(\lambda, f(z)) \right],
\end{equation}
with $\overline{\lambda} \coloneqq \sup_{\lambda > 0} \{ \lambda \geq e^\lambda - \lambda -1 \} \approx 1.2564$.
    Then, this choice of $\phi$ and $\lambda$ satisfies all conditions in Theorems~\ref{thm:FIT-Q-FC-consistency} and \ref{thm:FIT-Q-FC-optimality}.
\end{proposition}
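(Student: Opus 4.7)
The plan is to verify the three conditions \ref{cond:FC-cons-quasi-convex}--\ref{cond:FC-cons-positive} from Theorem~\ref{thm:FIT-Q-FC-consistency} and the two conditions \ref{cond:FC-opt-phi}--\ref{cond:FC-opt-lambda} from Theorem~\ref{thm:FIT-Q-FC-optimality} one by one, exploiting the explicit quadratic-in-$p$ structure of $\phi(\lambda, p) = (e^\lambda - \lambda - 1) p(1-p)$ together with the defining property of $\overline{\lambda}$.

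Condition~\ref{cond:FC-cons-cgf-dominate} is immediate from Lemma~\ref{lemma:cgf}(3), which already gives $\psi(\pm \lambda_*, p) \leq (e^{\lambda_*} - \lambda_* - 1) p(1-p) = \phi(\lambda_*, p)$. For~\ref{cond:FC-cons-quasi-convex}, write $c \coloneqq e^{\lambda_*} - \lambda_* - 1 > 0$; then $g(p) \coloneqq \lambda_* |p - p_0| - \phi(\lambda_*, p) = \lambda_* |p - p_0| + c(p^2 - p)$ is the sum of a V-shape convex function and an upward-opening parabola, hence continuous and convex (thus quasi-convex). Because $g$ is convex, $p_0$ is a global minimizer iff $g'(p_0^-) \leq 0 \leq g'(p_0^+)$; computing these one-sided derivatives yields the single requirement $\lambda_* \geq c\,|2p_0 - 1|$, and taking the worst case over $p_0 \in [0,1]$ (attained at $p_0 \in \{0,1\}$) gives $\lambda_* \geq c$, i.e., $\lambda_* \geq e^{\lambda_*} - \lambda_* - 1$, which is exactly the defining inequality of $\overline{\lambda}$ and so is enforced by $\lambda_* \in (0,\overline{\lambda}]$. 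For~\ref{cond:FC-cons-positive}, I would simply exhibit some small $\lambda_0 \in (0, \overline{\lambda}]$ at which the inner min is positive; since $e^{\lambda_0} - \lambda_0 - 1 = O(\lambda_0^2)$ while each $|f(z_* \pm \epsilon) - f(z_*)| > 0$ is a fixed positive number (strict monotonicity of $f$), the linear-in-$\lambda_0$ term dominates the quadratic penalty, making the inner min positive at $\lambda_0$. Since $\lambda_*$ is the maximizer over $(0,\overline{\lambda}]$, the value at $\lambda_*$ is at least as large, hence strictly positive.

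Condition~\ref{cond:FC-opt-phi} follows from the Taylor remainder $e^\lambda - \lambda - 1 - \lambda^2/2 = \lambda^3/6 + O(\lambda^4)$: since $p(1-p) \leq 1/4$, one has $\sup_{p \in [0,1]} |\phi(\lambda, p) - \tfrac{\lambda^2}{2} p(1-p)| \leq \tfrac{1}{4}|e^\lambda - \lambda - 1 - \lambda^2/2| = O(\lambda^3) = o(\lambda^2)$. The main obstacle is~\ref{cond:FC-opt-lambda}, which requires a precise asymptotic characterization of the maximizer $\lambda_*(\epsilon)$ as $\epsilon \to 0$. My approach would proceed in three steps. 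First, using Taylor expansions of $f$ about $z_*$, $|f(z_* \pm \epsilon) - f(z_*)| = g_* \epsilon + O(\epsilon^2)$ and $f(z_* \pm \epsilon)(1 - f(z_* \pm \epsilon)) = v_* + O(\epsilon)$, where $g_* \coloneqq f'(z_*)$ and $v_* \coloneqq f(z_*)(1-f(z_*))$; combined with~\ref{cond:FC-opt-phi}, this gives
\begin{equation}
    c(\lambda, \epsilon) \coloneqq \min_{z \in \{z_* - \epsilon, z_* + \epsilon\}} \big[ \lambda |f(z) - f(z_*)| - \phi(\lambda, f(z)) \big] = \lambda g_* \epsilon - \frac{\lambda^2}{2} v_* + R(\lambda, \epsilon),
\end{equation}
with $R(\lambda, \epsilon) = O(\lambda \epsilon^2) + O(\lambda^2 \epsilon) + O(\lambda^3)$. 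Second, the unconstrained maximizer of the leading quadratic $\lambda \mapsto \lambda g_* \epsilon - \lambda^2 v_*/2$ is $\lambda^\dagger(\epsilon) \coloneqq g_* \epsilon / v_*$, which tends to $0$ and therefore lies strictly inside $(0, \overline{\lambda}]$ for all small enough $\epsilon$, so the box constraint is inactive. Third, I would exploit the strict negative curvature $-v_*$ of the leading quadratic around its peak: if a candidate $\lambda$ satisfies $|\lambda - \lambda^\dagger(\epsilon)| \geq C \epsilon$ for some $C>0$, then the leading quadratic evaluated at $\lambda$ is smaller than its peak value by at least $\tfrac{v_*}{2} C^2 \epsilon^2$, which dominates the $o(\epsilon^2)$ remainder $R$ on any bounded-$\lambda/\epsilon$ neighborhood and contradicts the optimality of $\lambda_*(\epsilon)$. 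This forces $|\lambda_*(\epsilon) - \lambda^\dagger(\epsilon)| = o(\epsilon)$, giving $\lambda_*(\epsilon) = \epsilon \cdot f'(z_*)/(f(z_*)(1-f(z_*))) + o(\epsilon)$, as required.
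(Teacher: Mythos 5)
Your verification of conditions \ref{cond:FC-cons-quasi-convex}, \ref{cond:FC-cons-cgf-dominate}, \ref{cond:FC-cons-positive}, and \ref{cond:FC-opt-phi} is correct and essentially mirrors the paper's: same use of Lemma~\ref{lemma:cgf}, same convexity-plus-derivative-sign argument reducing \ref{cond:FC-cons-quasi-convex} to $\lambda_* \geq e^{\lambda_*}-\lambda_*-1$ (the paper phrases it as a global sign condition on $h'$ rather than one-sided derivatives at $p_0$, but it is the same computation), same ``linear term beats quadratic penalty for small $\lambda$'' argument for positivity, and the same exact evaluation $\sup_p|\phi - \tfrac{\lambda^2}{2}p(1-p)| = \tfrac14|e^\lambda-\lambda-1-\tfrac{\lambda^2}{2}|$ for \ref{cond:FC-opt-phi}. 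Where you genuinely diverge is condition \ref{cond:FC-opt-lambda}. The paper exploits the specific form of $\phi$ to write the \emph{per-branch} unconstrained maximizers in closed form, $\lambda_\pm(\epsilon) = \log\bigl(1+ |f(z_*\pm\epsilon)-f(z_*)|/\{f(z_*\pm\epsilon)(1-f(z_*\pm\epsilon))\}\bigr)$, shows each satisfies $\lambda_\pm(\epsilon)/\epsilon \to g_*/v_*$, and then sandwiches $\lambda_*$ between $\min\{\lambda_+,\lambda_-\}$ and $\max\{\lambda_+,\lambda_-\}$ using concavity of the two branches and of their pointwise minimum. You instead Taylor-expand the whole objective to $\lambda g_*\epsilon - \tfrac{\lambda^2}{2}v_* + R$ and use the strict negative curvature of the leading quadratic to localize the maximizer within $o(\epsilon)$ of $g_*\epsilon/v_*$. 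Your route is more robust (it does not need the closed-form maximizers, so it would survive a different choice of $\phi$ satisfying \ref{cond:FC-opt-phi}), while the paper's is sharper and avoids remainder bookkeeping. The one step you should make explicit is that the argmax actually lies in a bounded-$\lambda/\epsilon$ region: your curvature comparison only rules out candidates with $|\lambda-\lambda^\dagger|\geq C\epsilon$ \emph{inside} such a region. This is easy to supply --- for the objective to be nonnegative one needs $\lambda(g_*\epsilon + O(\epsilon^2)) \geq (e^\lambda-\lambda-1)(v_*+O(\epsilon)) \geq \tfrac{\lambda^2}{2}(v_*+O(\epsilon))$, forcing $\lambda = O(\epsilon)$, and by \ref{cond:FC-cons-positive} the optimal value is positive --- but as written it is an implicit assumption rather than a proved fact.
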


\begin{proof}
We aim to verify Conditions \ref{cond:FC-cons-quasi-convex}--\ref{cond:FC-cons-positive} of Theorem~\ref{thm:FIT-Q-FC-consistency} and Conditions~\ref{cond:FC-opt-phi}--\ref{cond:FC-opt-lambda} of Theorem~\ref{thm:FIT-Q-FC-optimality}.

\paragraph{Verification of Condition \ref{cond:FC-cons-quasi-convex}.}
Define
\begin{align}
    h(p) :=\; & \lambda_* | p - p_0 | - \phi(\lambda_*, p)\\
    =\; &
    \begin{cases}
        \lambda_* ( p - p_0 )- \left(e^{\lambda_*} - \lambda_* - 1\right) p (1-p),& \text{if } p \geq p_0,\\
        -\lambda_* ( p - p_0 )- \left(e^{\lambda_*} - \lambda_* - 1\right) p (1-p), & \text{if } p< p_0.
    \end{cases}
\end{align}
Note that $|p-p_0|$ and $\phi(\lambda_*,p)$ are continuous and convex in $p$. Therefore, $h$ is also continuous and convex in $p$.
Also observe that
\begin{equation}
    h'(p) =
    \begin{cases}
        \lambda_* - \left(e^{\lambda_*} - \lambda_* - 1 \right) (1 - 2p), & \text{if } p > p_0, \\
        -\lambda_* - \left(e^{\lambda_*} - \lambda_* - 1 \right) (1 - 2p), & \text{if } p < p_0.
    \end{cases}
\end{equation}
Since $|1-2p| \leq 1$ and $|e^{\lambda} - \lambda - 1| \leq \lambda$ for any $\lambda \leq \bar{\lambda}$, we have $h'(p) \geq 0$ if $p > p_0$ and $h'(p) \leq 0$ if $p < p_0$.
Therefore, $h(p)$ is minimized at $p=p_0$.

\paragraph{Verification of Condition \ref{cond:FC-cons-cgf-dominate}.}
It is shown in Lemma \ref{lemma:cgf}.

\paragraph{Verification of Condition \ref{cond:FC-cons-positive}.}
We first consider a function $\rho(\lambda) = a \lambda  - b \left( e^\lambda - \lambda - 1\right)$, where $a, b >0$, then $\rho(0) = 0$, $\lim_{\lambda \to \infty} \rho(\lambda) = -\infty$, $\rho'(0)= a > 0$ and $\rho''(\lambda) = -b e^\lambda <0$.
Therefore, it is concave and it has positive value when $\lambda$ is between 0 and some positive
threshold (see the dashed curves in Figure \ref{fig:lambda}).

Now, we consider two functions $\lambda \mapsto \lambda | f(z_*+\epsilon) - f(z_*)| - \phi(\lambda, f(z_* + \epsilon))$ and $\lambda \mapsto \lambda | f(z_*-\epsilon) - f(z_*)| - \phi(\lambda, f(z_* - \epsilon))$, then by the above argument, there is an interval $(0, \lambda_0) \subseteq (0, \overline{\lambda})$ on which two functions are positive at the same time.
Therefore, the maximum value of $\min_{z \in \{ z_* +\epsilon, z_* -  \epsilon\}} \left[ \lambda \left| f(z) - f(z_*) \right| - \phi(\lambda, f(z)) \right]$ is positive as
\begin{align}
    \max_{\lambda \in(0, \overline{\lambda}]}\min_{z \in \{ z_* +\epsilon, z_* -  \epsilon\}} \left[ \lambda \left| f(z) - f(z_*) \right| - \phi(\lambda, f(z)) \right] \geq\; & \max_{\lambda \in(0, \lambda_0)}\min_{z \in \{ z_* +\epsilon, z_* -  \epsilon\}} \left[ \lambda \left| f(z) - f(z_*) \right| - \phi(\lambda, f(z)) \right]\\
    >\; & \max_{\lambda \in (0 ,\lambda_0)} 0 = 0.
\end{align}

\paragraph{Verification of Condition \ref{cond:FC-opt-phi}.}
Note that 
\begin{align}
    \sup_{p \in [0,1]} \left| \phi(\lambda, p) - \frac{\lambda^2}{2} p(1-p) \right| =\; & \sup_{p \in [0,1]} \left| \left(e^\lambda - \lambda - 1 \right) p(1-p) - \frac{\lambda^2}{2} p(1-p) \right|\\
    =\; & \frac{1}{4} \left| \left(e^\lambda - \lambda - 1 \right) - \frac{\lambda^2}{2} \right|,
\end{align}
and this value is $o(\lambda^2)$ by Taylor's theorem.

\paragraph{Verification of Condition \ref{cond:FC-opt-lambda}.}
Define two thresholds $\lambda_+$ and $\lambda_-$ as 
\begin{equation}
    \lambda_+(\epsilon) \coloneqq \argmax_{\lambda \in \mathbb{R}^+} \left[ \lambda \big| f(z_* + \epsilon) - f(z_*) \big| - \phi(\lambda, f(z_* + \epsilon)) \right]
\end{equation}
and
\begin{equation}
    \lambda_-(\epsilon) \coloneqq \argmax_{\lambda \in \mathbb{R}^+} \left[ \lambda \big| f(z_* - \epsilon) - f(z_*) \big| - \phi(\lambda, f(z_* - \epsilon)) \right],
\end{equation}
then they can be calculated in closed form as
\begin{equation}
    \lambda_+(\epsilon) = \log \left( 1+ \frac{|f(z_*+\epsilon) - f(z_*)|}{f(z_*+\epsilon)\big(1 - f(z_* +\epsilon)\big)} \right)\quad\text{and}\quad
    \lambda_-(\epsilon) = \log \left( 1+ \frac{|f(z_*-\epsilon) - f(z_*)|}{f(z_*-\epsilon)\big(1 - f(z_* -\epsilon)\big)} \right).
\end{equation}
Note that $\lambda_+(\epsilon)$ and $\lambda_-(\epsilon)$ are approximated by $\epsilon \times f'(z_*) / (f(z_*)(1-f(z_*)))$ with error terms of order $o(\epsilon)$ since we have
\begin{align}
    \lim_{\epsilon \to 0} \frac{\lambda_+(\epsilon)}{\epsilon} =\; & \lim_{\epsilon \to 0} \frac{\log \left( 1+ \frac{|f(z_*+\epsilon) - f(z_*)|}{f(z_*+\epsilon)\big(1 - f(z_* +\epsilon)\big)} \right)}{\epsilon}\\
    =\; & \lim_{\epsilon \to 0}\frac{\log \left( 1+ \frac{|f(z_*+\epsilon) - f(z_*)|}{f(z_*+\epsilon)\big(1 - f(z_* +\epsilon)\big)} \right)}{\frac{|f(z_*+\epsilon) - f(z_*)|}{f(z_*+\epsilon)\big(1 - f(z_* +\epsilon)\big)}}\frac{\frac{|f(z_*+\epsilon) - f(z_*)|}{f(z_*+\epsilon)\big(1 - f(z_* +\epsilon)\big)}}{\epsilon}\\
    =\; & \frac{f'(z_*)}{f(z_*) \big( 1 - f(z_*) \big)}.
\end{align}
Consider two functions $h_+(\lambda) = \lambda | f(z_*+\epsilon) - f(z_*) | - \phi(\lambda, f(z_*+\epsilon))$ and $h_-(\lambda) = \lambda | f(z_*+\epsilon) - f(z_*) | - \phi(\lambda, f(z_*+\epsilon))$, then they are concave, and their maximizers are $\lambda_+(\epsilon)$ and $\lambda_-(\epsilon)$, respectively.
Since two functions are concave, their minimum
\begin{equation}
    h(\lambda) = \min_{z \in \{z_*+\epsilon, z_* - \epsilon\}} \lambda | f(z) - f(z_*) | - \phi(\lambda, f(z))
\end{equation}
is also concave.
Moreover, $h$ increases in $\lambda < \min \{ \lambda_+(\epsilon), \lambda_-(\epsilon) \}$ since both of $h_+$ and $h_-$ are decreasing there; and similarly, $h$ decreases in $\lambda > \max\{ \lambda_+(\epsilon), \lambda_-(\epsilon) \}$.
These imply that the maximizer of $h$, $\lambda_*$, exists and $\min\{\lambda_+(\epsilon), \lambda_-(\epsilon)\} \leq \lambda_* \leq \max\{\lambda_+(\epsilon), \lambda_-(\epsilon)\}$.
Finally, we can conclude that the second condition is satisfied when $\epsilon$ is sufficiently small.

\end{proof}

\newpage
\section{Comparison with\cite{bassamboo2023learning}}\label{appendix:bassamboo}

Our work shares a common foundation with the research of \cite{bassamboo2023learning}, as both studies address the adaptive testing problem by drawing insights from the multi-armed bandit framework.
The similarities are significant: both aim to estimate an unknown continuous ability parameter by asking a sequence of questions, where the choice of the next question (or ``arm'') is dynamically adapted based on the candidate's previous responses.
Furthermore, both models rely on a probabilistic response function to link the candidate's ability and the question's difficulty to the binary outcome.
The overarching goal in both papers is to achieve this estimation with the fewest questions possible while satisfying a predefined statistical guarantee, often referred to as a $\delta$-correct framework.

\paragraph{Difference in objectives \& test statistic design.}
Despite these parallels, a fundamental difference in the objective distinguishes our work.
\cite{bassamboo2023learning} focus on interval ``classification'': their goal is to determine which of several predefined and fixed intervals (or grades) contains the candidate's ability.
In contrast, our paper tackles the problem of continuous ``point estimation'': we aim to identify the specific value of the ability parameter within a given error margin $\epsilon$, such that the target interval $[\theta_* - \epsilon, \theta_* + \epsilon]$ is centered on the unknown true parameter itself, not relying on a fixed partitioning.

This fundamental difference in objectives naturally leads to distinct methodological approaches, particularly in the design of their respective test statistics.
The test statistic in \cite{bassamboo2023learning} is designed to confirm that the true parameter lies within a fixed interval.
This is achieved by computing a generalized likelihood ratio that compares the empirical mean of responses for each question difficulty (or ``arm") to the predicted mean under an alternative hypothesis outside the target interval.
In contrast, our test statistic is designed to certify that the true parameter lies within a dynamic, $\epsilon$-neighborhood of our current estimate.
It evaluates the cumulative evidence against alternatives outside this neighborhood by leveraging the global structure of the response model, comparing predicted outcomes under the current estimate against those under an alternative, rather than relying on per-arm empirical means.
This design allows our statistic to efficiently exploit the relationship between questions and, due to its quasi-convex property, simplifies the computationally demanding search for the most challenging alternative to a simple comparison at the boundary points $\hat{\theta}_t \pm \epsilon$.

\paragraph{Adapting \cite{bassamboo2023learning} via a discretization trick: experiment \& analysis.}
To investigate the implications of these methodological differences, we adapt their suggested algorithm, \texttt{A1}, to our continuous point-estimation setting using a discretization trick.
This involves partitioning the continuous parameter space into a series of disjoint subintervals, each of width $2 \epsilon$.
By treating these subintervals as the ``grades'' to be classified, we transform our problem into an interval classification task that \texttt{A1} is designed to solve.
When the algorithm terminates and identifies a target interval, we take its midpoint as the final estimate $\hat{\theta}_\tau$.
This construction ensures that the $(\epsilon, \delta)$-correctness guarantee of our problem is met.
However, while this adaptation makes \texttt{A1} applicable, the following analysis reveals that it suffers from critical performance drawbacks, most notably a severe sensitivity to the true parameter's location relative to the discretization boundaries.

To demonstrate this sensitivity empirically, we conduct a numerical experiment under the logistic response model.
We set the error margin to $\epsilon=0.4$ and the confidence level to $\delta=0.05$.
The parameter space $\Theta = [-2,2]$ is partitioned into five subintervals of width $0.8$.
Accordingly, the discrete query set for \texttt{A1} is defined as the midpoints of these intervals: $\{-1.6,-0.8,0,0.8,1.6\}$. 
We then vary the true parameter, $\theta_*$, within one of these subintervals, selecting values from $\{-0.3, -0.2, -0.1, 0, 0.1, 0.2, 0.3\}$.
For each case, we compare the mean stopping time ($\mathbb{E}[\tau]$) of the adapted \texttt{A1} algorithm against that of \texttt{FIT-Q}.

\begin{figure}[h!]
    \centering
    \includegraphics[width=0.6\linewidth]{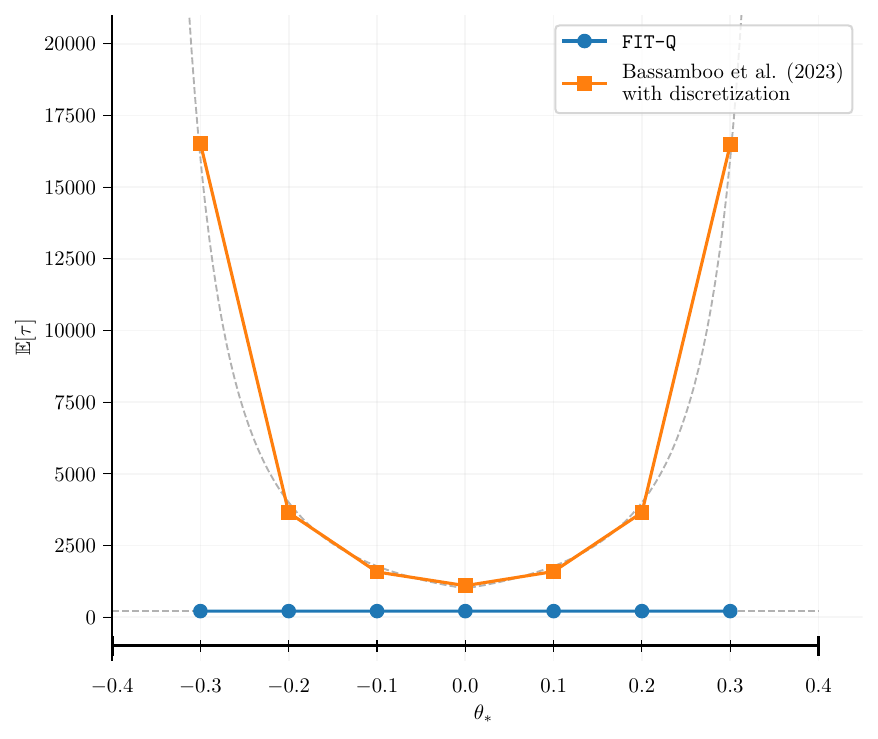}
    \caption{Comparison of average stopping times for \texttt{FIT-Q}, and the \texttt{A1} algorithm suggested in \cite{bassamboo2023learning} adapted to our setting via discretization.
    The plot measures the mean stopping time as a function of the true parameter $\theta_*$.
    The x-axis covers the range $[-0.4, 0.4)$, which is one of the subintervals created by partitioning the full parameter space $\Theta = [-2,2]$.}
    \label{fig:compare}
\end{figure}

The results, visualized in Figure~\ref{fig:compare}, reveal a stark difference in performance.
The \texttt{FIT-Q} algorithm exhibits remarkable efficiency, recording a consistently low mean stopping time of approximately $210$, regardless of the true parameter's value.
In contrast, the adapted \texttt{A1} algorithm is significantly less efficient.
Even at its best when $\theta_*$ is at the center of the interval, it requires an average of 1,100 samples, over five times more than \texttt{FIT-Q}.
This inefficiency becomes extreme as $\theta_*$ approaches the interval boundaries; at $\theta_* = \pm 0.3$, \texttt{A1}'s mean stopping time balloons to approximately 16,500 samples.
These findings empirically confirm that the discretization-based approach is highly sensitive to the parameter's location, a weakness not present in our natively continuous \texttt{FIT-Q} algorithm.

To understand the root cause of \texttt{A1}'s inefficiency, we now analytically examine its core components.
The performance degradation can be attributed to two primary factors: (1) the slow growth of its test statistic near interval boundaries, and (2) an overly conservative stopping threshold.

First, let us consider the test statistic suggested by \cite{bassamboo2023learning}, which for a discretized query set $\{x_1, \ldots, x_m\}$ is defined as\footnote{The expression is modified to be consistent with our notation.}
\begin{equation}
    Z_t^\texttt{A1} := \inf_{\theta \not\in \hat{I}_t}\sum_{k=1}^{m} \left\{ N_{k,t} \cdot d\left( \left. \frac{S_{k,t}}{N_{k,t}} \right| f(\theta - x_k) \right)\right\},
\end{equation}
where $\hat{I}_t$ is the interval containing the current estimate $\hat{\theta}_t$, $N_{k,t}$ is the number of times query $x_k$ has been selected up to time $t$, and $S_{k,t}$ is the number of positive responses for that query.
The purpose of this statistic is to accumulate evidence until the true interval $I_*$ can be confidently identified.
Consequently, its asymptotic growth rate that determines the overall sample complexity is governed by the KL divergence to the most challenging alternative hypothesis, which is the nearest boundary of $I_*$ itself.
If the query set is discritized sufficiently dense to contain the optimal query $x_*$, the limiting value can be approximated by $\min_{\theta \in \{\inf(I_*), \sup(I_*)\}} d( f(\theta_*-x_*) | f(\theta-x_*) )$.
Let us denote the relative distance of the true parameter $\theta_*$ from this nearest boundary as $\alpha \epsilon$, where $\alpha =1$ at the center and $\alpha \to 0$ as it approaches a boundary.
For sufficiently small $\epsilon$, this rate can be approximated by $\frac{I(x_*; \theta_*)}{2} \cdot (\alpha \epsilon)^2$.
Consequently, the required sample complexity to reach a fixed threshold inflates by a factor of $\alpha^{-2}$ as the true parameter nears a boundary, explaining the dramatic U-shaped curve in Figure~\ref{fig:compare}.

Second, this boundary-sensitivity issue is compounded by a conservative stopping rule. The algorithm terminates at
\begin{equation}
    \tau^\texttt{A1} := \inf\left\{ t \in \mathbb{N} \left| Z_t^\texttt{A1} \geq 3 \sum_{k=1}^{m} \log ( 1 + \log N_{k,t}) + m \mathcal{T}(\log(2/\delta)/m) \right. \right\},
\end{equation}
where $\mathcal{T}(x) = x + o(x)$.
The critical component here is the correction term, $3 \sum_{k=1}^{m} \log ( 1 + \log N_{k,t})$.
This term is necessary because the statistic $Z_t^\texttt{A1}$ relies on per-arm empirical means (i.e., $S_{k,t}/N_{k,t}$), 
considering each arm to be independent despite their interdependency mediated by the response model $f$.
While asymptotically negligible, this term is substantial in finite-sample settings. Its magnitude scales with $m$, the number of arms.
Since a small error margin $\epsilon$ requires a fine-grained discretization ($m=\Theta(\epsilon^{-1})$), the correction term forces the threshold to be excessively high.

In contrast, \texttt{FIT-Q}'s test statistic is built upon a single, global point-estimate $\hat{\theta}_t$.
This design choice obviates the need for a per-arm correction term, allowing for a much tighter stopping threshold of simply $\log(2/\delta)$.
This fundamental difference explains why \texttt{A1} is significantly less efficient than \texttt{FIT-Q}, even under the ideal condition of $\alpha=1$.

\paragraph{Computational efficiency.}
Beyond its statistical inefficiency, the adapted \texttt{A1} algorithm also poses computational challenges not present in \texttt{FIT-Q}.
At each step, \texttt{A1}'s query selection rule requires solving a max-min optimization problem to determine the next question to ask.
While this can be simplified under strong regularity conditions, it remains a considerable burden compared to \texttt{FIT-Q}'s query rule, which uses a simple and direct projection given in \eqref{eq:projection}.
Furthermore, a similar issue arises in the stopping rule. Calculating \texttt{A1}'s test statistic, $Z_t^\texttt{A1}$, requires finding an infimum over a continuous space of all alternative parameters outside the current target interval, a computationally demanding task.
In contrast, our test statistic was designed with a quasi-convex property.
This principled design allows the search for the most challenging alternative to be reduced to a simple evaluation at the two boundary points, $\hat{\theta}_t \pm \epsilon$.
These differences in both query selection and stopping condition make \texttt{FIT-Q} not only more sample-efficient but also substantially more computationally tractable and practical for real-time implementation.

\end{document}